\def\1{\mathbbm{1}}
\def\d{{\rm d}}
\def\R{\mathbb{R}}
\def\S{\mathcal{S}}
\def\N{\mathcal{N}}
\def\x{\mathbf{x}}
\def\z{\mathbf{z}}
\def\y{\mathbf{y}}
\def\x{\boldsymbol{x}}
\def\y{\boldsymbol{y}}
\def\z{\boldsymbol{z}}
\def\M{{\boldsymbol{M}}}
\def\U{{\boldsymbol{U}}}
\def\D{{\boldsymbol{D}}}
\def\A{{\boldsymbol{A}}}
\def\X{{\boldsymbol{X}}}
\def\0{{\boldsymbol{0}}}
\def\eps{\varepsilon}
\def\blambda{{\boldsymbol{\lambda}}}
\def\Md{{\mathcal{M}_d}}
\DeclareMathOperator*{\expt}{\mathbb{E}}
\newtheorem{observation}{Observation}
\begin{document}

\title{Linear Distance Metric Learning with Noisy Labels}

\author{\name Meysam Alishahi \email alishahi@cs.utah.edu \\
       \addr Kahlert School of Computing\\ 
       University of Utah\\
       Salt Lake City, UT 84112, USA
       \AND
       \name Anna Little \email little@math.utah.edu \\
       \addr Department of Mathematics,  Utah Center For Data Science\\
       University of Utah\\
       Salt Lake City, UT 84112, USA
       \AND
       \name Jeff M. Phillips \email jeffp@cs.utah.edu \\
       \addr Kahlert School of Computing, Utah Center for Data Science\\ 
       University of Utah\\
       Salt Lake City, UT 84112, USA}

\editor{Aryeh Kontorovich}

\maketitle

\begin{abstract}%
In linear distance metric learning, we are given data in one Euclidean metric space and the goal is to find an appropriate linear map to another Euclidean metric space which respects certain distance conditions as much as possible. In this paper, we formalize a simple and elegant method which reduces to a general continuous convex loss optimization problem, and for different noise models we derive the corresponding loss functions. 
We show that even if the data is noisy, the ground truth linear metric can be learned with any precision provided access to enough samples, and we provide a corresponding sample complexity bound. 
Moreover, we present an effective way to truncate the learned model to a low-rank model that can provably maintain the accuracy in the loss function and in parameters -- the first such results of this type.  Several experimental observations on synthetic and real data sets support and inform our theoretical results.  
\end{abstract}

\begin{keywords}%
 linear metric learning, Mahalanobis distance, positive semi-definite matrix, low-rank metric learning
\end{keywords}

\section{Introduction}\label{intro}
The goal of distance metric learning is to map data in a metric space into another metric space in such a way that the distance between points in the second space optimizes some condition on the data.  
Early work in this area focuses mostly on the Euclidean to Euclidean setting, and specifically on the case of learning linear transformations.  For data $\X \in \R^{n \times d}$, it attempts to learn a Mahalanobis distance $\d_\M : \R^d \times \R^d \to \R^{\geq 0}$ as 
 $\d_\M(\x, \y) = \|\x-\y\|_\M = \sqrt{(\x-\y)^t \M (\x-\y)}$. 
This is a metric on the original space $\R^d$ as long as $\M$ is positive definite.  We can decompose $\M$ as $\M = \A\A^t$ for $\A \in \R^{d \times d}$.  Then $\A$ can be used as a  linear map so that $\X' = \A^t \X$ is another point set of $n$ points in $d$ dimensions; for $\x' = \A^t\x$ and $\y' = \A^t\y$ in the new space, $\|\x - \y\|_\M$ is equivalent to the standard Euclidean distance $\|\x' - \y'\|$.  

Linear metric learning has been studied in \citet{JMLR:v10:weinberger09a} for kNN classification, in \cite{NIPS2006_dc6a7e65,4270149,XIANG20083600} via margin/distance optimization, in  \cite{JMLR:v8:sugiyama07b, barhillel05a} via discriminant analysis, and in \cite{NGUYEN2017215} via Jeffrey divergence. 
Many of these linear methods also propose kernelized versions, and kernelized metric learning was also considered in \cite{788121,roth1999nonlinear}. 
But the current state of the art uses arbitrarily complex neural encoders that attempt to optimize the final objective with very little restriction on the form or structure of the mapping ~\citep{9880306}.  Merging with the area of feature engineering~\citep{nargesian2017learning,shi2009hash}, these approaches are an integral element of information retrieval~\citep{Patel2021RecallkSL, ramzi:hal-03712933}, natural language processing~\citep{6083923, li2017deep}, and image processing~\citep{1467314}. To access further details, readers can refer to two well-conducted surveys~\cite{BelletHS13,MAL-019}.

In this work, we revisit \emph{linear} distance metric learning.  We posit that there are two useful extremes in this problem, the anything-goes non-linear approaches mentioned above, and the very restrictive linear approaches.  The linear approaches exhibit a number of important properties which are essential for certain applications:
\begin{itemize}
\item  When the original coordinates of the data points have meaning, but for instance are measured in different units (e.g., inches and pounds), then one may want to retain that meaning and interpretability while making the process invariant to the original underlying (and often arbitrary) choice of units.  
\item  Many geometric properties such as linear separability, convexity, straight-line connectivity, vector translation (linear parallel transport) are preserved under affine transformations.  If such features are assumed to be meaningful on the original data, then they are retained under a linear transformation.  
\item  Some physical equations, such as those describing ordinary differential equations (ODEs) can be simulated through a linear transform~\citep{sutherland2009combustion}.  We will demonstrate an example application of this (in Section \ref{sec:exp-real}) where because of changing units, it is not clear how to measure distance in the original space, and locality based learning can be more effectively employed after a linear transformation.  
\end{itemize}

While several prior works have already explored linear distance metric learning~\citep{JMLR:v10:weinberger09a,NIPS2006_dc6a7e65,JMLR:v8:sugiyama07b,NGUYEN2017215}, they often reduce to novel optimization settings where specially designed solvers and analysis are required.  For instance, \cite{ying2012distance} utilize a clever subgradient descent formulation to ensure the learned $\M$ retains its positive-definiteness.  
In our work we provide a simple and natural formulation that converts the linear distance metric learning task into a simple supervised convex gradient descent procedure, basically a standard supervised classification task where any procedure for smooth convex optimization can be employed.    

\subsection{Formulation} 
Specifically, we assume $N$ i.i.d. observations $(\x_i, \y_i) \in \R^d \times \R^d$ and each pair is given a label $\ell_i \in \{{\rm Far, Close}\}$.  Our goal is to learn a positive semi-definite (p.s.d.) matrix $\M$ and threshold $\tau \geq 0$ so that $\|\x_i - \y_i\|_\M^2 \geq \tau$ if $\ell_i = {\rm Far}$ and $\|\x_i - \y_i\|_\M^2 < \tau$ if $\ell_i = {\rm Close}$. 
Towards solving this we formulate an optimization problem
\begin{equation}
\label{genopt}
\min_{\begin{array}{cc}
\tau&\geq 0\\
\M &\succeq 0
\end{array}}
R_N(\M,\tau) = 
\min_{\begin{array}{cc}
\tau&\geq 0\\
\M &\succeq 0
\end{array}} \frac{1}{N} \sum_{i=1}^N L(\x_i,\y_i,\ell_i; \M, \tau)
\end{equation}
where $L(\x_i,\y_i,\ell_i; \M, \tau)$ is a loss function that penalizes the mismatch between the observed label and the model-predicted label.   Then we propose to optimize this in an (almost, except for $\tau \geq 0$) unconstrained setting, where we can apply standard techniques like (stochastic) gradient descent: 
\[
\min_{\begin{array}{l}
\tau \in \mathbb{R} \\
\A \in \mathbb{R}^{d \times d}
\end{array}}
R_N(\A \A^t,\tau)
\]

\subsection{Our core results}
We analyze this simple, flexible, and powerful formulation and show that:
\begin{itemize}
\item This optimization problem is convex over $\M, \tau$.  Moreover, while optimizing over $\A$ is not convex, we can leverage an observation of \citet{doi:10.1137/080731359} to show that the minimizer $\A^*$ over the unconstrained formulation generates $\M^* = \A^* (\A^*)^t$, which is the minimizer over the convex, but (positive semi-definite) constrained formulation over $\M$.  
\item The sample complexity of this problem is $N_d(\eps,\delta) = O(\frac{1}{\eps^2}(\log \frac{1}{\delta} + d^2 \log\frac{d}{\eps}))$. More specifically, let $f$ be the pdf of the distribution from which difference pairs $\x - \y$ are drawn, let $\ell$ be the (noisy) label associated with a pair, and let $R(\M,\tau) = \mathbf{E}_{\x-\y \sim f}[L(\x,\y,\ell; \M, \tau)]$ be the expected loss.  
Then, given $N_d(\eps,\delta)$ observations, $|R(\hat{\M},\hat{\tau}) - R(\M^*,\tau^*)|\leq \eps$ with probability at least $1-\delta$, where $(\hat{\M},\hat{\tau})$ is the minimizer of $R_N$. 
\item If the labels $\ell_i$ 
are observed with unbiased noise, and the loss function is chosen appropriately to match that noise distribution, then $R_N$ still approximates $R$, and in fact, the minimizers $\hat \M, \hat \tau$ of $R_N$  converge to the true minimizers $\M^*,\tau^*$ of $R$.  
\item Returning a low-rank approximation $\hat \M_k$ to $\hat \M$ can achieve bounded error with respect to $|R(\hat \M_k,\hat \tau) - R(\M^*, \tau^*)|$ and $\| \hat \M  - \M^*\|+|\hat \tau - \tau^*|$, as elaborated on just below.  To the best of our knowledge, this is the first such dimensionality reduction result of this kind.  
\end{itemize}

\subsection{Reasonable Choice for the Loss Function $L$: Logistic noise} 
We assume the labeling of \{\rm{Close, Far}\} through the evaluation of $\|\x_i-\y_i\|_\M^2$ is noisy.  
We will prove that if the noise comes from the Logistic distribution, then 
$$L(\x_i,\y_i,\ell_i; \M, \tau) = -\log\sigma(\ell_i(\|\x_i-\y_i\|_\M^2- \tau))$$
serves as an excellent theoretical choice; here $\sigma(x) = \frac{1}{1+e^{-x}}$ is known as the {\it Logistic} function.
We note that prior work~\citep{5459197} also considered this special case of our formulation, and provided an empirical study on face identification;  
however they did not theoretically analyze this formulation.  

As mentioned, our work will show that this form of $L$ is indeed optimal under a Logistic noise model.  We also show that if one assumes a different noise model (e.g., Gaussian), then a different loss function would be more appropriate. 
Furthermore, we show that irrespective to the amount of unbiased noise, we are able to recover the ground truth parameters if we observe enough noisy data, and we provide precise sample complexity bounds.  
Section \ref{ExperimentalResults} also confirms these theoretical results with careful experimental observations and demonstrates that the method is in fact robust to misspecification of the noise model.

\subsection{Dimensionality Reduction}
It is natural to ask if linear distance metric learning approaches can be used for linear dimensionality reduction.  That is, if one restricts to a rank-$k$ positive semi-definite $\M_k$, then we can write $\M_k = \A_k \A_k^t$ where $\A_k \in \R^{d \times k}$.  Hence $\A_k$ can be used as a linear map $x' = \A_k^t x$ from $\R^d \to \R^k$.  
Yet optimizing $R_N(\A_k \A_k^t, \tau)$ with $\A_k \in \R^{d \times k}$ is not only non-convex, the optimization has non-optimal local minima~\citep{doi:10.1137/080731359}.  

Another natural approach is to run the optimization with a full rank $\A$, and then truncate $\A$ by rounding down its smallest $d-k$ singular values to $0$.  As far as we know, no previous analysis of a distance metric learning (DML) approach has shown if this is effective; a direction (singular vector of $\A$) associated with a small singular value of $\A$ could potentially have out-sized relevance towards cost function $R_N$ that we seek to optimize, and this step may induce uncontrolled error in $R_N$.  

In this paper, we detail some reasonable assumptions on the data necessary for this singular value rounding scheme to have provable guarantees.  The key assumption is that the width of the support of the data distribution is bounded by $\sqrt{F}$ (for some parameter $F$), 
and either this support must include measure in regions which assign labels of both Close and Far, or the unbiased noise must be large enough to generate some of each label.  

Specifically we consider the following algorithm.  
\\ 1) Sample $N = N_d(\eps,\delta)$ pairs $\x, \y$ from a Lebesgue measurable distribution with the width of support at most $\sqrt{F}$ in each direction.  
\\ 2) Solve for $\hat{\A} \in \R^{d \times d}$ and $\hat{\tau} \geq 0$ in $R_N(\A \A^t, \tau)$ using any convex gradient descent solver. 
\\ 3) For positive integer $k \leq d$, set the $d-k$ smallest singular values of $\hat{\A}$ to $0$, resulting in low-rank matrix $\hat{\A}_k$.  Let $\gamma$ be the value of the $(d-k)$-th singular value of $\hat{\A}$ (the largest one rounded to $0$). 
\\ 4) Return $\hat{\M}_k = \hat{\A}_k \hat{\A}_k^t$, a rank-$k$ positive semi-definite matrix.  

For this algorithm (formalized in Theorems \ref{thm:DR-main1} and \ref{thm:DR-main2}), we claim with probability at least $1-\delta$ 
\[
| R(\hat{\M}_k, \hat \tau) - R(\M^*, \tau^*) | \leq \eps + F \gamma^2.
\]
Thus, for instance if we draw $\x-\y$ from a unit ball (so $F = 1$), and set $\eps' = 2 \eps$, and assume that $\M^*$ has $d-k$ eigenvalues less than $\eps'/4$, then with probability at least $1-\delta$ 
\[
| R(\hat{\M}_k, \hat \tau) - R(\M^*, \tau^*) | \leq \eps'.
\]


\subsection{Outline}
In Section \ref{sec:model} we more carefully unroll this model formulation, and in Section \ref{sec:conv} we show sample complexity and convergence results, including under noise.  
In Section \ref{sec:alg} we discuss the optimization procedure and show that the unconstrained optimization approach is provably effective, and useful for dimensionality reduction tasks.  
Finally, in Section \ref{ExperimentalResults} we verify our theory on a variety of synthetic data experiments and demonstrate the utility of this linear DML framework on two real data problems that benefit from a learned Mahalanobis distance.

\section{Model and Key Observations}
\label{sec:model}
This section contains the model, the underlying assumptions, and some observations used throughout the paper.

\subsection{Data and Model Assumptions}
We work under the following data model throughout the paper.  
We assume there exists some positive semi-definite $\M^* \in \R^{d \times d}$ that defines a Mahalanobis distance that we seek to discover.  We observe pairs of data points $\x_i, \y_i \in \R^d$, but we only consider the differences of the pairs $\z_i = \x_i - \y_i$.  Moreover, we assume that all observations $\z_i \in \R^d$ are i.i.d. from some unknown distribution with pdf $f(\z)$.  

We also assume there exists a threshold $\tau^*$ which generates labels $\ell_i \in \{{\rm Close, Far}\}$; more specifically, $z_i$ is Close if and only if 
\begin{equation}\label{assumption0}
\tag{Label Assumption}
\|\z_i\|_{\M^*}^2 + \eta_i < \tau^* \, ,
\end{equation}
where $\eta_i$ is a noise term.  Each $\eta_i$ is generated i.i.d. from a distribution ${\rm Noise}(\eta| 0, s)$ which 
comes form a location-scale family of distributions with location $0$ and scale parameter $s>0$. 
For different distributions, $s$ can have a different meaning. For example, for the normal distribution, $s$ is the standard deviation. For mathematical convenience, we overload ``Close'' = -1 and ``Far'' = +1 so $\ell_i \in \{-1,+1\}$. As
$\|\z_i\|_{\M^*}^2 + \eta_i < \tau^*$ is equivalent to $\|\z_i\|_{\frac{\M^*}{t}}^2 + \frac{\eta_i}{t} < \frac{\tau^*}{t}$ for any $t>0$,  
scaling $\M^*$, $\tau^*$, and $s$ by a common parameter $t$ does not change the labeling distribution, so w.l.o.g., we remove this degree of freedom in the analysis presented in this paper by setting $s=1$.  Thus what the techniques in this paper ultimately recover is actually $\frac{\M^*}{s}$ and $\frac{\tau^*}{s}$.  In other words, we will see that our model is identifiable if $s=1$. Moreover, the following lemma indicates that for the noiseless case, the model is identifiable provided that $\tau=1$.
The following formalization is proved in Appendix~\ref{app:indicator}. 
\begin{lemma}\label{indicator}
Given two pairs $(\M_1, \tau_1)$ and $(\M_2, \tau_2)$ such that $\M_1,\M_2\succeq0$ and $\tau_1,\tau_2 > 0$, if the two functions 
$\z\mapsto\1_{\left\{\|\z\|^2_{\M_1} - \tau_1\geq 0\right\}}$ and $\z\mapsto\1_{\left\{\|\z\|^2_{\M_2} - \tau_2\geq 0\right\}}$ agree for all $\z$, then 
$\frac{\M_1}{\tau_1} = \frac{\M_2}{\tau_2}.$
\end{lemma}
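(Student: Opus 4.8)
The plan is to reduce the statement to the fact that a quadric of the form $\{\z : \z^t \A \z = 1\}$ determines its symmetric positive semi-definite matrix $\A$ uniquely, and to extract that matrix by probing along rays. First I would normalize. Since the ratio $\M_i/\tau_i$ is only meaningful when $\tau_i > 0$ (which I assume throughout; if some $\tau_i = 0$ the corresponding indicator is identically $1$ as $\M_i \succeq 0$, and the conclusion's ratio is undefined, so this case is excluded), I divide the defining inequality $\z^t \M_i \z \geq \tau_i$ by $\tau_i$ and set $\A_1 = \M_1/\tau_1$ and $\A_2 = \M_2/\tau_2$. The hypothesis that the two indicators agree pointwise then becomes exactly the set equality
\[
S_1 := \{\z \in \R^d : \z^t \A_1 \z \geq 1\} = \{\z \in \R^d : \z^t \A_2 \z \geq 1\} =: S_2,
\]
and the goal reduces to showing $\A_1 = \A_2$.

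The heart of the argument is a one-dimensional slicing. For a fixed unit vector $\u \in \Sph^{d-1}$, I would restrict to the ray $\{t\u : t \geq 0\}$ and determine which portion of it lies in $S_j$. Writing $a_j = \u^t \A_j \u \geq 0$ (nonnegative since each $\A_j$ is positive semi-definite), the condition $t^2 a_j \geq 1$ describes precisely the admissible $t$: if $a_j = 0$ this set is empty, while if $a_j > 0$ it is the half-line $[\,1/\sqrt{a_j}, \infty)$. Because $S_1 = S_2$, these two subsets of the ray coincide for every $\u$. Comparing them forces $a_1 = 0 \iff a_2 = 0$, and when both are positive it forces the threshold radii $1/\sqrt{a_1}$ and $1/\sqrt{a_2}$ to be equal; in either case $\u^t \A_1 \u = \u^t \A_2 \u$.

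Having established $\u^t \A_1 \u = \u^t \A_2 \u$ on all of $\Sph^{d-1}$, I would extend it to all of $\R^d$ by homogeneity (for $\z \neq \0$, $\z^t \A_j \z = \|\z\|^2 (\z/\|\z\|)^t \A_j (\z/\|\z\|)$, and the case $\z = \0$ is trivial), and then invoke the polarization identity to recover the full bilinear forms, concluding $\A_1 = \A_2$ as symmetric matrices, i.e. $\M_1/\tau_1 = \M_2/\tau_2$. The main delicate point is the degenerate direction $a_j = 0$, which arises when $\M_j$ is singular; the ray analysis handles it uniformly, since an empty admissible set on one side must be matched by an empty admissible set on the other, so no direction is lost. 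A secondary point worth stating explicitly is the reduction's reliance on $\tau_i > 0$, without which neither the normalization nor the ratio in the conclusion makes sense.
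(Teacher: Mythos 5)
Your proof is correct and takes essentially the same approach as the paper's: both normalize away the thresholds and probe along rays, comparing where the quadratic form first reaches the cutoff, with the same two-case split between degenerate ($\u^t\A\u=0$) and nondegenerate directions. Your version is slightly more complete in that it makes explicit the final polarization step recovering $\A_1=\A_2$ from equality of the quadratic forms, which the paper leaves implicit.
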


In order to be able to solve our optimization challenges and bound the sample complexity, we need to make a few simple assumptions on $\M^*,\tau^*$ and the data distribution.  We have the following assumption on the model:
\begin{equation}\label{assumption1}
\tag{Model Assumption}
\|\M^*\|_2 \leq \beta \; \text{ and }\; \tau^*\in[0, B]\, .
\end{equation}
Note that as we have assumed that $s =1$, $\M^*$ and $\tau^*$, as well as the upper bounds $\beta$ and $B$, have been scaled by $1/s$.
Since $\beta$ and $B$ later appear in the sample complexity (see Section \ref{noiseaffectcom}), noise affects the sample complexity through these terms.  

Next we assume something about the data we observe.  
Assume $\z_1,\ldots,\z_N \in \mathbb{R}^d$ are $N$ i.i.d. samples from an unknown distribution with probability density function $f(\z)$ with respect to  the Lebesgue measure on $\R^d$. 
Hence $\int_{\mathbb{R}^d}f(\z)\d x = 1$ which implies that the set $\{\z\colon f(\z) \neq 0\}$  has a positive Lebesgue measure.   
We also assume that the probability density function $f(\z)$ has bounded support, i.e., 
\begin{equation}\label{assumption2}
\tag{Data Assumption}
\max \{\|\z\|^2\colon f(\z) \neq 0\}\leq F.
\end{equation}
By this assumption, we know that almost surely    $\|\z\|^2\leq F$. 
Also, by the \ref{assumption2}, for each 
\[
\M \in \Md = \left\{\M \in \R^{d \times d} : \M \text{ is p.s.d., } \|\M\|_2 \leq \beta\right\},
\]
we have
$\|\z\|^2_\M\leq \|\M\|_2\|\z\|^2\leq \beta F$.
When it is clear by context, we sometimes write $\mathcal{M}$ for $\Md$.  
Accordingly, for $\z\sim f(\z)$, almost always 
$$\left| \|\z_i\|^2_{\M} - \tau\right|\leq \max\{B, \beta F\}\quad\quad \forall (\M, \tau) \in \Md \times [0,B].$$
Since the sign of $\|\z_i\|^2_{\M} - \tau$ determines the label of $\z$, it is reasonable to assume that 
\begin{equation}\label{assumptopnBdA}
\tag{Meta Assumption}
B\leq \beta F,
\end{equation}
which implies 
\begin{equation}
\label{uppernormz}\left| \|\z_i\|^2_{\M} - \tau\right|\leq \beta F\quad\quad \forall (\M, \tau) \in \Md \times [0,B].
\end{equation}

\subsection{Convexity}\label{convexitysubsection}
Our core objective is optimizing $R_N$ or $R$ over $(\M, \tau)$ such that $\M\succeq 0,\tau\geq0$. 
 In the same vein as regular supervised learning scenarios, we consider loss functions that are convex with respect to the contribution of each data point $(\z_i, \ell_i)$. To show the loss functions over the space of valid parameters are convex, we  first show that any convex combination of two valid models' parameters is still a valid parameter and for any two models with parameters $(\M_1, \tau_1)$ and $(\M_2, \tau_2)$ and an interpolation parameter $\lambda \in [0,1]$, we have  
\[
\|\z\|^2_{\lambda \M_1 + (1-\lambda)\M_2} - (\lambda \tau_1 + (1-\lambda) \tau_2) 
 = 
\lambda (\|\z\|^2_{\M_1} - \tau_1) + (1-\lambda)(\|\z\|^2_{\M_2} - \tau_2).  
\]
Hence, the convex interpolation of the two models gives us another model with parameter $(\M = \lambda \M_1 + (1-\lambda)\M_2, \tau = \lambda \tau_1 + (1-\lambda) \tau_2)$ which is in the space of the valid parameters.  Coupled with a convex loss function, this implies that minimizing $R_N$ or $R$ over $(\M, \tau)$ with $\M \succeq 0$ and $\tau \geq 0$ is a convex optimization problem.  Hence any critical point is a global minimum.  
However, restricting $\M \succeq 0$ under gradient descent is non-trivial since generically the gradient may push the solution out of that.  While manifold optimization methods have been developed for other matrix optimization challenges~\citep{balzano2010online,vandereycken2013low}, we develop a simpler unconstrained approach in this work.  

\section{Noise Observations and Optimal Loss Functions}
\label{sec:conv}
 
Recall that $\z_1,\ldots,\z_N \in \mathbb{R}^d$ are $N$ i.i.d. samples from an unknown distribution with probability density function $f(\z)$ with bounded support. As the noise distribution $\mathrm{Noise}(\eta |0, 1)$ makes the labeling probabilistic, 
for a given $\z\sim f(\z)$, the probability that the corresponding label is $1$ can be computed as 
\begin{align}\label{gennoisemodel}
P(\ell=1|\z; \M, \tau) & = P(\eta > \tau - \|\z\|_{\M}^2)\nonumber\\
& = \int_{-\infty}^{\|\z\|_{\M}^2 - \tau}{\rm Noise}(\eta| 0, 1)\d\eta\nonumber\\
& =  \Phi_{\rm Noise}(\|\z\|_{\M}^2 - \tau),
\end{align}
where $\Phi_{\rm Noise}(a) = \int_{-\infty}^a {\rm Noise}(\eta| 0, 1)\d\eta$.
Observe that 
$$P(\ell=-1|\z; \M, \tau) =  1 - P(\ell=1|\z; \M, \tau) 
= \Phi_{\rm Noise}(-1(\|\z\|_{\M}^2 - \tau)).$$
Consequently, we have
$$\z, \ell \sim g(\z, \ell;\M, \tau) = f(\z)\Phi_{\rm Noise}\left(\ell(\|\z\|^2_{\M}-\tau)\right),$$
 where $g$ is the density of the random variable $(z, \ell)$ with respect to $\mu_L\otimes \nu$ (the product of the Lebesgue measure $\mu_L$ and the counting measure $\nu$). 
 Our theoretical results hold under quite general assumptions on the noise. In particular, 
the noise should be symmetric, continuous, non-zero everywhere, and $-\log \Phi_{\rm Noise}$ should be convex and $\zeta$-Lipschitz on $[-\beta F, \beta F]$. Throughout the paper, 
we refer to any noise satisfying these assumptions as  
a \emph{simple noise model}, and specifically consider 
 the Logistic, Normal, Laplace, and Hyperbolic secant (HS) distributions as special cases.  Table~\ref{tbl:simplenoises} describes these distributions and their associated model constants 
(see Appendix~\ref{app:table1proof} for verification of these constants). Note for the Logistic and Hyperbolic models,  
${\rm sech}(t) = \frac{1}{\cosh(t)} = \frac{2}{e^t+e^{-t}}$, and in the remainder of this article we will refer to ${\rm Noise}(\eta| 0, 1)$ as ${\rm Noise}(\eta)$ for brevity. 
One could consider other noise models like Cauchy, but then $-\log \Phi_{\rm Noise}$ is not convex.  In Figure~\ref{fig:simplenoisesplot}, we compare the four simple noise distributions when they share the same mean and variance. 
\begin{table}[h]
    \centering
\begin{tabular}{lllll}
\toprule[2pt]
Noise	& Logistic & Normal & Laplace & Hyperbolic\\
	& ${\rm L}(\eta|0, 1)$ & $\N(\eta|0, 1)$ & $f(\eta|0, 1)$ & ${\rm HS}(\eta|0, 1)$\\
\midrule
pdf	& 
   $\frac{1}{4}{\rm sech}^2\left(\frac{\eta}{2}\right)$	& ${\frac {1}{ {\sqrt {2\pi}}}}e^{-{\frac{\eta^2}{2}}}$ & ${\frac{1}{2}}e^{-|\eta|}$ &  $\frac{1}{2}{\rm sech}(\frac{\pi}{2}\eta)$\\
std & 
  $\frac{\pi}{\sqrt{3}}$	& $1$   	& $\sqrt{2}$ &  $1$\\ 
\midrule 
$\zeta$ (cdf is $\zeta$-log-Lipschitz)	& 
   $1$ & $O(\beta F)$	& $1$ & $\frac{\pi}{2}$ \\

$\omega = \min\limits_{|\eta|\leq \beta F} {\rm Noise}(\eta)$ 
 & $e^{-O(\beta F)}$ & $e^{-O(\beta^2 F^2)}$ & $e^{-O(\beta F)}$ & $e^{-O(\beta F)}$ \\ 
 $T = \max\limits_{|\eta|\leq \beta F} - \log \Phi_{{\rm Noise}}(\eta)$ & $O(\beta F)$ & $O((\beta F)^2)$ & $O(\beta F)$ & $O(\beta F)$\\
\bottomrule[2pt]
\end{tabular}
   \caption{Simple noise models and key properties and values.}
    \label{tbl:simplenoises}
\end{table} 

\begin{figure}[h]   
\begin{center}  
    \includegraphics[width=0.55\textwidth]{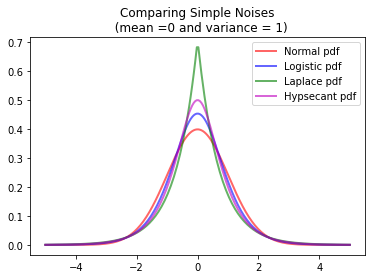}
\end{center}
    \caption{\label{fig:simplenoisesplot} Comparing simple noise models when $\mu = 0$ and $\sigma^2 = 1$.}
\end{figure}

Note by the \ref{assumption0},
\begin{equation}\label{eq:z_l_dist}
    (\z_1,\ell_1),\ldots,(\z_N,\ell_N) \stackrel{i.i.d.}{\sim} g(\z, \ell;\M^*, \tau^*) = f(\z)\Phi_{\rm Noise}\left(\ell(\|\z\|^2_{\M^*}-\tau^*)\right).
\end{equation}
Since we have a probabilistic model, we now use the MLE method to estimate $\M^*, \tau^*$. 
As one of the main contributions of this work, we will prove that this method works and deduce the corresponding sample complexity. 
The average negative log-likelihood of the given data $\z_1\ldots,\z_N$ and their labels $\ell_i,\ldots,\ell_N$ as a function of $\M$ and $\tau$ is 
\begin{align*}
{\rm NLL}(\M, \tau) &= -\frac 1N\sum_{i=1}^N \log g(\z_i, \ell_i;\M, \tau)\\
& = -\frac 1N\sum_{i=1}^N\left[ \log f(\z_i) +\log p(\ell_i |\z_i; \M, \tau)\right]\\
& = \underbrace{-\frac 1N\sum_{i=1}^N \log f(\z_i)}_\text{independent of $\M,\tau$}  \underbrace{-\frac 1N\sum_{i=1}^N \log p(\ell_i |\z_i; \M, \tau)}_\text{the loss function}.
\end{align*}
Therefore, to solve the MLE, we need to find a p.s.d. matrix $\M$ and a $\tau\in[0, B]$ minimizing 
\begin{align*}
R_N(\M,\tau) & = -\frac 1N\sum_{i=1}^N  \log p(\ell_i |\x_i; \M, \tau)
 = -\frac 1N\sum_{i=1}^N \log \Phi_{\rm Noise}\left(\ell_i(\|\z_i\|^2_{\M} - \tau)\right).
\end{align*}
Thus the optimization problem we are dealing with is 
\begin{align}\label{genMLE1}
\min_{\M\succeq 0,\tau\geq0} R_N(\M,\tau),
\end{align}
where $R_N(\M, \tau) = -\frac{1}{N}\sum_{i=1}^N \log \Phi_{\rm Noise}\left(\ell_i(\|\z_i\|_{\M}^2 - \tau)\right)$.

We will justify that solving this optimization problem with high probability ends up in a guaranteed approximation of $(\M^*, \tau^*)$. 
For fixed but arbitrary $\M, \tau$, using Chebyshev's inequality (or directly by the Law of Large Numbers), we know that 
$R_N(\M,\tau)$ tends to (in measure induced by $g(\z, \ell;\M^*, \tau^*)$)
\begin{align}
R(\M,\tau) & = -\mathbb{E}_{z,\ell \sim g(\z, \ell;\M^*, \tau^*)}\log \Phi_{\rm Noise}\left(\ell(\|\z\|^2_{\M}-\tau)\right)\label{R_N_M_t}\\
& = -\int g(\z, \ell;\M^*, \tau^*)\log \Phi_{\rm Noise}\left(\ell(\|\z\|^2_{\M}-\tau)\right)d\z d\ell\nonumber
\end{align}
provided that $\log \Phi_{\rm Noise}\left(\ell(\|\z\|^2_{\M} - \tau)\right)$ has a bounded variance, which is the case since 
$f(\z)$ has bounded support. 
Thus we can view $R_N(\M,\tau)$ as the empirical risk function and $R(\M,\tau)$ as the true risk function. 
However, we only have access to $R_N(\M,\tau)$. 
The rest of this subsection can be summarized as follows:
\begin{enumerate}
\item We prove that both functions $R(\M,\tau)$ and $R_N(\M,\tau)$ are convex, so we are dealing with convex optimization problems.
\item We prove that $R(\M,\tau)$ is uniquely minimized at $(\M^*, \tau^*)$.
\item We show that $R_N(\M,\tau)$ converges uniformly in measure to $R(\M,\tau)$. We also bound the corresponding error for an arbitrarily given confidence bound.  
\item Combining these results, we conclude that minimizing $R_N(\M,\tau)$ is a good proxy for minimizing $R(\M,\tau)$. 
\end{enumerate}
\begin{theorem}\label{uniqueR}
The true loss $R(\M,\tau)$ is uniquely minimized at $(\M^*, \tau^*)$. 
\end{theorem}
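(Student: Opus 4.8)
The plan is to recognize $R(\M,\tau)$ as a cross-entropy and reduce the claim to the non-negativity of the Kullback--Leibler divergence, with a polynomial-identity argument handling the equality case. For a fixed $\z$, abbreviate the two label probabilities as $p^*(\z) = \Phi_{\rm Noise}(\|\z\|^2_{\M^*} - \tau^*)$ and $p(\z) = \Phi_{\rm Noise}(\|\z\|^2_{\M} - \tau)$. Since $|\,\|\z\|^2_{\M} - \tau\,| \le \beta F$ almost surely by \eqref{uppernormz} and the noise pdf is continuous and non-zero everywhere, $\Phi_{\rm Noise}$ is strictly increasing and both $p^*(\z), p(\z)$ lie in a compact subinterval of $(0,1)$; hence every logarithm below is finite and the divergences are well defined.

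First I would marginalize the expectation defining $R$ over $\ell \in \{-1,+1\}$ conditioned on $\z$, so that the inner term becomes the cross-entropy between the Bernoulli laws $\mathrm{Ber}(p^*(\z))$ and $\mathrm{Ber}(p(\z))$. Subtracting $R(\M^*,\tau^*)$ then gives
\begin{equation*}
R(\M,\tau) - R(\M^*,\tau^*) = \mathbb{E}_{\z\sim f}\!\left[\, p^*(\z)\log\frac{p^*(\z)}{p(\z)} + (1-p^*(\z))\log\frac{1-p^*(\z)}{1-p(\z)} \,\right],
\end{equation*}
i.e. the $f$-average of $D_{\mathrm{KL}}\big(\mathrm{Ber}(p^*(\z)) \,\big\|\, \mathrm{Ber}(p(\z))\big)$. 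By Gibbs' inequality the integrand is pointwise non-negative, so $R(\M,\tau)\ge R(\M^*,\tau^*)$, establishing that $(\M^*,\tau^*)$ is a global minimizer.

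For uniqueness I would analyze the equality case. Equality forces the integrand to vanish $f$-almost everywhere, and since the KL divergence between two Bernoullis vanishes only when they coincide, this yields $p(\z)=p^*(\z)$ for almost every $\z$ in the support $\{\z : f(\z)\neq 0\}$. Because $\Phi_{\rm Noise}$ is strictly increasing and hence injective, this in turn gives $\|\z\|^2_{\M} - \tau = \|\z\|^2_{\M^*} - \tau^*$ for almost every such $\z$; equivalently, the quadratic polynomial $q(\z) = \z^t(\M-\M^*)\z - (\tau-\tau^*)$ vanishes on $\{f\neq 0\}$, a set of positive Lebesgue measure by the Data Assumption \eqref{assumption2} and the remark that $\{f\neq 0\}$ has nonzero measure.

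The crux --- the step I expect to require the most care to state cleanly --- is passing from ``$q$ vanishes a.e.\ on the support'' to exact parameter equality. Here I would invoke that the zero set of a non-identically-zero polynomial has Lebesgue measure zero; since $q$ vanishes on a positive-measure set, $q\equiv 0$ on all of $\R^d$. Evaluating at $\z=\0$ then gives $\tau=\tau^*$, after which $\z^t(\M-\M^*)\z = 0$ for all $\z$ together with symmetry of $\M-\M^*$ (by polarization) forces $\M=\M^*$. This yields the claimed uniqueness. Note that this argument recovers the parameters \emph{exactly}, which is strictly stronger than the up-to-ratio conclusion of Lemma~\ref{indicator}: the strictly monotone $\Phi_{\rm Noise}$ matches the actual \emph{values} of the label probabilities rather than only their induced $\{0,1\}$ thresholds.
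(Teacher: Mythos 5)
Your proof is correct, and its skeleton matches the paper's: both reduce $R(\M,\tau)-R(\M^*,\tau^*)$ to a Kullback--Leibler divergence to get non-negativity, then analyze the equality case by passing through the injectivity of $\Phi_{\rm Noise}$ to the statement that $\z^t(\M-\M^*)\z-(\tau-\tau^*)$ vanishes on a set of positive Lebesgue measure. The differences are in the packaging and in the key lemma for the last step. You marginalize over $\ell$ first and write the gap as the $f$-average of a pointwise Bernoulli KL divergence, whereas the paper works with the joint KL divergence ${\rm D_{KL}}(g(\cdot;\M^*,\tau^*)\,\|\,g(\cdot;\M,\tau))$ and invokes Pinsker-free non-negativity directly; these are equivalent by the chain rule since both joints share the marginal $f$, but your version makes the pointwise vanishing in the equality case slightly more transparent. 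More substantively, to conclude $\M=\M^*$ and $\tau=\tau^*$ from vanishing on a positive-measure set, the paper invokes its bespoke Lemma~\ref{M1=M2}, whose proof runs through Observation~\ref{non-singular} (a hands-on argument that preimages of measure-zero sets under coordinatewise squaring have measure zero), while you invoke the standard fact that the zero set of a not-identically-zero polynomial has Lebesgue measure zero, then finish by evaluating at $\z=\0$ and polarizing. Your route is shorter and rests on a more standard general-purpose lemma; the paper's route is self-contained but longer. Your closing observation --- that the strictly monotone $\Phi_{\rm Noise}$ pins down the parameters exactly rather than only up to the ratio of Lemma~\ref{indicator} --- is accurate and consistent with the paper.
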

\begin{proof}
First, note that 
\begin{align*}
R(\M,\tau) - R(\M^*,\tau^*) 
& = \mathbb{E}_{\z,\ell\sim g(\z, \ell;\M^*, \tau^*)} \left(\log\frac{  \Phi_{\rm Noise}(\ell(\|\z\|^2_{\M^*}-\tau^*))}{  \Phi_{\rm Noise}(\ell(\|\z\|^2_\M-\tau))}\right)\\
& = \mathbb{E}_{\z,\ell\sim g(\z, \ell;\M^*, \tau^*)} \left(\log\frac{ f(\z) \Phi_{\rm Noise}(\ell(\|\z\|^2_{\M^*}-\tau^*))}{ f(\z) \Phi_{\rm Noise}(\ell(\|\z\|^2_\M-\tau))}\right)\\
&  = {\rm D_{KL}}(g(\x, \ell;\M^*, \tau^*) \| g(\x, \ell;\M, \tau))\geq 0. 
\end{align*}
This indicates that $R(\M,\tau)$ takes its minimum at $(\M^*, \tau^*)$. Moreover, for any $(\M^+, \tau^+)$ at which $R(\M,\tau)$ attains its minimum, 
${\rm D_{KL}}(g(\z, \ell;\M^*, \tau^*) \| g(\z, \ell;\M^+, \tau^+))= 0.$
This implies that $g(\z, \ell;\M^*, \tau^*) = g(\z, \ell;\M^+, \tau^+)$ almost everywhere 
(according to the probability measure induced by $g(\z, \ell;\M^*, \tau^*)$ over $\mathbb{R}^d\times \{-1,1\}$).
Since $\mu_L(\{\z\colon f(\z)>0\})>0$ (Lebesgue measure) and $\log \Phi_{\rm Noise}(\cdot)$ is one-to-one, we can conclude that there is a set $S\subseteq \mathbb{R}^d$ 
such that $\mu_L(S)>0$ and for every $\z\in S$, $$\|\z\|^2_{\M^*}-\tau^* = \|\z\|^2_{\M^+}-\tau^+.$$  
By Lemma~\ref{M1=M2} in Appendix~\ref{Basicproperties}, we then conclude
 $\M^+ = \M^*$ and $\tau^+=\tau^*$. 
\end{proof}
Although we have proved that the true loss $R(\M,\tau)$ is uniquely minimized at $(\M^*, \tau^*)$, in reality, we do not have access to the true loss, but only to the empirical loss $R_N(\M,\tau)$. 
Next we will show that $R_N(\M,\tau)$ is uniformly close to $R(\M,\tau)$ as  $N$ gets large, and then conclude that instead of minimizing $R(\M,\tau)$, we can minimize $R_N(\M,\tau)$ to approximate $(\M^*, \tau^*)$. Note that 
for two given p.s.d. $\M_1$ and $\M_2$, 
\begin{equation} \label{eq:MahCS-obs}
    |\|\x\|^2_{\M_1} - \|\x\|^2_{\M_2}|\leq \|\M_1 -\M_2\|_2 \|\x\|^2.
\end{equation} 
For proof, see Appendix~\ref{app:uc}: Observation~\ref{obs4}.
In the next lemma, using this inequality, we prove that the true loss and empirical loss are both Lipschitz with respect to the metric 
$${\rm d}((\M_1, \tau_1), (\M_2, \tau_2)) = \|\M_1- \M_2\|_2 + |\tau_1 - \tau_2|.$$   
\begin{lemma}\label{upperR1}
If $\log \Phi_{\rm Noise}(\cdot)$ is $\zeta$-Lipschitz, then, for any given $(\M_1, \tau_1),(\M_2, \tau_2)\in  \mathcal{M}\times [0,B]$, 
\begin{enumerate}
\item $|R(\M_1, \tau_1) - R(\M_2, \tau_2)| < \zeta(F+1){\rm d}((\M_1, \tau_1), (\M_2, \tau_2)),$ 
\item $|R_N(\M_1, \tau_1) - R_N(\M_2, \tau_2)| < \zeta(F+1){\rm d}((\M_1, \tau_1), (\M_2, \tau_2)).$
\end{enumerate}
\end{lemma}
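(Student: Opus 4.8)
The plan is to prove both bounds at once by first establishing a single \emph{pointwise} Lipschitz estimate for the summand/integrand
$-\log\Phi_{\rm Noise}(\ell(\|\z\|^2_\M - \tau))$ as a function of the parameter pair $(\M, \tau)$, uniformly over every $\z$ in the support of $f$ and every $\ell \in \{-1,+1\}$. Since $R_N$ is an average of such terms over the samples and $R(\M,\tau)$ is the expectation of one such term under $g(\z,\ell;\M^*,\tau^*)$, the triangle inequality reduces both part (1) and part (2) to this common pointwise bound: for (2) I average the bound over the $N$ samples, and for (1) I integrate it against $g$. The uniformity in $\z$ (which is what makes the same constant work for both the empirical and the true loss) is guaranteed by the bounded support in \ref{assumption2}.

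First I would fix an arbitrary $\z$ in the support and $\ell \in \{-1,+1\}$ and look at $\bigl|\log\Phi_{\rm Noise}(\ell(\|\z\|^2_{\M_1} - \tau_1)) - \log\Phi_{\rm Noise}(\ell(\|\z\|^2_{\M_2} - \tau_2))\bigr|$. Before applying the $\zeta$-Lipschitz property of $\log\Phi_{\rm Noise}$, I must verify that both arguments lie in the interval $[-\beta F, \beta F]$ on which that property is assumed to hold; this is exactly the content of \eqref{uppernormz}, valid since $(\M_1, \tau_1),(\M_2, \tau_2)\in\Md\times[0,B]$. With the arguments confined to that window, the $\zeta$-Lipschitz bound yields $\zeta\,\bigl|\ell(\|\z\|^2_{\M_1} - \tau_1) - \ell(\|\z\|^2_{\M_2} - \tau_2)\bigr|$, and because $|\ell|=1$ the factor of $\ell$ drops out.

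Next I would split the remaining quantity by the triangle inequality into $\bigl|\|\z\|^2_{\M_1} - \|\z\|^2_{\M_2}\bigr| + |\tau_1 - \tau_2|$, bound the first term via \eqref{eq:MahCS-obs} by $\|\M_1 - \M_2\|_2\,\|\z\|^2$, and then invoke \ref{assumption2} to replace $\|\z\|^2$ by $F$. This produces the pointwise estimate $\zeta\bigl(F\|\M_1 - \M_2\|_2 + |\tau_1 - \tau_2|\bigr)$, which is at most $\zeta(F+1)\,{\rm d}\bigl((\M_1, \tau_1),(\M_2, \tau_2)\bigr)$ because $F+1\geq F$ and $F+1\geq 1$ dominate the coefficients of $\|\M_1 - \M_2\|_2$ and $|\tau_1 - \tau_2|$ respectively. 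Averaging this bound over the $N$ samples gives (2), and integrating it against $g(\z,\ell;\M^*,\tau^*)$ gives (1).

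The one genuine subtlety, and the only step I expect to require care, is the domain check: the Lipschitz constant $\zeta$ is guaranteed only on $[-\beta F,\beta F]$, so the whole argument collapses unless both $\ell(\|\z\|^2_{\M_j} - \tau_j)$ are verified to stay inside that window, which is where \eqref{uppernormz} is indispensable. Everything after that is a routine chain of triangle inequalities together with the Cauchy--Schwarz-type bound \eqref{eq:MahCS-obs}, so I do not anticipate further obstacles.
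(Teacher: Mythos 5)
Your proposal is correct and follows essentially the same route as the paper's proof: apply the $\zeta$-Lipschitz property of $\log\Phi_{\rm Noise}$ to the summand/integrand, split via the triangle inequality, control the quadratic-form difference with \eqref{eq:MahCS-obs}, and bound $\|\z\|^2$ by $F$ using \ref{assumption2} before averaging or integrating. Your explicit verification that both arguments lie in $[-\beta F,\beta F]$ (via \eqref{uppernormz}) is a point the paper leaves implicit, but it does not change the argument.
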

\begin{proof}
We start with the proof of the first inequality. 
In view of Equation~\ref{R_N_M_t}, we have 
\begin{align*}
|R(\M_1, \tau_1) - R(\M_2, \tau_2)| 
& =  \left| \mathbb{E}_{\z,\ell}\Big[\log\Phi_{\rm Noise}\left(\ell(\|\z\|^2_{\M_1}-\tau_1)\right) - \log\Phi_{\rm Noise}\left(\ell(\|\z\|^2_{\M_2}-\tau_2)\right)\Big]\right|\\
& \leq  \mathbb{E}_{\z,\ell}\Big|\log\Phi_{\rm Noise}\left(\ell(\|\z\|^2_{\M_1}-\tau_1)\right) - \log\Phi_{\rm Noise}\left(\ell(\|\z\|^2_{\M_2}-\tau_2)\right)\Big|\\
\text{($\log\Phi_{\rm Noise}(\cdot)$ is $\zeta$-Lipschitz)}\quad
& \leq \zeta\mathbb{E}_{\z}\Big[ \Big| (\|\z\|^2_{\M_1}- \tau_1) - (\|\z\|^2_{\M_2}-\tau_2)\Big|\Big]\\
\text{(using \ref{eq:MahCS-obs})}\quad
& \leq \zeta\mathbb{E}_{\z} \Big |\z^t(\M_1 -\M_2)\z \Big| + \zeta\mathbb{E}\Big[|\tau_1 -\tau_2 |\Big]\\
& \leq \zeta\|(\M_1 -\M_2)\|_2 \mathbb{E}\Big[\|\z\|^2\Big]  + \zeta |\tau_1 -\tau_2 |\\
\text{(\ref{assumption2})}\quad
& \leq \zeta\|(\M_1 -\M_2)\|_2 F + \zeta |\tau_1 -\tau_2 |\\
& < \zeta(F+1){\rm d}\big((\M_1, \tau_1), (\M_2, \tau_2)\big), 
\end{align*}
where $(\z,\ell)\sim g(\z, \ell;\M^*, \tau^*)$ (see Equation~\ref{eq:z_l_dist}). Note that   
$$R_N(\M,\tau) = -\expt_{(\z,\ell)\in_u \{(\z_i,\ell_i)\colon i\in [N]\}} \log \Phi_{\rm Noise}\left(\ell_i(\|\z_i\|^2_{\M} - \tau)\right),$$
where $(\z,\ell)\in_u \{(\z_i,\ell_i)\colon i\in [N]\}$ indicates uniform distribution over $\{(\z_i,\ell_i)\colon i\in [N]\}$. 
If we use the uniform distribution over 
$\{(\z_i,\ell_i)\colon i=1,\ldots,N\}$ in place of $g$, then, with an identical approach, we have the second statement. 
\end{proof}



As $-\log\Phi_{\rm Noise}(\cdot)$ is a decreasing function, using Equation~\ref{uppernormz}, we have 
\begin{equation}\label{Tvalue}
    0\leq -\log \Phi_{\rm Noise}\left(\ell_i(\|\z_i\|^2_{\M} - \tau)\right)\leq -\log\Phi_{\rm Noise}(-\beta F) = T,
\end{equation}
which indicates that the random variables $-\log \Phi_{\rm Noise}\left(\ell_i(\|\z_i\|^2_{\M} - \tau)\right)$ are bounded by a value $T$; see Table \ref{tbl:simplenoises}.   
In the next theorem, we prove that, with high probability,  the empirical loss $R_N$ is everywhere close to the true loss $R$. We provide a sketch of the proof here; for the complete proof, see Appendix~\ref{app:uc}.  The full bound for $N_d(\eps,\delta)$ appears in \ref{samcomexact}; it is 
a polynomial function of $F,T, \log(B), \log(\beta)$, but for simplicity of presentation we omit the precise dependence on these constants.  


\begin{theorem}\label{uniformconv1}
Assume that the noise model ${\rm Noise}(\eta)$ is simple. 
For any $\varepsilon,\delta>0$, define 
$$N_d(\varepsilon, \delta) =O\left(\frac{1}{\varepsilon^2}\Big[
\log\frac{1}{\delta}  + d^2\log \frac{d}{\varepsilon} \Big]\right).$$
If 
$N>N_d(\varepsilon, \delta)$, then with probability at least $1-\delta$, 
$$\sup_{(\M,\tau)\in \mathcal{M}\times [0, B]}\left|R_N(\M, \tau) - R(\M, \tau)\right|<\varepsilon.$$
\end{theorem}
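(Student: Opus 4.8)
The plan is to prove this uniform deviation bound by a standard covering-net argument from empirical process theory, combining three ingredients: a pointwise concentration inequality at each point of a finite net, a union bound over the net, and the Lipschitz continuity of both $R_N$ and $R$ (Lemma~\ref{upperR1}) to interpolate between net points and arbitrary parameters.

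First I would fix a resolution $r>0$ and build an $r$-net $\mathcal{C}\subseteq\mathcal{M}\times[0,B]$ with respect to the metric ${\rm d}$, so that every $(\M,\tau)$ has some $(\M',\tau')\in\mathcal{C}$ with ${\rm d}((\M,\tau),(\M',\tau'))\le r$. At each fixed net point the loss $R_N(\M',\tau')$ is an average of $N$ i.i.d.\ random variables $-\log\Phi_{\rm Noise}(\ell_i(\|\z_i\|^2_{\M'}-\tau'))$ lying in $[0,T]$ by \eqref{Tvalue}, so Hoeffding's inequality gives $\Pr[|R_N(\M',\tau')-R(\M',\tau')|>\eps/2]\le 2\exp(-N\eps^2/(2T^2))$, and a union bound over the $|\mathcal{C}|$ net points controls all of them simultaneously. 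For an arbitrary $(\M,\tau)$ I pick its nearest net point and apply the triangle inequality: $|R_N(\M,\tau)-R(\M,\tau)|$ is bounded by the deviation at the net point plus the two Lipschitz gaps $|R_N(\M,\tau)-R_N(\M',\tau')|$ and $|R(\M',\tau')-R(\M,\tau)|$, each at most $\zeta(F+1)r$ by Lemma~\ref{upperR1}. Choosing $r=\eps/(4\zeta(F+1))$ makes the two gaps sum to $\eps/2$, so the total stays below $\eps$.

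It then remains to bound $|\mathcal{C}|$ and solve for $N$, which is where the $d^2\log(d/\eps)$ term is born. The set $\mathcal{M}$ lives in the space of symmetric $d\times d$ matrices, of dimension $D=d(d+1)/2=O(d^2)$. The operator-norm ball of radius $\beta$ is contained in a Frobenius ball of radius $\beta\sqrt d$; since $\|\cdot\|_2\le\|\cdot\|_F$, a Frobenius $r$-net of that larger ball is also an operator-norm $r$-net of $\mathcal{M}$, and a standard volumetric bound yields at most $(3\beta\sqrt d/r)^{D}$ points. Multiplying by the $O(B/r)$ points needed to cover $[0,B]$ gives $\log|\mathcal{C}|=O(d^2\log(\sqrt d\,\beta/r))=O(d^2\log(d/\eps))$ after substituting $r$. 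Demanding $2|\mathcal{C}|\exp(-N\eps^2/(2T^2))\le\delta$ and solving produces $N\ge \frac{2T^2}{\eps^2}(\log(2|\mathcal{C}|)+\log\frac1\delta)=O(\frac{1}{\eps^2}(\log\frac1\delta+d^2\log\frac d\eps))$, treating $F,T,\beta,B$ as constants, which is exactly $N_d(\eps,\delta)$.

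The main obstacle, and the step deserving the most care, is the covering-number estimate: obtaining the correct $d^2\log(d/\eps)$ dependence hinges on covering $\mathcal{M}$ in the appropriate norm. The extra $\log d$ factor arises precisely from the mismatch between the operator norm (in which the Lipschitz bound of Lemma~\ref{upperR1} is expressed) and the Frobenius norm (in which volumetric covering bounds are natural), since the operator-norm ball embeds into a Frobenius ball whose radius is inflated by $\sqrt d$. One must also confirm the hypotheses needed along the way, namely boundedness of the per-sample loss for Hoeffding (which holds by \eqref{Tvalue}) and the applicability of Lemma~\ref{upperR1} to both $R$ and $R_N$ over the entire domain $\mathcal{M}\times[0,B]$; both are already established in the excerpt, so the remaining work is the bookkeeping of constants in balancing $r$, the Hoeffding exponent, and the union bound.
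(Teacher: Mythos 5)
Your proposal is correct and follows essentially the same route as the paper's proof in Appendix~\ref{app:uc}: an $\alpha$-cover of $\mathcal{M}\times[0,B]$ from Lemma~\ref{lem:eps-cover}, Chernoff--Hoeffding at each net point using the boundedness in \eqref{Tvalue}, a union bound over the net, and the Lipschitz bounds of Lemma~\ref{upperR1} to pass from net points to arbitrary parameters. The only differences are cosmetic (you split $\eps$ as $\eps/4+\eps/4+\eps/2$ where the paper uses three thirds, and you cover the $d(d+1)/2$-dimensional space of symmetric matrices rather than the full $d^2$-dimensional ambient space, which changes nothing in the stated asymptotic bound).
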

\begin{proof}
Set $\alpha = \frac{\varepsilon}{3\zeta(F+1)}$.
Consider $\mathcal{E}=\{(\M_i, \tau_i); i = 1,\ldots,m=m(\alpha)\}$ as an $\alpha$-cover for $\mathcal{M}\times[0, B].$  
By Lemma~\ref{upperR1}, for every $(\M,\tau)$, there exists an index $i\in[m]$ such that 
$$|R(\M, \tau) - R(\M_i, \tau_i)| < \frac{\varepsilon}{3} \quad\quad\text{and}\quad\quad 
|R_N(\M, \tau) - R_N(\M_i, \tau_i)| < \frac{\varepsilon}{3}$$
which concludes 
$\left|R_N(\M, \tau) - R(\M, \tau)\right| \leq \frac{2\varepsilon}{3} + |R_N(\M_i, \tau_i) - R(\M_i, \tau_i)|.$
Using an appropriate upper bound for $m(\alpha) = \frac{B}{\alpha}(4 \beta d \sqrt{d} /\alpha)^{d^2}$ in Lemma \ref{lem:eps-cover}, and applying a Chernoff-Hoeffding bound and union bound, we conclude the desired result (for full details, see Appendix~\ref{app:uc}).
\end{proof}
Note that combining Theorem~\ref{uniqueR} and Theorem~\ref{uniformconv1}, we conclude that minimizing $R_N(\M,\tau)$ is a good proxy for minimizing $R(\M,\tau)$. 
We restate this result in the next theorem.

\begin{theorem}~\label{lossapprox}
Assume that the noise model ${\rm Noise}(\eta)$ is simple. 
For any given $\varepsilon, \delta >0$, if $N > N(\frac{\varepsilon}{2}, \delta)$, then, with probability at least $1-\delta$,
for any point $(\hat{\M}, \hat{\tau})$ minimizing $R_N(\M, \tau)$, we have 
$$0< R(\hat{\M}, \hat{\tau}) -  R(\M^*, \tau^*) < \varepsilon.$$
\end{theorem}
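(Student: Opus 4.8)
The plan is to run the textbook empirical-risk-minimization argument, letting the two results already in hand carry all the weight: Theorem~\ref{uniqueR} supplies the left inequality and Theorem~\ref{uniformconv1} supplies the right one. For the left inequality I would simply invoke uniqueness: since $(\M^*, \tau^*)$ is the \emph{unique} minimizer of $R$ over $\mathcal{M}\times[0,B]$, every other feasible pair gives a strictly larger value, so $R(\hat{\M}, \hat{\tau}) - R(\M^*, \tau^*) > 0$ as long as $(\hat{\M}, \hat{\tau}) \neq (\M^*, \tau^*)$. Because $(\hat{\M}, \hat{\tau})$ minimizes the empirical object $R_N$ assembled from finitely many noisy samples, it will (almost surely) not coincide exactly with the population optimum, which yields the strict bound $0 < R(\hat{\M}, \hat{\tau}) - R(\M^*, \tau^*)$.

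For the upper bound, the key step is the telescoping decomposition
\begin{align*}
R(\hat{\M}, \hat{\tau}) - R(\M^*, \tau^*)
&= \big[R(\hat{\M}, \hat{\tau}) - R_N(\hat{\M}, \hat{\tau})\big] + \big[R_N(\hat{\M}, \hat{\tau}) - R_N(\M^*, \tau^*)\big] \\
&\quad + \big[R_N(\M^*, \tau^*) - R(\M^*, \tau^*)\big].
\end{align*}
I would bound the three brackets separately. The middle bracket is $\leq 0$ purely by definition, since $(\hat{\M}, \hat{\tau})$ minimizes $R_N$ and $(\M^*, \tau^*)\in\mathcal{M}\times[0,B]$ is a feasible competitor (recall $\M^*$ is p.s.d. with $\|\M^*\|_2\leq\beta$ and $\tau^*\in[0,B]$), so $R_N(\hat{\M}, \hat{\tau}) \leq R_N(\M^*, \tau^*)$. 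Each of the first and third brackets is at most $\sup_{(\M, \tau) \in \mathcal{M} \times [0, B]} |R_N(\M, \tau) - R(\M, \tau)|$.

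To control that supremum I would apply Theorem~\ref{uniformconv1} with tolerance $\varepsilon/2$: taking $N > N_d(\varepsilon/2, \delta)$ (this is exactly the $N(\frac{\varepsilon}{2}, \delta)$ appearing in the statement) guarantees, with probability at least $1-\delta$, that the supremum is below $\varepsilon/2$. On that event the first and third brackets are each $< \varepsilon/2$ and the middle bracket is $\leq 0$, so summing gives $R(\hat{\M}, \hat{\tau}) - R(\M^*, \tau^*) < \varepsilon$, completing the chain. I do not expect a genuine obstacle here, since the entire difficulty of the problem has already been absorbed into the uniform convergence bound of Theorem~\ref{uniformconv1}, whose covering-number and Chernoff--Hoeffding machinery is the real engine; the present argument is essentially deterministic bookkeeping on the decomposition above. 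The only mild subtlety worth flagging is the strict sign in the lower bound, which relies on the empirical minimizer almost surely avoiding the measure-zero event of landing exactly on $(\M^*, \tau^*)$.
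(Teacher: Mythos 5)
Your proposal is correct and follows essentially the same route as the paper: the upper bound comes from the same chain (uniform convergence at tolerance $\varepsilon/2$ via Theorem~\ref{uniformconv1}, plus $R_N(\hat{\M},\hat{\tau})\leq R_N(\M^*,\tau^*)$ by minimality), merely rewritten as a three-term telescoping sum, and the lower bound comes from Theorem~\ref{uniqueR}. The only divergence is cosmetic: the paper settles for the non-strict $0\leq R(\hat{\M},\hat{\tau})-R(\M^*,\tau^*)$ in its proof, whereas you attempt to justify strictness via an "almost surely $(\hat{\M},\hat{\tau})\neq(\M^*,\tau^*)$" claim that is plausible but not actually proved by either of you.
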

\begin{proof}
As $N > N(\frac{\varepsilon}{2}, \delta)$, by Theorem~\ref{uniformconv1}, with probability at least $1-\delta$, we have 
$$\left|R_N(\M, \tau) - R(\M, \tau)\right|< \frac{\varepsilon}{2}\quad\quad\text{for all } (\M,\tau)\in\mathcal{M}\times [0,B].$$
Consequently, with probability at least $1-\delta$, 
\begin{align*}
R(\hat{\M}, \hat{\tau}) - \frac{\varepsilon}{2} & <   R_N(\hat{\M}, \hat{\tau})\\
& \leq R_N(\M^*, \tau^*)\quad\quad\text{($R_N(\M, \tau)$ minimized at $(\hat{\M}, \hat{\tau})$)}\\
& < R(\M^*, \tau^*) + \frac{\varepsilon}{2},
\end{align*}
implying that 
\[
0\leq  R(\hat{\M}, \hat{\tau}) -  R(\M^*, \tau^*) \leq \eps. 
\]
\end{proof}

In the next four subsections, we consider the simple noise models determined by the Logistic, Gaussian, Laplace, and Hyperbolic secant distributions.

\subsection{Logistic Distribution as the Noise}\label{Logisticnoisemodel}
The probability density function of the Logistic distribution is 
$${\rm L}(x|\mu, s) = \frac{1}{4s}{\rm sech}^2\left(\frac{x-\mu}{2s}\right),$$
where $\mu$ is the mean, $s$ is the scale parameter of this distribution, and $\frac{s^2\pi^2}{3}$ is 
the variance. 
The Logistic distribution looks very much like a normal distribution and is sometimes used as an approximation for it. In this subsection, we assume that the noise has a Logistic distribution with $\mu = 0$ and $s = 1$, i.e., 
${\rm Noise}(\eta) = {\rm L}(\eta|0, 1)$ (since scaling $\M^*, \tau^*$, and $\eta$ does not change the labeling distribution, w.l.o.g. we may assume $s=1$). The Cumulative distribution function of ${\rm L}(x|0, 1)$ is the sigmoid function 
$\sigma(x) = \frac{1}{1+e^{-x}}$, i.e., 
$\Phi_{\rm L}(x) = \sigma(x).$
As the Logistic noise is simple with $\zeta =1$ (see Table~\ref{tbl:simplenoises}), we can apply Theorem~\ref{uniformconv1}. 
Plugging $\sigma(x)$ in for $\Phi_{\rm Noise}(x)$ in Optimization Problem~\ref{genMLE1}, we obtain 
$R_N(\M, \tau) = -\frac{1}{N}\sum_{i=1}^N \log \sigma\left(\ell_i(\|\z_i\|_{\M}^2 - \tau)\right)$.

Since in the Logistic case we have a closed form for $\Phi_{\rm L}(x) = \sigma(x),$ the loss function becomes computationally easier to work with.
As the main setting for the paper, we will thus assume that the noise comes from a Logistic distribution, although we consider other noise models and loss functions in our experiments. 


\subsection{Normal Distribution as the Noise}\label{subsec:normalnoise}
If we assume the noise has a Normal instead of a Logistic distribution, 
then $\Phi_{\rm Noise}(x)$ becomes a probit function instead of the sigmoid function. 
Indeed, if we set ${\rm Noise}(\eta) = \mathcal{N}(\eta| 0, 1)$, then 
$$\Phi(a) = \Phi_{\rm Noise}(a) = \int_{-\infty}^a \mathcal{N}(\eta| 0, 1)\d\eta,$$ 
which is known as the {\it probit function}.  
Once again plugging $\Phi$ into $R_N(\M, \tau)$ in Optimization Problem~\ref{genMLE1}, we obtain 
$R_N(\M, \tau) = -\frac{1}{N}\sum_{i=1}^N \log \Phi\left(\ell_i(\|\z_i\|_{\M}^2 - \tau)\right)$.
Unfortunately, the probit function has no closed-form formula, so optimizing $R_N$ is not as simple as in the Logistic case.  However we will observe in Section \ref{ExperimentalResults} that since the Logistic and Gaussian distributions are very similar, the Logistic loss does well under Normal noise.  

\subsection{Laplace Distribution as the Noise}\label{subsec:laplacenoise}
As the third natural option for noise, we assume that 
$${\rm Noise}(\eta) = {\mathcal{}\rm Laplace}(\eta | 0, 1) = \frac{1}{2}e^{-|\eta|}.$$
In this setting, 
$$\Phi_{{\rm Laplace}}(a) = \int_{-\infty}^a \frac{1}{2}e^{-|\eta|}\d\eta = \left\{
\begin{array}{ll}
\frac{1}{2}e^{a} & a\leq 0\\ \\ 
1 - \frac{1}{2}e^{-a} & a\geq 0.
\end{array}
\right.
$$
Similar to the Logistic case, we obtain a closed-form formula for $\Phi_{{\rm Laplace}}(a)$, 
so that this setting is also convenient to work with. 
Note that 
$$-\log\Phi_{{\rm Laplace}}(a) = 
\left\{
\begin{array}{ll}
-a + \log 2& a\leq 0\\ \\ 
-\log\left(1 - \frac{1}{2}e^{-a}\right) & a\geq 0
\end{array}\right. \, ,
$$
which yields a closed-form formula for $R_N$.


 \subsection{Hyperbolic Secant Distribution as the Noise}\label{subsec:Hyperbolicnoise}
 The hyperbolic secant distribution is a continuous probability distribution whose probability density function is 
 $${\rm HS}(\eta|\mu, \sigma) = \frac{1}{2\sigma}{\rm sech}(\frac{\pi}{2\sigma}(\eta-\mu))$$
 and whose Cumulative distribution function is 
 $$\Phi_{\rm HS}(\eta|\mu, \sigma) = \frac{2}{\pi} \arctan(\exp({\frac{\pi}{2\sigma}(\eta-\mu)})).$$
 The mean and variance of this distribution are $\mu$ and $\sigma^2$ respectively.  
 As the last option for noise, we consider ${\rm Noise}(\eta) = {\rm HS}(\eta|0, 1)$.
 In this case, we have 
 $$\Phi_{\rm HS}(a) = \frac{2}{\pi} \arctan\left(\exp({\frac{\pi}{2}\eta})\right).$$
 Plugging $\Phi_{\rm HS}$ into $R_N(\M, \tau)$ in Optimization Problem~\ref{genMLE1}, we obtain 
 $$R_N(\M, \tau) = -\frac{1}{N}\sum_{i=1}^N \log \left[
  \arctan\left(\exp\left({\frac{\pi}{2}\left(
  -\ell_i(\|\z_i\|_{\M}^2 - \tau)
  \right)}\right)\right) 
 \right]+ {\rm Constant}\, ,$$
 and we can ignore the constant term. 


\section{Algorithms, Approximation, and Dimensionality Reduction}
\label{sec:alg}
In this part, we mainly explore some properties of Optimization Problem~\ref{genMLE1} and consider the potential ways to solve it.  

\subsection{How to Solve Optimization Problem~\ref{genMLE1}}
We will prove that solving Optimization Problem~\ref{genMLE1} yields close approximations of the parameters $\M^*$ and $\tau^*$. 
We restate the optimization problem here:  
\begin{align}\label{genMLE1new}
\min_{\M\succeq 0,\tau\geq0} R_N(\M,\tau).
\end{align}
As we need to enforce $\M$ to be p.s.d., using gradient descent directly is difficult.
Notice that $\M$ is p.s.d. if and only if $\M = \A\A^\top$ for some $\A_{d\times k}$ where $k\leq d$; in this case  $\M$ indeed has rank at most $k$. 
Therefore, if we replace $\M$ by $\A\A^\top$ and optimize over $\A$, we no longer need to maintain the p.s.d. condition on $\M$.  Then the optimization problem can be rewritten as follows:
\begin{align}\label{genMLE1AAt}
\min_{\tau\geq 0, \A\in \mathbb{R}^{d\times k}} R_N(\A\A^\top,\tau).
\end{align}
If we set $k=d$, then Optimization~\ref{genMLE1AAt} is equivalent to Optimization~\ref{genMLE1}.
The only downside of this reformulation is that we lose the convexity by this change of variable.
So we are dealing with a non-convex optimization, and thus there may be no guarantee that gradient descent will converge to a global minimum.
Fortunately, the next theorem proved by \citet{doi:10.1137/080731359} resolves this issue.
We remind the reader that for convex optimization problems, global minimums and stationary points are equivalent. 
\begin{theorem}[\citealp{doi:10.1137/080731359}]
A local minimizer $\A^*$ of Problem~\ref{genMLE1AAt} provides a stationary point (global minimum) $\M = \A^*(\A^*)^\top$ of Problem~\ref{genMLE1new} 
if $\A^*$ is rank deficient $({\rm rank}(\A^*)<k)$.
Moreover, if $d=k$, then any local minimizer $\A^*$ of Problem~\ref{genMLE1AAt} provides a stationary 
point (global minimum) $\M^* = \A^*(\A^*)^\top$ of Problem~\ref{genMLE1new}.
\end{theorem}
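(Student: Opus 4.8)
The plan is to connect the first-order stationarity and local minimality of the factored problem~\ref{genMLE1AAt} with the conic KKT conditions that characterize global minimizers of the convex problem~\ref{genMLE1new}. Write $G = \nabla_{\M} R_N(\M, \tau)$ for the (symmetric) gradient of $R_N$ with respect to $\M$, evaluated at $\M = \A^*(\A^*)^\top$. Because $R_N$ is convex in $(\M,\tau)$, a feasible $(\M,\tau)$ with $\M \succeq 0$ and $\tau \geq 0$ is a \emph{global} minimizer of~\ref{genMLE1new} if and only if it satisfies the normal-cone optimality conditions $G \succeq 0$ and the complementarity $\langle G, \M\rangle = 0$, together with the analogous stationarity and complementarity in $\tau$. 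The entire proof reduces to extracting these facts from local minimality of $\A^*$.

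First I would differentiate the factored objective by the chain rule. With $\M = \A\A^\top$ and $G$ symmetric, a short computation gives $\nabla_{\A} R_N(\A\A^\top, \tau) = 2\,G\A$. Since $\A^*$ is a local (hence stationary) minimizer, $G\A^* = 0$. This immediately yields complementarity, because $\langle G, \M\rangle = \operatorname{tr}(G\M) = \operatorname{tr}\big((\A^*)^\top G \A^*\big) = 0$. The $\tau$-conditions are inherited verbatim: since $\tau$ is not reparametrized by the factorization, the $\tau$-stationarity and $\tau$-complementarity of~\ref{genMLE1AAt} coincide exactly with those of~\ref{genMLE1new}. Thus only $G \succeq 0$ remains to be shown.

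Establishing $G \succeq 0$ is the key step and the main obstacle, since first-order stationarity alone only controls $G$ on the range of $\A^*$ (it forces $G\A^* = 0$) and says nothing about $G$ on the orthogonal complement. Here I would exploit the rank deficiency of $\A^*$. If $\operatorname{rank}(\A^*) < k$, choose a nonzero $w \in \R^k$ with $\A^* w = 0$, and for arbitrary $u \in \R^d$ consider the curve $\A(t) = \A^* + t\,u w^\top$. Because $\A^* w = 0$, the cross terms cancel and $\A(t)\A(t)^\top = \M + t^2 \|w\|^2\, u u^\top$; that is, the perturbation moves $\M$ precisely along the PSD rank-one direction $u u^\top$. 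Expanding, $R_N(\A(t)\A(t)^\top, \tau) = R_N(\M,\tau) + t^2 \|w\|^2\, u^\top G u + o(t^2)$, and local minimality of $\A^*$ forces the $t^2$-coefficient to be nonnegative, so $u^\top G u \geq 0$. As $u$ is arbitrary, $G \succeq 0$. Combined with complementarity and the automatic feasibility $\M \succeq 0$, this gives the KKT conditions and hence global optimality of $\M$ in~\ref{genMLE1new}.

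Finally, for the case $k = d$ I would observe that every local minimizer already falls into one of two tractable cases. If $\A^*$ has full rank $d$, then $G\A^* = 0$ with $\A^*$ invertible forces $G = 0 \succeq 0$ outright; if instead $\operatorname{rank}(\A^*) < d = k$, then $\A^*$ is rank deficient and the curve argument above applies. In either case $G \succeq 0$ together with complementarity delivers global optimality, so no second-order analysis beyond the one-parameter perturbation is needed. The upshot is that the essential leverage comes entirely from the null space of $\A^*$, which is exactly why rank deficiency (or the slack afforded by $k=d$) is indispensable and why spurious local minima can survive when $k < d$ and $\A^*$ has full rank $k$.
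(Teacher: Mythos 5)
Your proof is correct. Note that the paper itself does not prove this statement at all---it is imported verbatim from \citet{doi:10.1137/080731359} and used as a black box---so there is no internal proof to compare against; what you have reconstructed is essentially the standard Burer--Monteiro-style argument from that reference. The three ingredients are all in place and correctly executed: (i) the chain rule $\nabla_{\A}R_N(\A\A^\top,\tau)=2G\A$ plus stationarity gives $G\A^*=0$ and hence the complementarity $\langle G,\M\rangle=0$; (ii) the curve $\A(t)=\A^*+t\,u w^\top$ with $w$ in the null space of $\A^*$ moves $\M$ purely along the PSD direction $t^2\|w\|^2 uu^\top$, so local minimality forces $u^\top G u\ge 0$ for all $u$, i.e.\ $G\succeq 0$; and (iii) for a convex objective over the PSD cone, $G\succeq 0$, $\langle G,\M\rangle=0$, $\M\succeq 0$ (together with the unchanged $\tau$-conditions) imply global optimality via $R_N(\M')\ge R_N(\M)+\langle G,\M'\rangle\ge R_N(\M)$. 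The $k=d$ dichotomy (either $\A^*$ is rank deficient, or it is invertible and $G\A^*=0$ forces $G=0$) is also the right way to dispose of the second claim. Two minor points you could make explicit: the expansion in step (ii) only needs Fr\'echet differentiability of $R_N$ in $\M$, which holds for the smooth losses considered here; and the ``inherited verbatim'' treatment of $\tau$ deserves one line showing $g_\tau\ge 0$, $g_\tau\tau=0$, so that $g_\tau(\tau'-\tau)\ge 0$ for all feasible $\tau'\ge 0$. Neither is a gap, just polish.
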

So, we can use gradient descent for $k=d$ to find a local minimum $\A^*$ of Problem~\ref{genMLE1new}, then using this theorem we know that 
$\M^*=\A^*(\A^*)^\top$ is a global minimum of Problem~\ref{genMLE1}.
Another possible approach is to try some $k<d$, and if $\A^*$ is rank deficient, then again $\M^*=\A^*(\A^*)^\top$ is a global minimum of Problem~\ref{genMLE1}.
However even if we know that the solution for Problem~\ref{genMLE1new} has rank $r< k$, we might find $\A^*$ to be full rank. Indeed, setting $k>r$ does not imply that $\A^*(\A^*)^\top$ is the global minimum of Problem~\ref{genMLE1new}. Moreover, Problem~\ref{genMLE1new} is a generalization of Low-Rank Semi-definite Programming, which is known to be an NP-Hard problem~\citep{Anjos2002} (weighted Max-Cut is a special case of it), which indicates that solving it when $k<d$ might be a difficult task.

\subsection{How Well is $(\mathbf{M}^*, \tau^*)$ Approximated?}
Throughout this section, for simplicity of notation and w.l.o.g we assume that $\zeta F=1$.
In this section, we will see that, with high probability, we can approximate $(\M^*, \tau^*)$ with any given precision if $N$ is large enough. 
Recall that Theorem~\ref{uniqueR} establishes that $R(\M, \tau)$ is uniquely minimized at $(\M^*,\tau^*)$. 
Theorem~\ref{lossapprox} asserts that if $N$ is large enough, the value of the true loss on parameters minimizing the empirical loss, i.e. $R(\hat{\M}, \hat{\tau})$, is close to the minimum of the true loss, $R(\M^*,\tau^*)$. Although we can infer from this theorem that the error at the ground truth parameters  $(\M^*, \tau^*)$ is close to the error at $(\hat{\M}, \hat{\tau})$, it is still possible that $(\M^*, \tau^*)$ and $(\hat{\M}, \hat{\tau})$ are far from each other with respect to the metric 
${\rm d}((\M^*, \tau^*), (\hat{\M}, \hat{\tau})) = \|\M^*- \hat{\M}\|_2 + |\tau^* - \hat{\tau}|.$


Recall that the random variable $\z \in \mathbb{R}^d$ is generated from an unknown distribution with probability density function $f(\z)$ with bounded support. 
Let us define the $L_1(f)$-norm of $(\M,\tau)$ as 
$$\|(\M, \tau)\|_{L_1(f)} = \int f(\x)\left|\x^t\M\x - \tau\right|\d \x.$$
This is a norm  on the vector space $\{(\M,\tau)\colon \M\in \R^{d\times d} \text{ is symmetric }, \tau\in\R\}$. Note that if $\|(\M, \tau)\|_{L_1(f)} = 0$, then Lemma~\ref{M1=M2} along with the fact that $\mu_L\left(\{\x\colon f(\x)>0\}\right)>0$ implies $\M=\0$ and $\tau = 0$. The other required properties follow by standard reductions.  
This norm naturally induces the following $L_1(f)$-metric  
\begin{align}\label{l_1_f_metric}
\|(\M_1,\tau_1)-(\M_2,\tau_2)\|_{L_1(f)} & = \int f(\x)\left|(\x^t\M_1\x - \tau_1) - (\x^t\M_2\x - \tau_2)\right|\d \x.
\end{align}
\begin{theorem}\label{parapprox}
Assume that the noise model ${\rm Noise}(\eta)$ is simple and set $\omega = \min\limits_{|\eta|\leq \beta F} {\rm Noise}(\eta)$; see Table \ref{tbl:simplenoises}. Then for all $(\M,\tau)\in \mathcal{M}\times [0, B]$ 
$$R(\M,\tau) - R(\M^*,\tau^*) \geq  \frac12\omega^2\left(\|(\M,\tau) - (\M^*,\tau^*)\|_{L_1(f)}\right)^2.$$
\end{theorem}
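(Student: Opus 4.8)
The plan is to start from the identity already established inside the proof of Theorem~\ref{uniqueR}, namely that the excess risk is a Kullback--Leibler divergence,
\[
R(\M,\tau) - R(\M^*,\tau^*) = {\rm D_{KL}}\big(g(\z,\ell;\M^*,\tau^*)\,\|\,g(\z,\ell;\M,\tau)\big).
\]
Since the common factor $f(\z)$ cancels inside the log-ratio, this divergence factorizes over $\z$. Writing $u^*(\z) = \|\z\|^2_{\M^*}-\tau^*$ and $u(\z) = \|\z\|^2_{\M}-\tau$, and setting $p^*(\z) = \Phi_{\rm Noise}(u^*(\z))$ and $p(\z) = \Phi_{\rm Noise}(u(\z))$ for the respective probabilities that $\ell=+1$, the inner sum over $\ell\in\{-1,+1\}$ is exactly the binary KL divergence $d_{\rm KL}(p^*(\z)\,\|\,p(\z))$ between Bernoulli laws. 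Hence
\[
R(\M,\tau) - R(\M^*,\tau^*) = \int f(\z)\, d_{\rm KL}\big(p^*(\z)\,\|\,p(\z)\big)\, \d\z .
\]

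The second step applies Pinsker's inequality to the inner divergence, $d_{\rm KL}(p^*\|p)\geq 2(p^*-p)^2$. To convert the gap between CDF values into a gap between arguments I would invoke the mean value theorem: since $\Phi_{\rm Noise}' = {\rm Noise}$, there is a point $\xi$ between $u^*(\z)$ and $u(\z)$ with $p^*(\z)-p(\z) = {\rm Noise}(\xi)\,(u^*(\z)-u(\z))$. By \eqref{uppernormz} both $u^*(\z)$ and $u(\z)$ lie in $[-\beta F,\beta F]$, and because this interval is convex the intermediate $\xi$ does as well; thus ${\rm Noise}(\xi)\geq\omega>0$ by the definition of $\omega$ and the non-vanishing, continuity of a simple noise. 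This gives the pointwise bound $(p^*(\z)-p(\z))^2 \geq \omega^2\,(u^*(\z)-u(\z))^2$, and combining with the previous display yields
\[
R(\M,\tau)-R(\M^*,\tau^*) \geq 2\omega^2\int f(\z)\,\big(u^*(\z)-u(\z)\big)^2\, \d\z .
\]

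The final step passes from the second moment to the square of the first absolute moment. Because $f$ is a probability density, Jensen's inequality (equivalently Cauchy--Schwarz, using $\mathrm{Var}(|h|)\geq 0$) gives $\int f\,h^2 \geq \big(\int f\,|h|\big)^2$ with $h=u^*-u$; by the definition of the $L_1(f)$-metric in \eqref{l_1_f_metric} the right-hand side equals $\|(\M,\tau)-(\M^*,\tau^*)\|_{L_1(f)}^2$. This produces the claimed inequality, in fact with the stronger constant $2\omega^2$, which implies the stated $\tfrac12\omega^2$.

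I expect the only delicate point to be the mean-value step: one must verify that the intermediate argument $\xi$ genuinely stays inside $[-\beta F,\beta F]$ so that the uniform lower bound ${\rm Noise}(\xi)\geq\omega$ applies, and this is precisely where the bounded-support hypothesis enters through \eqref{uppernormz} (via the Data, Model, and Meta assumptions). The reduction to a binary KL divergence and the concluding Jensen estimate are routine once the factorization over $\z$ is recognized.
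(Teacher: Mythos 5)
Your proof is correct, and it in fact delivers the sharper constant $2\omega^2$ in place of the stated $\tfrac12\omega^2$. The essential ingredients coincide with the paper's: the identity $R(\M,\tau)-R(\M^*,\tau^*)={\rm D_{KL}}\bigl(g(\z,\ell;\M^*,\tau^*)\,\|\,g(\z,\ell;\M,\tau)\bigr)$, Pinsker's inequality, and the mean value theorem combined with \eqref{uppernormz} to guarantee that the intermediate point $\xi$ stays in $[-\beta F,\beta F]$ so that ${\rm Noise}(\xi)\ge\omega$ — the one step you flag as delicate is exactly the step the paper also justifies, in the same way. Where you genuinely diverge is the order of operations: the paper applies Pinsker once to the joint law on $(\z,\ell)$ and then converts the resulting total variation into the $L_1(f)$-metric via Scheff\'e's lemma, whereas you first factorize the KL over $\z$ into binary divergences $d_{\rm KL}(p^*(\z)\|p(\z))$ between Bernoulli conditionals, apply the two-point Pinsker bound $d_{\rm KL}(p^*\|p)\ge 2(p^*-p)^2$ pointwise, and then need one additional step (Jensen, $\int f h^2 \ge (\int f|h|)^2$) to descend from the resulting $L_2(f)$ quantity to the $L_1(f)$-metric. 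Your ordering buys two things: cleaner constant bookkeeping (the paper's normalization of $\|\cdot\|_{\rm TV}$ wavers between the half- and full-$L_1$ conventions, which is why it lands on the conservative $\tfrac12\omega^2$), and the strictly stronger intermediate bound $R(\M,\tau)-R(\M^*,\tau^*)\ge 2\omega^2\int f(\z)\bigl(u(\z)-u^*(\z)\bigr)^2\,\d\z$. The paper's ordering avoids Jensen and works directly with the $L_1$ quantity that the downstream corollaries consume. Both are valid; yours is, if anything, slightly tighter.
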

\begin{proof}
Define 
$\mu_{_{(\M,\tau)}}$ to be the measure induced by the probability density function 
$g(\x, \ell;\M, \tau) = f(\z)\Phi_{\rm Noise}(\ell(\|\z\|^2_\M-\tau)).$
Note that 
\begin{align*}
R(\M,\tau) - R(\M^*,\tau^*) 
& = \mathbb{E}_{\z,\ell\sim g(\z, \ell;\M^*, \tau^*)} \left(\log\frac{\Phi_{\rm Noise}(\ell(\|\z\|^2_{\M^*}-\tau^*))}{\Phi_{\rm Noise}(\ell(\|\z\|^2_\M-\tau))}\right)\\
& = \mathbb{E}_{\z,\ell\sim g(\z, \ell;\M^*, \tau^*)} \left(\log\frac{f(\z)\Phi_{\rm Noise}(\ell(\|\z\|^2_{\M^*}-\tau^*))}{f(\z)\Phi_{\rm Noise}(\ell(\|\z\|^2_\M-\tau))}\right)\\
&  = {\rm D_{KL}}(g(\x, \ell;\M^*, \tau^*) \| g(\x, \ell;\M, \tau))\\
& \geq   \frac{1}{2}\left(\left\|\mu_{_{(\M^*,\tau^*)}} - \mu_{_{(\M,\tau)}}\right\|_{\text{TV}}\right)^2
\end{align*}
where $\|\mu_{_{(\M^*,\tau^*)}} - \mu_{_{(\M,\tau)}}\|_{\text{TV}}$   is the total variation of the signed measure $\mu_{_{(\M^*,\tau^*)}} - \mu_{_{(\M,\tau)}}$ and
the last line follows from Pinsker's inequality (see~\cite{10.2307/2985711}). 
So to find a lower bound for $|R(\M,\tau) - R(\M^*,\tau^*)|$, it suffices to find a lower bound for $\|\mu_{_{(\M^*,\tau^*)}} - \mu_{_{(\M,\tau)}}\|_{\text{TV}}$.  
To this end,
\begin{align*}
\|\mu_{_{(\M^*,\tau^*)}} - \mu_{_{(\M,\tau)}}\|_{\text{TV}} 
& = \frac{1}{2}\int f(\z)\left|\Phi_{\rm Noise}\left(\ell(\|\z\|^2_\M-\tau)\right) - \Phi_{\rm Noise}\left(\ell(\|\z\|^2_{\M^*}-\tau^*)\right) \right|\d\z\d\ell\\
\text{\footnotesize(Mean value theorem)}\quad
& = \frac{1}{2}\int f(\z) \left|\Phi_{\rm Noise}'(\xi(\z, \ell))\left[\ell(\|\z\|^2_\M-\tau) - \ell(\|\z\|^2_{\M^*}-\tau^*)\right]\right|\d\z\d\ell\\
\text{\footnotesize($\Phi_{\rm Noise}'(\cdot) = {\rm Noise}(\cdot)$)}\quad
& = \frac{1}{2}\int f(\z) {\rm Noise}(\xi(\z, \ell))\left|(\|\z\|^2_\M-\tau) - (\|\z\|^2_{\M^*}-\tau^*)\right|\d\z\d\ell\\
& \geq \frac{1}{2}\left(\min_{|\xi|\leq \beta {F}} {\rm Noise}(\xi)\right)\times \int f(\z) \left|(\|\z\|^2_\M-\tau) - (\|\z\|^2_{\M^*}-\tau^*)\right|\d\z\d\ell\\
& =  \omega \int f(\z) \left|(\|\z\|^2_\M-\tau) - (\|\z\|^2_{\M^*}-\tau^*)\right|\d\z\\
& =  \omega \|(\M,\tau) - (\M^*,\tau^*)\|_{L_1(f)},
\end{align*}
where the first step follows from Scheffé's Lemma, which relates the total variation distance to the $L_1$-norm (see Lemma 2.1 in~\cite{bookAlexandre2008}), the second step 
is true since Mean Value Theorem implies that there is a 
$\xi(\z, \ell)$ between $\ell(\|\z\|^2_\M-\tau)$ and $\ell(\|\z\|^2_{\M^*}-\tau^*)$ such that 
\begin{align*}
    \Phi_{\rm Noise}'(\xi(\z, \ell))
\Big[\ell(\|\z\|^2_\M-\tau) - & \ell(\|\z\|^2_{\M^*}-\tau^*)\Big]\\
& = \Phi_{\rm Noise}\left(\ell(\|\z\|^2_\M-\tau)\right) - \Phi_{\rm Noise}\left(\ell(\|\z\|^2_{\M^*}-\tau^*)\right),
\end{align*}
and the fourth step is true since $|\xi(\z, \ell)|\leq \beta F$.
Therefore,  
\begin{align*}
R(\M,\tau) - R(\M^*,\tau^*) &\geq \frac{1}{2} \omega^2  \|(\M,\tau) - (\M^*,\tau^*)\|_{L_1(f)}^2.
\end{align*}
\end{proof}


The next corollary is an immediate consequence of Theorems ~\ref{lossapprox} and ~\ref{parapprox}. It indicates that, with high probability, $(\hat{\M},\hat{\tau})$ can
approximate $(\M^*,\tau^*)$ with any given precision with respect to the $L_1(f)$-metric  defined in~\ref{l_1_f_metric} provided 
that $N$ is large enough with respect to that precision. 
\begin{corollary}\label{corl1upperbound}
Assume that the noise model ${\rm Noise}(\eta)$ is simple. 
For $\varepsilon, \delta>0$, if $N>N\left(\frac12\varepsilon^2 \omega^2,\delta\right)$, then with probability at least $1-\delta$
$$\|(\hat{\M},\hat{\tau}) - (\M^*,\tau^*)\|_{L_1(f)}\leq \varepsilon.$$
\end{corollary}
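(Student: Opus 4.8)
The plan is to combine the pointwise quadratic lower bound of Theorem~\ref{parapprox} with the high-probability loss-gap guarantee of Theorem~\ref{lossapprox}; the former converts a small value of the true loss at $(\hat{\M},\hat{\tau})$ into a small $L_1(f)$-distance, while the latter supplies the smallness, so the corollary is really a two-line composition of results already in hand.

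First I would instantiate Theorem~\ref{parapprox} at the empirical minimizer $(\hat{\M},\hat{\tau})$. Because the optimization defining $(\hat{\M},\hat{\tau})$ ranges over $\mathcal{M}\times[0,B]$, this point satisfies the hypothesis of the theorem, giving the deterministic inequality
$$\tfrac12\omega^2\,\|(\hat{\M},\hat{\tau}) - (\M^*,\tau^*)\|_{L_1(f)}^2 \;\leq\; R(\hat{\M},\hat{\tau}) - R(\M^*,\tau^*).$$
Equivalently (reading the displayed inequality contrapositively), the bound $\|(\hat{\M},\hat{\tau}) - (\M^*,\tau^*)\|_{L_1(f)} \leq \varepsilon$ is guaranteed as soon as the true-loss gap on the right is at most $\tfrac12\varepsilon^2\omega^2$.

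Next I would invoke Theorem~\ref{lossapprox} to drive the loss gap below $\tfrac12\varepsilon^2\omega^2$ with high probability. Feeding the tolerance $\tfrac12\varepsilon^2\omega^2$ into the sample-complexity function and taking $N > N\bigl(\tfrac12\varepsilon^2\omega^2,\delta\bigr)$ ensures, on an event of probability at least $1-\delta$, that $R(\hat{\M},\hat{\tau}) - R(\M^*,\tau^*) \leq \tfrac12\varepsilon^2\omega^2$, which by the previous step yields $\|(\hat{\M},\hat{\tau}) - (\M^*,\tau^*)\|_{L_1(f)} \leq \varepsilon$ on the same event.

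There is no substantive obstacle, since both ingredients are already proved, but two points deserve a sentence of care. First, Theorem~\ref{parapprox} is a deterministic inequality holding for every admissible $(\M,\tau)$, so it applies verbatim to the \emph{random} minimizer $(\hat{\M},\hat{\tau})$ with no extra probabilistic reasoning; all of the $1-\delta$ confidence enters only through the uniform convergence underlying Theorem~\ref{lossapprox}. Second, one must keep careful track of the constants when threading $\tfrac12\varepsilon^2\omega^2$ through $N(\cdot,\delta)$, since the quadratic factor $\tfrac12\omega^2$ from Theorem~\ref{parapprox} and the factor relating the uniform-convergence tolerance to the realized loss gap in Theorem~\ref{lossapprox} must combine so that taking the square root at the end returns exactly $\varepsilon$ rather than a constant multiple of it.
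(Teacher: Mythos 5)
Your proposal is correct and is exactly the argument the paper intends: the corollary is stated there as an immediate consequence of Theorems~\ref{lossapprox} and~\ref{parapprox}, combined precisely as you do — the deterministic quadratic lower bound applied at the (random) empirical minimizer, plus the high-probability loss-gap bound. The factor-of-two bookkeeping you flag at the end (whether the tolerance fed to $N(\cdot,\delta)$ should be $\tfrac12\varepsilon^2\omega^2$ or $\tfrac14\varepsilon^2\omega^2$ given the halving inside Theorem~\ref{lossapprox}) is a real looseness, but it is present in the paper's own statement and is immaterial since $N_d$ is only specified up to constants.
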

The $L_1(f)$-metric is dependent on the distribution $f(\z)$, which is unavoidable. The following lemma bounds this norm under a general condition on $f$, and thus provides a more intuitive result (see Appendix~\ref{app:l_1_norm_spectral_norm} for the proof).

\begin{lemma}\label{uniformlowerL_1_f}
If $f(\z)\geq c>0$ for each $\z\in B^d(1)=\{\x\in\R^d\colon \|z\|_2\leq 1 \}$, then for all $(\M,\tau)\in \mathcal{M}\times [0, B]$ 
$$\|(\M, \tau) - (\M^*,\tau^*)\|_{L_1(f)}\geq  \frac{c\pi^{d/2}}{20\Gamma(d/2+1)} \left(\frac{1}{18}\right)^d \d((\M, \tau), (\M^*,\tau^*)).$$
In particular, if $f(\z)$ is uniform on the unit ball $B^d(1)$, then 
$$\|(\M, \tau) - (\M^*,\tau^*)\|_{L_1(f)}\geq  \frac{1}{20} \left(\frac{1}{18}\right)^d \d((\M,\tau), (\M^*,\tau^*)).$$
\end{lemma}

Combining Theorem~\ref{parapprox} and Lemma~\ref{uniformlowerL_1_f}, we obtain the following result.

\begin{theorem}\label{l_1_f_lowerbound}
For simplicity of notation, set 
$C(d) = \frac{c\pi^{d/2}}{\Gamma(d/2+1)}$. 
Let ${\rm Noise}(\eta)$ be a simple noise model and set $\omega = \min\limits_{|\eta|\leq \beta A} {\rm Noise}(\eta)$. 
If $f(\z)\geq c>0$ for each $\z\in B^d(1)$, then  
$$R(\M, \tau) - R(\M^*,\tau^*) \geq  \frac{\omega^2C(d)^2}{800\times 18^{2d}} \d^2\left((\M, \tau), (\M^*,\tau^*)\right).$$
\end{theorem}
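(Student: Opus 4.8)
The plan is to chain the two previously established estimates, so the entire argument is a substitution followed by careful constant bookkeeping. Theorem~\ref{parapprox} already furnishes a lower bound on the excess risk in terms of the distribution-dependent $L_1(f)$-metric, namely $R(\M,\tau) - R(\M^*,\tau^*) \geq \frac12\omega^2\|(\M,\tau)-(\M^*,\tau^*)\|_{L_1(f)}^2$, and Lemma~\ref{uniformlowerL_1_f} converts the $L_1(f)$-metric into the more intuitive spectral-norm metric $\d$ whenever $f\geq c$ on $B^d(1)$. All the genuine content lives in those two results (Pinsker's inequality together with Scheff\'e's lemma and the mean-value argument behind Theorem~\ref{parapprox}, and the geometric volume estimate behind Lemma~\ref{uniformlowerL_1_f}); the present statement only needs to compose them.

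First I would invoke Lemma~\ref{uniformlowerL_1_f} to obtain $\|(\M,\tau)-(\M^*,\tau^*)\|_{L_1(f)} \geq \frac{c\pi^{d/2}}{20\Gamma(d/2+1)}(1/18)^d\, \d((\M,\tau),(\M^*,\tau^*))$. Writing $C(d) = \frac{c\pi^{d/2}}{\Gamma(d/2+1)}$, the coefficient on the right is exactly $\frac{C(d)}{20}(1/18)^d$. Since both sides of this inequality are nonnegative (the $L_1(f)$-metric and $\d$ are nonnegative), I may square it without reversing the direction, which gives $\|(\M,\tau)-(\M^*,\tau^*)\|_{L_1(f)}^2 \geq \frac{C(d)^2}{400}\cdot\frac{1}{18^{2d}}\, \d^2((\M,\tau),(\M^*,\tau^*))$.

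Then I would substitute this into the bound of Theorem~\ref{parapprox} and multiply through by $\frac12\omega^2 > 0$, which preserves the inequality, yielding $R(\M,\tau)-R(\M^*,\tau^*) \geq \frac12\omega^2\cdot\frac{C(d)^2}{400\cdot 18^{2d}}\, \d^2((\M,\tau),(\M^*,\tau^*)) = \frac{\omega^2 C(d)^2}{800\cdot 18^{2d}}\, \d^2((\M,\tau),(\M^*,\tau^*))$, which is precisely the claimed constant, with $800 = 2\cdot 20^2$.

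Because the heavy lifting has already been done, there is no real obstacle here beyond two small checks. The first is that the domain hypotheses align: both inequalities are asserted for $(\M,\tau)\in\mathcal{M}\times[0,B]$ and under the assumption $f\geq c$ on the unit ball, so they apply simultaneously to the same points. The second is simply confirming that squaring is legitimate, which holds by the nonnegativity noted above, and tracking the factor $20^2 = 400$ against the stated $800$. I would therefore present the proof as a single displayed chain of inequalities substituting Lemma~\ref{uniformlowerL_1_f} into Theorem~\ref{parapprox} and collecting the constants.
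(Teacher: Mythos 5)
Your proof is correct and matches the paper's argument exactly: the paper also obtains this theorem by composing Lemma~\ref{uniformlowerL_1_f} with Theorem~\ref{parapprox}, and your constant bookkeeping ($\tfrac{1}{2}\cdot\tfrac{1}{20^2}=\tfrac{1}{800}$, with $18^d$ squared to $18^{2d}$) is right.
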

In particular, if $f(\z)$ is uniform on $B^d(1)$, then $C(d) = 1$ and thus
$$R(\M,\tau) - R(\M^*,\tau^*) \geq \frac{\omega^2}{800\times 18^{2d}} \d^2((\M, \tau), (\M^*,\tau^*)).$$
Now, combining Theorems~\ref{lossapprox} and~\ref{l_1_f_lowerbound}, we obtain the following result.
\begin{theorem}\label{Mtauapprox}
Let ${\rm Noise}(\eta)$ be a simple noise model and set $\omega = \min\limits_{|\eta|\leq \beta A} {\rm Noise}(\eta)$. 
Assume $f(\z)\geq c>0$ for each $\z\in B^d(1)$. 
For any given $\varepsilon, \delta >0$, if $N > N(\frac{\omega^2\varepsilon^2 C^2(d)}{800\times  18^{2d}}, \delta)$, then with probability at least $1-\delta$, any point $(\hat{\M}, \hat{\tau})$ minimizing $R_N(\M, \tau)$ satisfies 
$$\d((\hat{\M},\hat{\tau}), (\M^*,\tau^*)) < \varepsilon.$$
\end{theorem}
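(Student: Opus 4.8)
The plan is to prove Theorem~\ref{Mtauapprox} by sandwiching the loss gap $R(\hat{\M},\hat{\tau}) - R(\M^*,\tau^*)$ between two estimates that are already in hand. From above, the uniform-convergence result of Theorem~\ref{lossapprox} furnishes an \emph{upper} bound controlling how far the true loss at any empirical minimizer can exceed its optimum, while the quadratic result of Theorem~\ref{l_1_f_lowerbound} furnishes a \emph{lower} bound showing that the same gap dominates the squared metric distance $\d^2\big((\M,\tau),(\M^*,\tau^*)\big)$. Chaining these two inequalities in opposite directions is exactly what pins down the distance, and no new analytic input is required.

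Concretely, write $K = \frac{\omega^2 C(d)^2}{800\times 18^{2d}}$ for the constant appearing in Theorem~\ref{l_1_f_lowerbound}. Instantiating that theorem at the empirical minimizer $(\hat{\M},\hat{\tau})$ and dividing by $K$ gives
\[
\d^2\big((\hat{\M},\hat{\tau}),(\M^*,\tau^*)\big) \;\leq\; \frac{1}{K}\Big(R(\hat{\M},\hat{\tau}) - R(\M^*,\tau^*)\Big).
\]
The hypothesis $N > N\!\left(K\varepsilon^2,\delta\right)$ is (up to the constant discussed below) the sample-size requirement of Theorem~\ref{lossapprox} with loss tolerance $K\varepsilon^2$, so that theorem supplies, with probability at least $1-\delta$, the matching upper bound $R(\hat{\M},\hat{\tau}) - R(\M^*,\tau^*) < K\varepsilon^2$ for \emph{every} minimizer of $R_N$. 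Substituting into the displayed inequality yields $\d^2 < \varepsilon^2$, hence $\d\big((\hat{\M},\hat{\tau}),(\M^*,\tau^*)\big) < \varepsilon$, which is the claim. The confidence $1-\delta$ is inherited verbatim from the single application of Theorem~\ref{lossapprox}, so no extra union bound is needed.

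Since both ingredient theorems are already established, this final step is bookkeeping rather than new mathematics: the analytic labor has been offloaded — the lower bound to the Pinsker/Scheff\'e/mean-value-theorem chain behind Theorem~\ref{parapprox} together with its distribution-dependent strengthening in Lemma~\ref{uniformlowerL_1_f}, and the upper bound to the $\varepsilon$-net uniform-convergence estimate of Theorem~\ref{uniformconv1}. The only point that demands any care, which I would flag explicitly, is reconciling the constants inside $N(\cdot,\cdot)$: Theorem~\ref{lossapprox} needs its first argument to be half the target loss tolerance, so the honest requirement is $N > N\!\left(\tfrac12 K\varepsilon^2,\delta\right)$, and using the stated $N\!\left(K\varepsilon^2,\delta\right)$ instead only weakens the conclusion to $\d < \sqrt{2}\,\varepsilon$. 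Because $N$ scales like $1/\varepsilon^2$ in its first argument, this amounts to a fixed constant factor in the sample count and does not affect the asymptotic guarantee; one may absorb it either into the threshold or into the target precision, and I would note this rather than grind through it.
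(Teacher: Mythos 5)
Your proposal is correct and follows exactly the route the paper takes: the paper derives Theorem~\ref{Mtauapprox} by combining the loss-gap upper bound of Theorem~\ref{lossapprox} with the quadratic lower bound of Theorem~\ref{l_1_f_lowerbound}, which is precisely your sandwich argument. Your observation about the missing factor of $\tfrac12$ in the first argument of $N(\cdot,\delta)$ is a genuine (harmless) constant-level slip in the paper's statement, which the authors themselves wave off by noting they have not optimized constants.
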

We remark that we have not attempted to optimize the constants which appear in the sample complexity bound in Theorem \ref{Mtauapprox}.

\subsection{ Rank-Deficient Case}\label{SecRank-deficient}

No work prior to the present paper has proved that we can recover the matrix $\M^*$ when it is not full rank, 
 or bounds the effect of truncating the derived $\hat \M$ to a low-rank $\hat \M_k$.
Theorem~\ref{Mtauapprox} indicates that for any given $\varepsilon>0$, $\|\hat{\M}-\M^*\|_2 < \varepsilon$ and $|\hat{\tau}-\tau^*|< \eps$ if $N$ is large enough.    
This will guarantee that there will be a small eigenvalue of $\hat{\M}$ for every small eigenvalue of $\M^*$.
\begin{lemma}\label{lem:singular_bound}
Let $\M_1,\M_2$ be two given $n\times d$ matrices. 
If $\|\M_1-\M_2\|_2 < \varepsilon$, then $|\sigma_i(\M_1) - \sigma _{i}(\M_2)| < \varepsilon$, where 
$\sigma_i(\M_j)$ is the $i$-th singular value of matrix $\M_j$ for $j=1,2$.
\end{lemma}

\begin{proof}
Set $\M_r = \M_1-\M_2$. Because of the definition of the spectral norm, 
$$\max_{\x\neq\mathbf{0}}\frac{\|\M_r\x\|_2}{\|\x\|_2}< \varepsilon.$$
Write $\M_2 = \M_1 - \M_r$ and for each $i\in[d]$, notice 
\begin{align*}
\sigma _{i}(\M_2 )&=\min _{\dim(W)=n-i+1}\max _{\underset {\|\x\|_{2}=1}{x\in W}}\left\|(\M_1 - \M_r)\x\right\|_{2}\\
&\leq \min _{\dim(W)=n-i+1}\max _{\underset {\|\x\|_{2}=1}{x\in W}}\left(\|\M_1\x\|_{2} + \|\M_r\x\|_2\right)\\
& \leq \min _{\dim(W)=n-i+1}\max _{\underset {\|\x\|_{2}=1}{x\in W}}\left(\|\M_1\x\|_{2} + \varepsilon\right)\\
& = \varepsilon + \min _{\dim(W)=n-i+1}\max _{\underset {\|\x\|_{2}=1}{x\in W}}\|\M_1\x\|_{2}\\
& = \sigma_i(\M_1) + \varepsilon.
\end{align*} 
With a similar approach, we can prove that 
$\sigma_i(\M_1) \leq \sigma _{i}(\M_2) + \varepsilon$,
which implies that 
$$|\sigma_i(\M_1) - \sigma _{i}(\M_2)| < \varepsilon\quad\quad\quad \forall i\in[d],$$
which completes the proof.
\end{proof}
Using this lemma, we obtain the following lemma 
\begin{lemma}\label{lem:eig_perturb}
Let $\alpha_1\geq \alpha_2>0$ be given and assume the ground truth $\M^*$ has $k$ eigenvalues greater than or equal to $\alpha_1$
and $k'$ eigenvalues less than or equal to $\alpha_2-3\varepsilon$   with $k+k'\leq d$. 
If the noise model ${\rm Noise}(\eta)$ is simple and  $N > N(\frac{\omega^2\varepsilon^2 C^2(d)}{800\times  18^{2d}}, \delta)$, then, with probability at least $1-\delta$, 
$\|\hat{\M}-\M^*\|_2<\varepsilon$, 
the number of eigenvalues of $\hat{\M}$ which are greater than $\alpha_1-\varepsilon$ is at least $k$, and 
the number of eigenvalues of $\hat{\M}$ which are less than $\alpha_2-2\varepsilon$ is at least $k'$. 
\end{lemma}
\begin{proof}
   Assume that $(\hat{\M}, \hat{\tau})$ minimizes $R_N(\M, \tau)$. 
By Theorem~\ref{Mtauapprox}, with probability at least $1-\delta$, we have $\|\hat{\M}-\M^*\|_2 < \varepsilon$. 
Now, Lemma~\ref{lem:singular_bound} implies the result. 
\end{proof}

The next theorem gives us a better understanding of the eigenvalue perturbation of $\hat{\M}$ when $\M^*$ is not full rank. 
\begin{theorem}\label{thm:DR-main1}
Assume that the noise model ${\rm Noise}(\eta)$ is simple and $\M^*$ has rank $0< r < d$.
For a given $\varepsilon,\delta >0$, if $0< 3\varepsilon< \sigma_{r}(\M^*)$ 
and $N > N(\frac{\omega^2\varepsilon^2 C^2(d)}{800\times 18^{2d}}, \delta)$, then, with probability at least $1-\delta$, $\hat{\M}$ has exactly $d-r$ eigenvalues less than $\varepsilon$ and the rest $r$ eigenvalues are at least $\frac{2}{3}\sigma_{r}(\M^*)$. So if we truncate the eigenvalues of $\hat{\M}$ which are less than $\varepsilon$ to zero, we obtain $\hat{\M}_k$ of rank $r$ for which $\|\M^* -\hat{\M}_r\|_2 < 2\varepsilon$.
\end{theorem}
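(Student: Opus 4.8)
The plan is to reduce the whole statement to a single operator-norm estimate and then simply read off the spectrum. First I would invoke Theorem~\ref{Mtauapprox} with the stated sample size $N > N\!\left(\frac{\omega^2\varepsilon^2 C^2(d)}{800\times 18^{2d}}, \delta\right)$, which guarantees, with probability at least $1-\delta$, that the empirical minimizer $(\hat{\M},\hat{\tau})$ satisfies $\|\hat{\M}-\M^*\|_2 < \varepsilon$. (The companion threshold bound $|\hat{\tau}-\tau^*|<\varepsilon$ plays no role in this statement.) Everything below is conditioned on this event.

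Next I would convert the operator-norm bound into a per-eigenvalue bound, exactly as in the proof of the preceding lemma: since $\hat{\M}$ and $\M^*$ are both symmetric positive semi-definite, the Courant--Fischer min-max characterization (equivalently Weyl's inequality) gives $|\sigma_i(\hat{\M}) - \sigma_i(\M^*)| \leq \|\hat{\M}-\M^*\|_2 < \varepsilon$ for every $i \in [d]$. I would then locate the two eigenvalue clusters using $\operatorname{rank}(\M^*)=r$: the matrix $\M^*$ has its smallest $d-r$ eigenvalues equal to $0$ and its top $r$ eigenvalues at least $\sigma_r(\M^*)$. For each of the $d-r$ zero eigenvalues, the matched eigenvalue of $\hat{\M}$ satisfies $\sigma_i(\hat{\M}) < 0 + \varepsilon = \varepsilon < 2\varepsilon$, whereas for each of the top $r$ eigenvalues $\sigma_i(\hat{\M}) > \sigma_r(\M^*) - \varepsilon$. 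The hypothesis $\varepsilon < \frac14\sigma_r(\M^*)$ forces $\sigma_r(\M^*) - \varepsilon > \frac34\sigma_r(\M^*) > \frac12\sigma_r(\M^*) > 2\varepsilon$, so the cutoff $2\varepsilon$ cleanly separates the two clusters: exactly $d-r$ eigenvalues of $\hat{\M}$ fall below $2\varepsilon$ and the remaining $r$ lie at or above $\frac34\sigma_r(\M^*)$. Truncating the eigenvalues below $2\varepsilon$ to zero therefore discards precisely those $d-r$ directions and yields a $\hat{\M}_k$ of rank exactly $r$.

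Finally, for the approximation guarantee I would bound the truncation cost and apply the triangle inequality. Because $\hat{\M}$ is symmetric positive semi-definite, zeroing out its small eigenvalues changes it in operator norm by exactly the largest discarded eigenvalue; crucially, each discarded eigenvalue is a perturbation of a genuine zero of $\M^*$ and is therefore below $\varepsilon$ (not merely below the cutoff $2\varepsilon$), so $\|\hat{\M}-\hat{\M}_k\|_2 < \varepsilon$. Combining with the first step, $\|\M^* - \hat{\M}_k\|_2 \leq \|\M^* - \hat{\M}\|_2 + \|\hat{\M} - \hat{\M}_k\|_2 < \varepsilon + \varepsilon = 2\varepsilon$, which is the asserted bound.

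The only place that requires care---the main (if modest) obstacle---is this last factor. A naive triangle inequality that bounds the truncation cost by the cutoff $2\varepsilon$ would yield only the looser $\|\M^* - \hat{\M}_k\|_2 < 3\varepsilon$; to land at $2\varepsilon$ one must observe that the eigenvalues actually removed are each below $\varepsilon$ since they are perturbations of true zeros, so the truncation itself costs at most $\varepsilon$. Everything else is a direct consequence of the min-max eigenvalue estimate already in hand from the preceding lemma.
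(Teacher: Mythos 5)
Your proposal is correct and follows essentially the same route as the paper: the paper derives the theorem by applying its preceding lemma, whose proof is exactly the Courant--Fischer/Weyl per-eigenvalue bound $|\sigma_i(\hat{\M})-\sigma_i(\M^*)|<\varepsilon$ obtained from Theorem~\ref{Mtauapprox}, and then separates the two spectral clusters using $\varepsilon<\frac14\sigma_r(\M^*)$. Your explicit observation that the truncated eigenvalues are each below $\varepsilon$ (not merely below the $2\varepsilon$ cutoff), so the triangle inequality lands at $2\varepsilon$ rather than $3\varepsilon$, is a detail the paper leaves implicit but is the right way to close the argument.
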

\begin{proof}
    As Lemma~\ref{lem:eig_perturb} holds with $\alpha_1 = \sigma_{r}(\M^*),\alpha_2 = 3\varepsilon,k = r, k'= n-r$, we have 
    $\|\hat{\M}-\M^*\|_2 < \varepsilon$, 
    the number of eigenvalues of $\hat{\M}$ which are greater than $\alpha_1-\varepsilon > \frac{2}{3}\sigma_{r}(\M^*)$ is $r$, and the number of eigenvalues of $\hat{\M}$ which are less than $\alpha_2-2\varepsilon = \varepsilon$ is $n-r$. Now,
    $$\|\M^* -\hat{\M}_r\|_2 \leq \|\M^* -\hat{\M}\|_2 + \|\hat{\M} -\hat{\M}_r\|_2\leq 2\varepsilon$$
    completes the proof.
\end{proof}

As we are not given $\M^*$, in practice we are unaware of its rank or spectral properties. 
Since we only have access to $\hat{\M}$, 
the next theorem establishes that the loss function still converges under eigenvalue truncation when the corresponding eigenvalues of $\hat{\M}$ are small. 

\begin{theorem} \label{thm:DR-main2}
    Assume that the noise model ${\rm Noise}(\eta)$ is simple. 
    For a given $\varepsilon,\delta >0$, assume that $(\hat{\M},\hat{\tau})$ minimizes $R_N(\M,\tau)$ for $N>N\left(\varepsilon/2,\delta\right)$.
    Define threshold $\gamma$ such that $\hat{\M}$ has $d - k$ eigenvalues which are less than $\gamma$. Let $\hat{\M}_k$ be the rank $k$ matrix obtained from $\hat{\M}$ by setting these $d-k$ eigenvalues to zero. Then, with probability at least $1-\delta$, 
    $$0< R(\hat{\M}_k, \hat{\tau}) -  R(\M^*, \tau^*) < \gamma + \varepsilon.$$
\end{theorem}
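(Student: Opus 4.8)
The plan is to bound the quantity of interest by the triangle inequality, splitting it into an \emph{optimization gap} and a \emph{truncation gap}:
\[
R(\hat{\M}_k,\hat{\tau}) - R(\M^*,\tau^*) = \underbrace{\big(R(\hat{\M}_k,\hat{\tau}) - R(\hat{\M},\hat{\tau})\big)}_{\text{truncation gap}} + \underbrace{\big(R(\hat{\M},\hat{\tau}) - R(\M^*,\tau^*)\big)}_{\text{optimization gap}}.
\]
I would control each piece separately and then add them; the random event on which the whole bound holds is exactly the event supplied by Theorem~\ref{lossapprox}, so no union bound or fresh concentration argument is needed.

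For the optimization gap, since $(\hat{\M},\hat{\tau})$ minimizes $R_N$ and $N > N(\varepsilon/2,\delta)$, Theorem~\ref{lossapprox} gives directly that, with probability at least $1-\delta$, $0 < R(\hat{\M},\hat{\tau}) - R(\M^*,\tau^*) < \varepsilon$. This is the only place probability enters, and it accounts for the $\varepsilon$ summand and the $1-\delta$ confidence.

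For the truncation gap I would use the Lipschitz estimate of Lemma~\ref{upperR1}. First I would check feasibility: because $\hat{\M} = \hat{\A}\hat{\A}^t$ is p.s.d.\ and $\hat{\M}_k$ is obtained by zeroing eigenvalues, $\hat{\M}_k$ is still p.s.d.\ and $\|\hat{\M}_k\|_2 \le \|\hat{\M}\|_2 \le \beta$, so $\hat{\M}_k \in \mathcal{M}$ and $(\hat{\M}_k,\hat{\tau}) \in \mathcal{M}\times[0,B]$. Crucially, $\hat{\tau}$ is unchanged by the truncation, so the $|\tau_1-\tau_2|$ term in Lemma~\ref{upperR1} vanishes and only the matrix part of the bound survives, namely $|R(\hat{\M}_k,\hat{\tau}) - R(\hat{\M},\hat{\tau})| \le \zeta F\,\|\hat{\M}-\hat{\M}_k\|_2$ (this is the intermediate line in the proof of Lemma~\ref{upperR1}, tighter than the clean $\zeta(F+1)\mathrm{d}$ statement). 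Since $\hat{\M}-\hat{\M}_k$ is exactly the part of $\hat{\M}$ carried by the $d-k$ eigenvalues that were set to zero, its spectral norm equals the largest of those eigenvalues, which is less than $\gamma$ by hypothesis. Under the running normalization $\zeta F = 1$ this yields $|R(\hat{\M}_k,\hat{\tau}) - R(\hat{\M},\hat{\tau})| < \gamma$.

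Combining the two bounds gives $R(\hat{\M}_k,\hat{\tau}) - R(\M^*,\tau^*) < \gamma + \varepsilon$, while the strict lower bound $0 < R(\hat{\M}_k,\hat{\tau}) - R(\M^*,\tau^*)$ follows from Theorem~\ref{uniqueR}, which asserts $(\M^*,\tau^*)$ is the \emph{unique} minimizer of $R$ (so $R(\hat{\M}_k,\hat{\tau}) > R(\M^*,\tau^*)$ whenever $(\hat{\M}_k,\hat{\tau}) \ne (\M^*,\tau^*)$, which holds generically once any eigenvalue is truncated). I do not expect a genuine obstacle here, since every ingredient is already available; the one point needing care is getting the constant exactly right — recognizing that because $\hat{\tau}$ is held fixed only the $\zeta F\|\cdot\|_2$ part of the Lipschitz bound contributes, so that the normalization $\zeta F = 1$ produces precisely $\gamma$ rather than the looser $\zeta(F+1)\gamma$, and that feasibility $\hat{\M}_k \in \mathcal{M}$ must be verified before Lemma~\ref{upperR1} can be invoked.
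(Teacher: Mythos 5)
Your proposal is correct and follows essentially the same route as the paper's proof: the truncation gap is bounded by $\zeta F\gamma = \gamma$ via the intermediate Lipschitz estimate inside the proof of Lemma~\ref{upperR1} (exploiting that $\hat{\tau}$ is unchanged), and the optimization gap is bounded by $\varepsilon$ via Theorem~\ref{lossapprox}. Your added remarks on feasibility of $\hat{\M}_k$ and on the strict lower bound via Theorem~\ref{uniqueR} are sound details the paper leaves implicit.
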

\begin{proof}
As $\|\hat{\M}-\hat{\M}_k\|_2 < \gamma$, using the proof of the  first inequality in Lemma~\ref{upperR1}, we obtain 
$$|R(\hat{\M},\hat{\tau}) - R(\hat{\M}_k,\hat{\tau})| < \zeta F\gamma = \gamma,$$
since we have assumed $\zeta F=1$.  On the other hand, by Theorem~\ref{lossapprox}, we have
$$0< R(\hat{\M}, \hat{\tau}) -  R(\M^*, \tau^*) < \varepsilon.$$
Combining these two inequalities implies the desired inequality.
\end{proof}
Combining Theorems~\ref{l_1_f_lowerbound} and ~\ref{thm:DR-main2}, we also have 
$\d\left((\hat{\M}_k,\hat{\tau}), (\M^*,\tau^*)\right)< \frac{20\sqrt{2}\times 18^{d}}{\omega C(d)}( \gamma + \varepsilon).$
Thus, if $\gamma\leq \varepsilon\frac{\omega C(d)}{20\sqrt{2}\times 18^{d}}$, then 
$\d\left((\hat{\M}_k,\hat{\tau}), (\M^*,\tau^*)\right)< 2\varepsilon,$ implying $\|\hat{\M}_k-\M^*\|_2 < 2\varepsilon$. Thus spectral truncation of the empirical $\hat{\M}$ yields a good approximation of $\M^*$.

\subsection{Invariance to Changes in Unit of Input}
Clearly, the learned $\hat{\M}$ and $\hat{\tau}$ are dependent on the units of feature space.
So, as an interesting question, we can study the behavior of  $\hat{\M}$ and $\hat{\tau}$ if we change the units in the original feature space. 
Mainly, we want to prove that if we change the units in feature space, we do not need to solve a 
new optimization problem to learn a new $\hat{\M}$ and $\hat{\tau}$. Instead, we can recover these parameters from the already solved optimization problem.
Assume that we have a non-singular matrix $\U_{d\times d}$ which changes the units and rotates the space of features, and 
let $\z'_i = \U\z_i$ and $\ell_i' = \ell_i$. We 
want to solve the following optimization problem 
\begin{align}\label{genMLE1unitchange}
\min_{\M\succeq 0,\tau\geq0} R'_N(\M,\tau) \, ,
\end{align}
where 
\begin{align*}
R'_N(\M,\tau) & = -\frac 1N\sum_{i=1}^N \log \sigma\left(\ell'_i(\|\z'_i\|^2_{\M} - \tau)\right)\\
& =  -\frac 1N\sum_{i=1}^N \log \sigma\left(\ell_i(\|\z_i\|^2_{\U^\top\M\U} - \tau)\right)
\end{align*}
Since $\U$ is non-singular, Optimization Problem~\ref{genMLE1} is minimized at $\hat{\M}, \hat{\tau}$ if and only if Optimization Problem~\ref{genMLE1unitchange}
is minimized at $\hat{\M}' = {\U^{-1}}^\top \hat\M \U^{-1}, \hat{\tau}' = \hat\tau$.
Hence, the solution is invariant to the choice of units, given knowledge of the conversion.

\section{Experimental Results}\label{ExperimentalResults}
In Section~\ref{sec:conv}, we described the optimal loss functions for four different noise distributions (Logistic, Normal, Laplace, and Hyperbolic Secant).  As the Normal noise model ends up with a probit in the loss function, and the probit function has no closed-form formula, we will not use this model in the experiments. Also, since in practice we generally do not know the noise distribution, we evaluate the performance of each model under a variety of possible noise distributions, thus testing the robustness of each model to misspecification of the noise. 
We investigate how the resulting accuracy depends on the sample complexity, amount of noise, and noise misspecification. 

We start with synthetic data, described in Section \ref{data_generation}, so we can run precisely controlled experiments which are reported in Sections \ref{Logisticsubsection} and \ref{Ablation}.  
In Section~\ref{Comp-DML-eig}, we compare our model performance with DML-eig (\cite{ying2012distance}).
Then in Section \ref{sec:exp-real} we apply our methods to some real data experiments well suited to our proposed algorithm. 
All experimental results are reproducible; see the GitHub repository by~\cite{OurGithubRep} containing data and source codes. 

\subsection{Data Generation}\label{data_generation} 
We start with $d$ random positive real values $\lambda_1,\ldots,\lambda_d$ and then we randomly generate a $d\times d$ covariance  matrix $\Sigma$ whose eigenvalues are $\lambda_1,\ldots,\lambda_d$.  
To this end, we first randomly and uniformly generate an orthonormal matrix $\U_{d\times d}$ and then set $\Sigma = \U \D \U^\top$, where $\D$ is the $d\times d$ diagonal matrix whose diagonal entries are $\lambda_1,\ldots,\lambda_d$.
We then independently sample $2N$ points $\x_1, \y_1,\ldots, \x_N, \y_N$ from $\N(\boldsymbol{0},\Sigma)$ to generate $N$ pairs $(\x_i, \y_i)$ for $i=1\ldots, N$.
Next we select $d$ nonnegative random real values $\gamma_1,\gamma_2,\ldots,\gamma_d\geq 0$ as the eigenvalues of the ground truth $\M^*$,
and randomly generate $\M^*$ to be a random positive semi-definite matrix with eigenvalues 
$\gamma_1,\ldots,\gamma_d$, as we did for $\Sigma$. 
We have
$$\mathbb{E}\left(\|\x-\y\|_{M^*}^2\right) = 2{\rm tr}\left(\Sigma M^*\right)$$ 
provided that $\x,\y\sim h(\x)$, 
where $h(\x)$ is a pdf for which ${\rm Cov}_{X\sim h}(X) = \Sigma$.
We now choose $\tau^*>0$ not far from $\mathbb{E}\left(\|\x-\y\|_{M^*}^2\right)$ so that we obtain a sufficient number of pairs $(\x,\y)$ labeled as both Close and Far. More specifically, in Sections \ref{Logisticsubsection}-\ref{Comp-DML-eig}, we consider the following setting. 
\begin{itemize}
\item We assume that the rank of $\M^*_{10\times 10}$ is $5$ and randomly and uniformly generate $5$ nonzero eigenvalues from $[0,1]$.
With two-digit precision, we obtain $0.32, 0.89, 0.59, 0.13, 0.14$ as the $5$ nonzero eigenvalues of $\M^*$.

\item 
We randomly and uniformly select $10$ nonzero numbers from $(0,1]$ as the eigenvalues of $\Sigma$. 
With two digit precision, we obtain 
$$0.73,\ 0.7,\  0.68,\ 0.59,\  0.47,\ 0.45,\ 0.21,\ 0.19,\ 0.11,\ 0.04$$ as the eigenvalues of $\Sigma$.

\item As we are dealing with fixed random seeds, we obtain $\mathbb{E}\left(\|\x-\y\|_{M^*}^2\right) \approx 1.7$.

\item To obtain roughly balanced data, we set $\tau^* = 1.3$ and generate $20000$ data points. We split the data into $15000$ training and $5000$ test points.

\end{itemize}

We now describe the label generation. Note that in the theoretical formulation of the problem, we assume that the noisy labeling process depends on $\|x-y\|_{\M^*}^2$.
However one could also assume that the noise changes labels directly, independently of $\|x-y\|_{\M^*}^2$. We thus study both of the following settings empirically. 
\begin{itemize}
\item {\bf Noise affects the labeling through $\|x-y\|_{\M^*}^2$ (\ref{assumption0}).}\\
We consider a noise distribution ${\rm Noise}(0, s)$ with zero mean and scale parameter $s$ from the Logistic, Gaussian, Laplace, and Hyperbolic Secant distributions.
We then generate 
$\eta_1,\ldots, \eta_N\sim {\rm Noise}(0, s)$. 
For each pair $(\x_i, \y_i)$, we set $\ell_i = 1$ (``Far'') if $\|\x_i-\y_i\|_{\M^*}^2 + \eta_i \geq \tau^*$, and we set $\ell_i = -1$ (``Close'') if  
$\|\x_i-\y_i\|_{\M^*}^2 + \eta_i < \tau^*$.
We save these labels as $D_{\text{noisy}}$.
We also save the non-noisy labels to check the model's robustness against noise.
However, we do not use these labels during training. 
Indeed, for each pair $(\x_i, \y_i)$, we set $\ell^*_i = 1$ if $\|\x_i-\y_i\|_{\M^*}^2 \geq \tau^*$  and we set $\ell^*_i = -1$ if  
$\|\x_i-\y_i\|_{\M^*}^2 < \tau^*$.
We save these labels as $D^*$.

\item {\bf Noise directly affects the labeling (Noisy Labeling).}\\
Here we assume that the noise affects the labels directly by randomly flipping them. 
We first generate $D^*$ as described in the previous paragraph.
Then for each $i = 1,\ldots, N$, we flip a coin whose head chance is $p$. If the coin is tails, we set $\ell_i = \ell_i^*$; otherwise, we set 
$\ell_i \in \{-1,1\}$ randomly with the same chance. We save these labels as $D_{\text{noisy}}$.
In expectation, $p/2$ fraction of the labels are mislabeled in $D_{\text{noisy}}$. Although the amount of noise is the same as in the previous setting, i.e. the same number of mistakes are made, this regime is more challenging because in the first case the majority of mistakes occur close to the boundary, while the noisy labeling case results in ``big" mistakes. We thus  
expect performance to be worse. 
\end{itemize}

As a default, in both settings we set the noise parameter so that $10\%$ of the points are mislabeled.

\subsection{\bf Logistic Model with Different Noises}\label{Logisticsubsection}
Recall the Logistic distribution has density function $${\rm L}(x|\mu, s) = \frac{1}{4s}\mathrm{sech}^2\left(\frac{x-\mu}{2s}\right).$$
In Subsection~\ref{Logisticnoisemodel}, we saw that if the noise comes from a Logistic distribution, then 
$R_N(\M, \tau) = -\frac{1}{N}\sum_{i=1}^N \log \sigma\left(\ell_i(\|\z_i\|_{\M}^2 - \tau)\right)$ serves as an optimal proxy for our objective. 
In this section, we generate labels with different noise types including noisy labeling.  
We set the corresponding noise parameter so that the number of mistakes is roughly $10\%$, and then investigate how the Logistic loss function performs on all these types of noise.

We solve Optimization Problem~(\ref{genMLE1AAt}) using gradient descent and setting 
$learning\_reate = 0.5$, $d = k = 10$, $number\_of\_iterations = 30000$, and $learning\_decay = .95$.  We did this $20$ times for independent sample observations and summarized the average accuracy in Table~\ref{accuracies}. 
Note that the model uses only the noisy labels during training; the non-noisy labels are only used to evaluate the model. 

\begin{table}[h]
    \centering
\begin{tabular}{p{4cm}p{1.5cm}p{1.5cm}p{1.5cm}p{1.5cm}p{2.5cm} }
\toprule[2pt]
Noise type: 				& Logistic   	    & Gaussian       	    & Laplace             & HS                 & Noisy Labeling  \\
\midrule
train accuracy w/ noise 		&90.07\% 		    &89.50\% 	             &87.45\%		&85.48\%		&85.73\% \\
train accuracy w/o noise 		&98.88\%   	    &98.81\% 	 	     &98.62\% 		&98.53\%		&94.68\%	\\  
test accuracy w/ noise 		&89.85\% 		    &89.47\%	             &87.33\% 		&85.28\%		&85.57\%	\\ 
test accuracy w/o noise    		&98.83\% 		    &98.79\% 	 	    &98.57\% 		&98.47\%		&94.51\% \\
\bottomrule[2pt]
\end{tabular}
   \caption{Logistic model accuracy with different noise types (average over 20 trails).}
    \label{accuracies}
\end{table}
  
The Logistic model learned the labeling function very well. With Logistic noise (first column), it reaches about $90\%$ accuracy on noisy labels (as high as possible with 10\% misclassification), and almost $99\%$ accuracy with respect to the ground truth labels.  This holds on both the training and test data sets, which indicates that the model is not overfitting. We also observe that as the noise becomes more and more different from the Logistic model (Gaussian then Laplace then HS then Noisy Labeling), 
the accuracy gets worse.  This holds for both the noisy and ground truth labels, and on both the training and test data sets.  The deterioration is most prominent in the ``noisy labeling'' setting, where about $5\%$ accuracy is lost in comparison with the Logistic noise.  

Next, supporting Theorem~\ref{Mtauapprox}, we summarize the recovery of the model parameters $\M^* / \tau^*$ in Table~\ref{precisions}; we report the average recovery error based on $20$ independent sample observations.  We observe that the error is fairly small with a relative error of about $0.07$ for most noise types, but also that the error increases as the misspecification of the noise type increases.
For instance noisy labeling achieves only about $0.2$-relative error in a Frobenius or spectral sense.   

\begin{table}[h]
    \centering
\begin{tabular}{p{3.5cm}p{1.5cm}p{1.5cm}p{1.5cm} p{1.5cm} p{2.5cm}  }
\toprule[2pt]
Noise type	& Logistic  & Gaussian  & Laplace  &  HS  & Noisy Labeling  \\
\midrule
$\left\|\frac{\hat{M}}{\hat{\tau}} - \frac{M^*}{\tau^*}\right\|_2 / \left\|\frac{M^*}{\tau^*}\right\|_2$ 	& 0.068 	&0.071 	&0.074	&0.086 	&0.231 \\ 
$\|\frac{\hat{M}}{\hat{\tau}} - \frac{M^*}{\tau^*}\|_F / \|\frac{M^*}{\tau^*}\|_F$				& 0.070	&0.068  	&0.080 	&0.088	&0.214 \\
\bottomrule[2pt]
\end{tabular}
   \caption{Precisions for Logistic model with different noise types (averaged over 20 trials).}
    \label{precisions}
\end{table}

We plot the eigenvalues of $\M^*/\tau^*$ and $\hat{\M}/\hat{\tau}$ in Figure~\ref{fig:eigens}.  The large left figure shows the eigenvalue recovery by the Logistic, Laplace, and HS loss functions when the labels are generated from Logistic noise.  All do about the same, and capture all eigenvalues fairly well.   Four other plots are shown with other types of noise with similar results; the main exception being with noisy labeling, the top eigenvalue is predicted as much smaller than the true value. This experiment illustrates that although we focus on the Logistic loss function, performance is robust with respect to misspecification of the noise model.

\begin{figure}[h]     
    \includegraphics[width=0.49\textwidth]{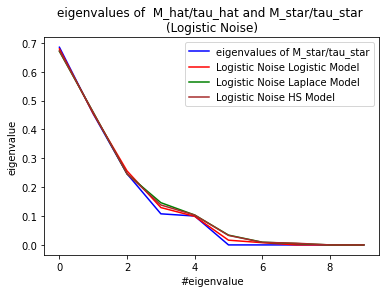}
\begin{minipage}[b]{0.5\linewidth}
   \includegraphics[width=0.49\textwidth]{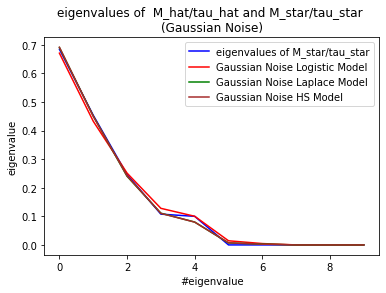}
    \includegraphics[width=0.49\textwidth]{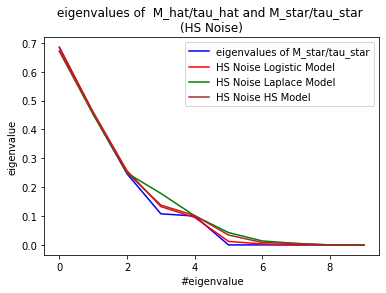} \\
    \includegraphics[width=0.49\textwidth]{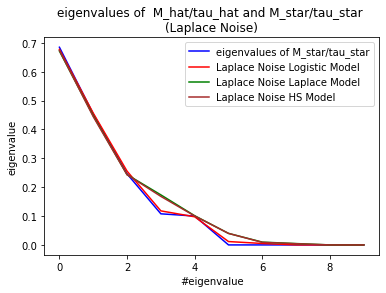}
    \includegraphics[width=0.49\textwidth]{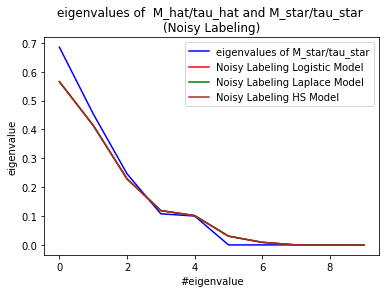}
    \end{minipage}
    \caption{\label{fig:eigens}Comparing eigenvalues of  $\hat{\M}/\hat{\tau}$ and $\M^*/\tau^*$.}
\end{figure}


In Figure~\ref{accuracyepochs}, we summarize the accuracy of the Logistic model for different noises as the number of iterations increases; the Logistic noise plot is highlighted on the left.  Each plot shows the progression of training on the train and test accuracy.  As before, there is little difference between test and train accuracy. The accuracy with the noise-induced labels plateaus near $90\%$ which is as good as expected with $10\%$ noise.  And the accuracy on the ground truth labels continues to increase (to about $99\%$) as training continues.  The results are similar for other noise types, with convergence to lower plateaus, as expected.  
\begin{figure}[h]
    \includegraphics[width=0.49\textwidth]{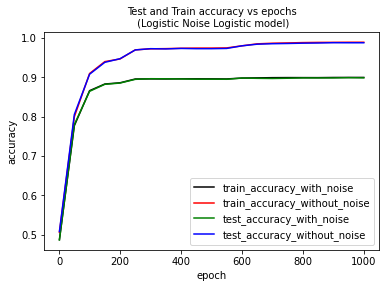}
    \begin{minipage}[b]{0.5\textwidth}
    \includegraphics[width=0.49\textwidth]{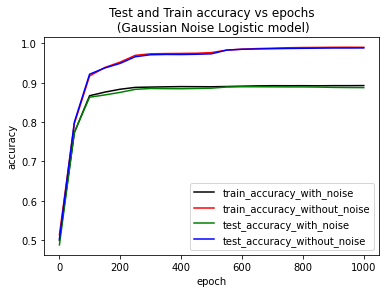}
    \includegraphics[width=0.49\textwidth]{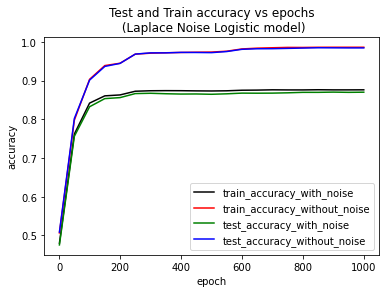}\\
    \includegraphics[width=0.49\textwidth]{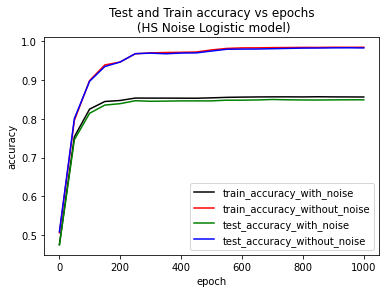}
    \includegraphics[width=0.49\textwidth]{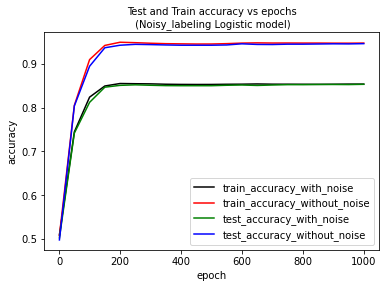}
    \end{minipage}
    \caption{Logistic model's accuracy VS epochs: Logistic noise, Gaussian noise, Laplace noise, HS noise, and Noisy labeling.}
    \label{accuracyepochs}
\end{figure}



To study the sample complexity behavior, we gradually increase the number of training samples and record the accuracy for each case in Figure \ref{Logistic2}.  Again the left figure illustrates the Logistic loss on labels generated with Logistic noise, plotting results for training and test data, with respect to the noisy and ground truth labels.  
Note that when the number of training points is too small (100 or less), the training accuracy is $1$ while the test accuracy is low; this indicates overfitting. 
However, when the number of training points increases to around $1000$, the overfitting problem vanishes, and the training and test accuracy start to align closely.  Between $5000$ and $10{,}000$ samples, they become indistinguishable in the plots.   Similar results hold for the other noise types, again with somewhat lower overall accuracy depending on how close the noise model is to the Logistic noise.  
\begin{figure}[h]
    \includegraphics[width=0.49\textwidth]{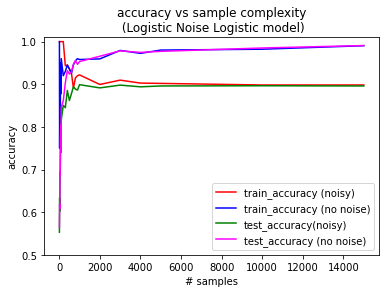}
    \begin{minipage}[b]{.5\textwidth}
    \includegraphics[width=0.49\textwidth]{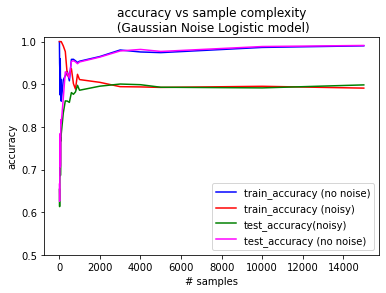}
    \includegraphics[width=0.49\textwidth]{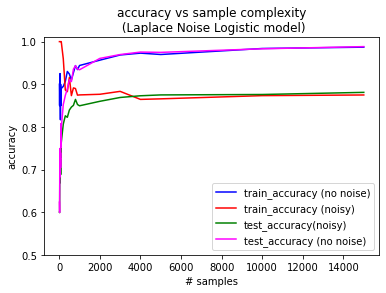}\\
    \includegraphics[width=0.49\textwidth]{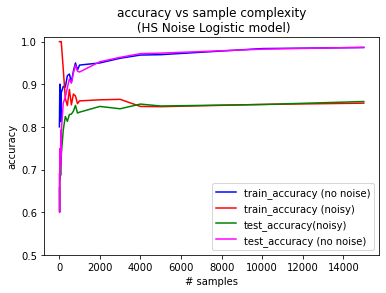}
    \includegraphics[width=0.49\textwidth]{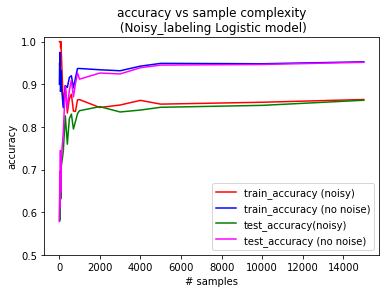}
    \end{minipage}
    \caption{Logistic model with different noises: accuracy VS sample complexity.}
    \label{Logistic2}
\end{figure}

\subsection{Further Ablation Study}\label{Ablation}
In the following two parts, we experimentally explore the robustness and the sample complexity of the Logistic model when faced with high Logistic noise.

\subsubsection{\bf How Much Noise can Break the Model}
Theoretically, we proved that the model can recover the ground truth parameters even if the labeling is noisy (under some assumptions). 
This result is also supported by the experimental evidence in the previous section when the noise causes $10\%$ mislabeling. 
In this section we increase the effect of the noise and check the model resistance. 
For a fixed number of training samples ($18000$ here), we increase the noise variance gradually and log the accuracy of the Logistic loss function when the noise also comes from a Logistic distribution.  In  Figure~\ref{fig:Noiseaccuracytest}, the $x$ axis shows the fraction of points that are mislabeled, which depends directly on the variance of the noise distribution, 
and the $y$ axis indicates accuracy. We generally observe that the model ignores the noise and recovers true labeling even when the noise is high. 
We can see that the train and test accuracies of the model for the noisy labels are aligned with the line $y= 1-x$, which is as expected. 
It indeed indicates that with $x$ amount of noise, the model cannot have better accuracy than $1-x$ on the noisy labels.  
However, for the ground truth labeling (described in the legend as ``no noise"), we observe that the model is pretty robust against noise, even when the amount of noise is pretty high.  For instance, for around $40\%$ mislabeling, we have around $95\%$ of accuracy for unseen data.  However, when the noise perturbs $45\%$ of the labels, it starts to collapse.  
When the noise disturbs $50\%$ of the labels, we might assume that random guessing would achieve the best accuracy,
but the model still achieves around $65\%$ accuracy for train and test points with respect to the ground truth labels. 
Even though we have 50\% mislabeling, the ``extreme" examples are correct, so there is more information than purely random labels.
 We will study this setting in the next paragraph.     

\begin{figure}[h]
    \centering
    \includegraphics[width=0.45\textwidth]{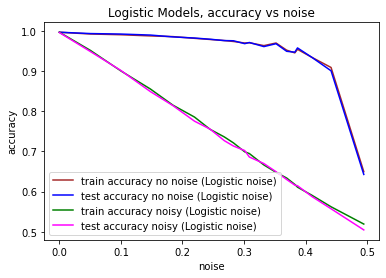}
    \caption{Logistic model with Logistic noises: Accuracy VS Noise.}
    \label{fig:Noiseaccuracytest}
\end{figure}

\subsubsection{\bf Sample Complexity in High Noise Setting}\label{noiseaffectcom}
Now we focus on the setting where the loss function and the noise are compatible.  
In other words, we only consider the Logistic model for Logistic noise, the Laplace model for Laplace noise, and the HS model for HS noise.
As explained in Section~\ref{sec:model}, the scale parameter $s$ for the noise distribution ${\rm Noise}(\eta | 0,s)$ directly determines the portion of mislabeling 
imposed on the data.  In Figure~\ref{fig:Noiseaccuracytest}, we observe that the accuracy drops when the noise gets more intense. 
However, in theory, we proved that each model could overcome any amount of noise perturbation. 
The noise scale parameter (variance) affects the sample complexity through the constants $\beta$ and $B$ (see Section~\ref{sec:model}:~\ref{assumption1} 
and the discussion after). In Theorems~\ref{lossapprox}~and~\ref{parapprox}, we proved that irrespective of the amount of noise, we can recover the ground truth parameters if the number of samples is sufficiently large. 
However in Figure~\ref{fig:Noiseaccuracytest}, we saw that if the Logistic noise changes around $50\%$ of the labels, then the test accuracy drops to $65\%$  
when we have $15000$ samples in the training set. Supported by the theoretical results, we should expect more and more accuracy 
if we increase the number of training points. 
To verify this, in this experiment we fix the amount of noise at $45\%$ and gradually increase the number of training points to $2\times 10^5$ samples; Figure~\ref{Noiseaccuracysamplecomplexity} reports the resulting accuracy.  
We observe that model accuracy with respect to the ground truth labels is approaching one.  With $2\times 10^5$ training samples, we have around $97\%$
accuracy on the test data. This observation adheres to our theoretical results about the recovery power of the method.

\begin{figure}[h]
    \centering
    \includegraphics[width=0.325\textwidth]{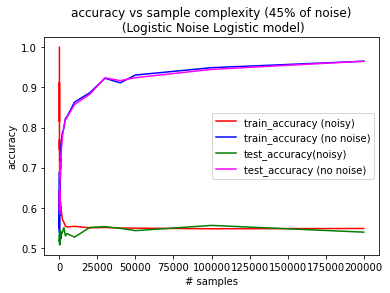}  
    \includegraphics[width=0.325\textwidth]{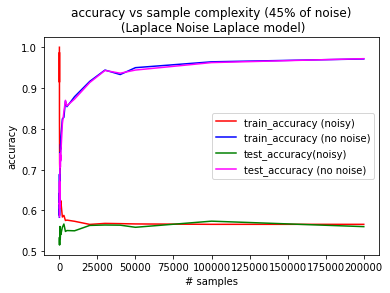}  
    \includegraphics[width=0.325\textwidth]{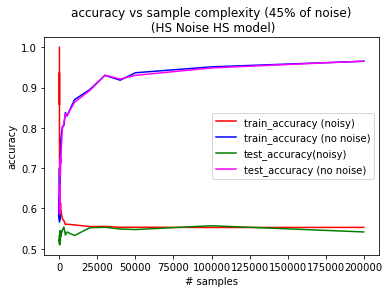}    
    \caption{Accuracy VS Sample complexity with $45\%$ noise when the model is aligned with the noise type.}
    \label{Noiseaccuracysamplecomplexity}
\end{figure}
For further experiments about the behavior of the loss function and a higher dimensional example, see Appendix \ref{Further_Experimental}.  


\subsection{Comparing to DML-eig}\label{Comp-DML-eig}
Inspired by a work of \cite{NIPS2002_c3e4035a}, \cite{ying2012distance}  developed an eigenvalue optimization framework (called DML-eig) for learning a Mahalanobis metric. They define an acceptable optimization problem and elegantly reduce it to minimizing the maximal eigenvalue of a symmetric matrix problem \citep{doi:10.1137/0609021,lewis_overton_1996}. In their formulation, given pairs of similar data points and pairs of dissimilar data points, the goal is to learn a Mahalanobis metric which preserves similarity and dissimilarity. 
More specific, they look for a p.s.d. matrix $\M^*$ to maximize the minimal squared distances between dissimilar pairs while the sum of squared distances between similar pairs is at most $1$. This setting is comparable to ours since we also look for a matrix $\M^*$ (and also a threshold $\tau^*$) to distinguish labeled far pairs of points from labeled close pairs of points. Their work did not study how noise can affect their model, nor if it could potentially recover a ``ground truth" model that generates the dissimilar and similar labels. 
However, we can compare our model to theirs empirically by passing our similarities and dissimilarities to their model and checking whether their model can handle noise or recover ground truth parameters.
We can also use the $\hat{\M}_{\text{eig}}$ learned by their model with the best $\hat{\tau}$ to see how well they can predict the labels. 

\begin{figure}[h]     
    \includegraphics[width=0.49\textwidth]{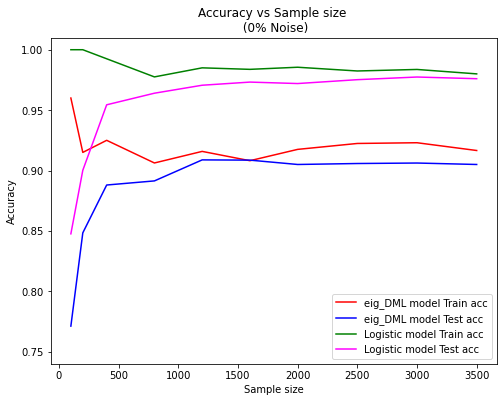}
   \includegraphics[width=0.49\textwidth]{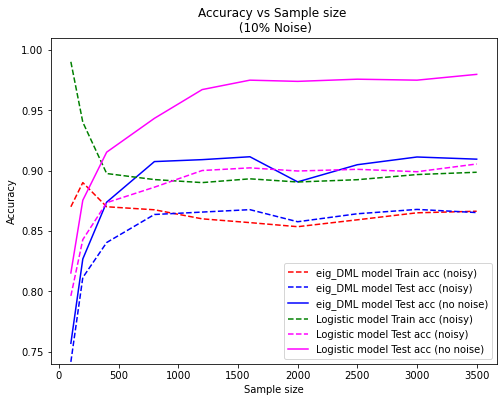}
\caption{\label{fig:Logistic_eig_DML_Accuracy_vs_Sample_size} Performance of DML-eig with/without noise vs sample complexity.}
\end{figure}

In Figure~\ref{fig:Logistic_eig_DML_Accuracy_vs_Sample_size}, we compare our model with DML-eig using the data generated in Section \ref{data_generation}. We set the noise level at $0\%$ or at $10\%$ and use the Logistic noise distribution. The number of training points are indicated as the sample size and the test size is always fixed at $5000$ points. For the noisy data, we evaluate performance with respect to both the noisy and ground truth labels (described as ``no noise").
We can see that our model outperforms DML-eig in each setting and for any sample size. 
Although the DML-eig model can neutralize the noise (the blue curves in the left and right images are about the same), its accuracy in the noiseless setting is only around $90\%$ at best. 
In comparison, our model quickly overcomes the noise and its accuracy approaches $100\%$ (shown as the magenta curves).  
In the noisy setting, we train on the noisy labels training data, and show results for both techniques.  Then we compare against the test data with respect to both the noisy (dashed curves) and ground truth labels, and report the results.  Our approach can recover the parameters even under noise (the magenta curve), and the noisy test data matches the noisy training data (so no over fitting).  
On the other hand, DML-eig only achieves about 90\% test accuracy with respect to the ground truth labels, and about 85\% train and test accuracy with respect to the noisy labels.

Next we observe that the DML-eig model is far less scalable than our approach.  This is because it takes several matrix multiplications and eigenvalue solves for each subgradient step.
In Figure~\ref{fig:time_complexity}, we compare its training time to our model.  While our model always takes less than a second on a sample size up to 3500, we see that DML-eig quickly surpasses the 20 minute mark (1200 seconds) and starts to become intractable.

\begin{figure}[h]    
    \centering
   \includegraphics[width=0.49\textwidth]{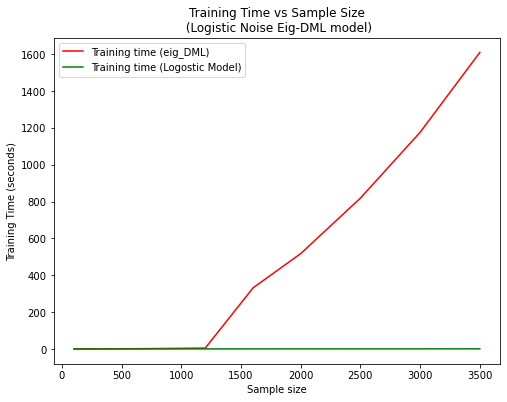}
    \caption{\label{fig:time_complexity} Training Time Complexity comparison between our Logistic and DML-eig.}
\end{figure}

If we provide our model with $10{,}000$ training points, after $17$ seconds, it can reach a test accuracy of $90\%$ with respect to the noisy labels and 99\% with respect to the ground truth labels, while DML-eig after 3 hours and 45 minutes cannot do better than $85\%$ and $90\%$ respectively.


    

 
\subsection{Real Data Experiments}
\label{sec:exp-real}
In this section we use our methods to solve some real data challenges which demand or benefit from a learned Mahalanobis distance.  The first one is from a physical simulation where we aim to find a reduced order model that needs to pass through a linear projection.  Thus the learned scaling must be linear, and is desired to be low rank.  
The second one is a consumer satisfaction story, and instead of using pairs of data points $\z= \x-\y$, it directly uses data points  $\z$ as the input, where satisfaction is predicted as a Mahalanobis distance from the origin.  In both cases we show our method achieves the objectives with high accuracy.

\subsubsection{\bf Finding a Low-Rank Metric for Equations of State Combustion Simulation}
\label{sec:combustion}

We first consider a data set generated by~\cite{spitfire} to represent instances of the equations of state of a thermo-chemical reacting system.  The goal is to model a combustion process to produce more efficient fuels and for easier CO2 capture and sequestration.  The data set consists of $30{,}000$ data points in $\R^{9}$, with $8$ dimensions capturing \emph{equations of state}, that is, the fractional composition of various chemicals (like O2, H, CO2) present in the system, and the $9$th coordinate being temperature~\citep{ZDYBAL2022}.  The goal is to learn a reduced order model (ROM) which should linearly project and scale~\citep{sutherland2009combustion} to a lower dimension, so one can then learn a PDE modeling the physics.  The PDE modeling on the ROM is only tractable through a linear transformation  \citep{sutherland2009combustion}, so non-linear approaches are not permitted.   And choosing an appropriate scaling is crucial since there is a difference of several orders of magnitude in coordinate ranges.  

We consider two ways of generating labels to apply our methodology.  The first is based on the best known engineered solution~\citep{ZDYBAL2022} which we attempt to replicate from the training part of a test/train split.  The second is via how close data points are on a critical simulation value called ``mixture fraction.''

For the best known engineered solution, we start with a provided ``ground truth" feature transformation matrix $\A^* \in \R^{9 \times 3}$ as found by \cite{ZDYBAL2022}.  We set $\M^* = \A^* (\A^*)^\top$.  
Given the original and the projected data, we choose a threshold and label the disjoint pairs of the original data points as far and close based on their projected distance and the threshold. Now, we have $15{,}000$ pairs of original points, and we divide them into a train set of size $10{,}000$ and a test set of size $5{,}000$. We only use the train pairs of data points and their labels to recover $\M^*$. 
Recovering the labels, we obtain $99.57\%$ and $99.51\%$  accuracy on the train and test points, respectively.
We have summarized the results in Tables~\ref{dim-reduction} and \ref{dim-reduction:truncated}.

\begin{table}[h]
    \centering
\begin{tabular}{lccc }
\hline
																		& Raw data   	& Normalized data  
																		& Covariance normalization  \\
\hline
Mean training accuracy 															& -- 		& 99.57 \%  &   99.81 \%  \\[.2cm] 
Mean test accuracy																& --	     	& 99.51\%   &   99.73\%  \\[.2cm]
$\left\|\frac{\hat{\M}}{\hat{\tau}} - \frac{\M^*}{\tau^*}\right\|_2 / \left\|\frac{\M^*}{\tau^*}\right\|_2$ 				& --	     	& 0.748  	   &   0.041 \\[.2cm]
$\left\|\frac{\hat{\M}}{\hat{\tau}} - \frac{\M^*}{\tau^*}\right\|_F / \left\|\frac{\M^*}{\tau^*}\right\|_F$      & --	     	& 0.742  	   &   0.046 \\[.2cm]
\hline 
\end{tabular}
   \caption{Mean accuracies and precisions for recovery using $\hat{\M}$ (average over $20$ trails).}
    \label{dim-reduction}
\end{table}

\paragraph{\it Normalizing data.}
The method does not converge when we input the raw data as provided to our solver; this is due to the different scaling of coordinates.  
The magnitude of some coordinates of the data are huge (temperature) and some are very small (CO2 percentage). 
So in the gradient descent, the learning rate is the same for all variables; the algorithm does not show convergence on the variable with very small values, even after a large number of steps. We can solve this problem by doing coordinate-wise normalization of the data, thus scaling all coordinates similarly, and this process is labeled \emph{Normalized data} in Table \ref{dim-reduction}. However, even in this setting, we do not have a good parameter recovery; see the last two rows of the second column of Table~\ref{dim-reduction} with about $0.75$ relative error in $\hat \M/\hat \tau$. 
To analyze this situation, we compute the estimated covariance matrix from the data, we observe that the variance of the data in some directions is almost zero.   Note that these directions are not along coordinates, they are a linear combination of the coordinates, so coordinate-wise normalization does not correct for it.    
Note that in the theoretical results, we have the parameter recovery only if the support of data distribution has a nonzero Lebesgue measure which is effectively not the case here. 

To address the remaining issue, note that if we change the behavior of $\A^*:\R^d\longrightarrow \R^k$ only for those directions that the data variance is zero, then the distance in the projected space remains almost always the same and thus the labeling remains the same as well. 
This implies that there is no unique $\M^*$ to recover.
However, if we rescale the data and $\A^*$ by $\sqrt{C}$ and the data by $(\sqrt{C})^{-1}$, where $C$ is the covariance matrix of the data, we correct for this issue.  
Indeed, if we set $\A^*_{new} = \sqrt{C}\A^*$ and $X_{new} = X_{normal} (\sqrt{C})^{-1}$, then, in this setting, $\M^*_{new} = \sqrt{C}\A^*(\A^*)^\top \sqrt{C}$ has a negligible effect in the  directions where $C$ has a small variance. As the gradient descent initiates $\M$ with very small entries and it will receive no substantial update in those directions, finally we recover $\M^*_{new}$ very well, see the third column of Table~\ref{dim-reduction} labeled \emph{Covariance normalization}.

\paragraph{\it Low-rank recovery.}  
Moreover, we can truncate the matrix $\hat{\M}$ to a rank-$k$ matrix $\hat{\M}_k$ by setting the last $d-k$ eigenvalues of $\hat{\M}$ to zero.  In Table~\ref{dim-reduction:truncated}, we summarize the average accuracies (over $20$ independent sample observations) for the case that we use $\hat{\M}_k$ instead of $\hat{\M}$ for $k=1,2,3,4$. We can see that the for $k =1 $, we still have $90\%$ accuracy, for $k=2$ we have $98\%$ accuracy.
For $k=3$, the $\hat{\M}_k$ and $\hat{\M}$ are indistinguishable.  Hence, we can recover the best Mahalanobis distance $\M^*$ up to very high classification accuracy and in parameters, even with a desired low-rank solution. We preprocess data here using Covariance normalization.  

\begin{table}[h]
    \centering
\begin{tabular}{p{4.5cm}p{2.cm}p{2cm} p{2cm} p{2cm}}
\hline
																			& $k=1$   		& $k=2$ 	   & $k=3$ 	& $k= 4$  \\
\hline
Mean training accuracy 															& 89.77\% 	& 97.87 \%  &   99.81 \%  &99.81 \% \\ 
Mean test accuracy																& 89.75\%	     	& 97.86\%   &   99.72\%  	&99.73\%\\
$\left\|\frac{\hat{\M}_k}{\hat{\tau}} - \frac{\M^*}{\tau^*}\right\|_2 / \left\|\frac{\M^*}{\tau^*}\right\|_2$ 		& 0.301	     	& 0.052 	   &   0.041 	&0.041\\
$\|\frac{\hat{\M}_k}{\hat{\tau}} - \frac{\M^*}{\tau^*}\|_F / \|\frac{\M^*}{\tau^*}\|_F$      				& 0.294	     	& 0.064  	   &   0.046 	&0.046\\
\hline 
\end{tabular}
   \caption{Mean accuracies and precisions for dimension reduction recovery using truncated $\hat{\M}_k$ for $k=1,2,3,4$ 
   (average over $20$ trails).}
    \label{dim-reduction:truncated}
\end{table}

\begin{paragraph}{\it Mixture fraction labeling.}
One of the features in the data is called the mixture fraction, which takes values between $0$ and $1$. We remove this feature from the data set and use it to label the points as far and close.
We first randomly extract $15{,}000$ disjoint pairs from the data. Then we compute the absolute of their mixture fraction difference, and based on an appropriate threshold, we assign the far and close label to the pairs. We choose a threshold $\tau^*$ such that the generated labels are balanced. We partition the data into $10{,}000$ training points and $5{,}000$ test points. Now, we try to see whether there is a matrix $\M^*$ and a threshold $\tau^*$ that can replicate this labeling. 
Indeed, we are able to find $\hat{\M}$ and $\hat{\tau}$ for which we have $99.68\%$ accuracy for the test set, which is basically as good as our recovery of the best engineered solution from \cite{ZDYBAL2022}. Notably, we again only have this accuracy if we normalize the data. If we work with the raw data directly, the best performance is about $70\%$. We summarize the corresponding results in Table~\ref{mixture-fraction-labeling}.

\begin{table}[h]
    \centering
\begin{tabular}{lcc}
\hline
				& Raw data   		& Normalized data    \\
\hline
Training accuracy 	& 71.34\% 	     	& 99.83 	 \\ 
Test accuracy		& 70.74\%	     		& 99.78  	 \\
\hline 
\end{tabular}
   \caption{Accuracy for the mixture fraction labeling.}
    \label{mixture-fraction-labeling}
\end{table}

\end{paragraph}

\subsubsection{\bf Airline Passenger Satisfaction}\label{airlinedata}
We consider a data set containing a training set of around $100{,}000$ points and a test set of around $26{,}000$ points from the
Airline Passenger Satisfaction \citep{Airlinesatis} data set. 
Each data point contains $24$ features; $20$ are real-valued, and $4$ are categorical features containing 
{\bf Gender:} Female, Male,  
{\bf Customer Type:} Loyal, disloyal,
{\bf Type of Travel:} Business, Personal, 
{\bf Class:} Eco, Eco Plus, Business.
As the first three are binary, we simply convert them to $0$ and $1$. The fourth one is also ordinal, and we convert it to $0$, $1$, and $2$, respectively. 
The satisfaction of each passenger is given as either ``Neutral or dissatisfied'' or ``Satisfied''.
We simply interpret ``Neutral or dissatisfied'' as 0 (${\rm Close}$) and ``Satisfied'' as 1 (${\rm Far}$). The train and test sets contain about 45\% of satisfied passengers which means that the data is almost balanced. 
Here we assume that passenger satisfaction is determined by a Mahalanobis norm (distance from the origin).
Indeed, we model the problem as if there is some matrix $\M^*$ and threshold $\tau^*$, so for each data point $\z$, $\|\z\|_{\M^*}^2 + {\rm Noise}_{\z} \geq \tau^*$, for some unknown unbiased noise ${\rm Noise}_{\z}$, if and only if the corresponding passenger is satisfied with the airline. We would like to find the generating $\M^*$ and $\tau^*$.
Compared to the theoretical setting, here we are given $\z=\x-\y$ or in other words, we can assume that $\y=\0$.
Using each of the Logistic, Laplace, and HS models ({\it $learning\_rate = 0.045$ and $number\_iterations = 20{,}000$}), we can recover the satisfaction labeling with $93\%$ accuracy on the training and test data. 
It is notable that, similar to previous observations, we obtain these accuracies only for normalized data. 
Other methods, such as random forest, can boost the labeling accuracy to $96\%$ (see~\citealp{Airlinesatis}).
However, our method gives us a feature transformation $\hat{A}$, which is more interpretable compared to something like the random forest. We can find the most important directions for $\hat{A}$ to see what combination of features has the greatest effect on the satisfaction level of passengers. 

 While this can be modeled as a standard classification task, we believe it is worthwhile to consider it as a linear distance metric learning one.  For one, it shows that we can achieve near-state-of-the-art accuracy against classic classification approaches.  But moreover, the real satisfaction can be thought of as a continuous value; then we can use an underlying metric 
to determine this satisfaction value by measuring the distance (based on that metric) of a data point from the origin. There is also a threshold such that if the satisfaction value is greater, then the passenger is happy with the service. We here assumed that such a metric and threshold can be modeled via our linear distance metric learning approach. More-so, that  threshold can vary among different persons, resulting in a noise labeling process which our formulation accounts for. Thus, the linear distance metric learning approach provides not just a classifier, but also an interpretable generative model that adds extra transparency to the task solution.

\subsubsection{\bf Breast Cancer Wisconsin (Diagnostic)}\label{Breast_Cancer} 
We here use the Breast Cancer Wisconsin Diagnostic Data Set which is publicly available through the University of California Irvine (UCI) Machine Learning Repository~\citep{misc_breast_cancer_17}.
This data has been created based on 669 samples collected from January 1989 to November 1991. There are 30 real-valued features assigned to each sample which are computed from a digitized image of a fine needle aspirate (FNA) of a breast mass. This data set contains 241 instances as malignant, and 458 instances as benign. So, this is the first unbalanced data set we explore (see Unbalanced labeling part in Appendix~\ref{Unbalanced_labeling} for an ablation study on the balance).
This data sets has been explored in the literature mostly as a classification task where the best performance has been observed by ensembles of ANN and SVM with a $100\%$ accuracy (see~\cite{Salod:2020aa}). Similar to the airline satisfaction task, we here assume that there is an underlying metric and a threshold such that comparing  the distance from the origin based on the underlying metric with the threshold determines whether a sample is benign or malignant. 
Modeling this setting via linear DML (Logistic noise, Logistic model), we shuffle and split the data into a train set of size 450 and a test set of size 119 and observed the train and test accuracies for all the samples, the benign samples, and the malignant samples separately. We did this experiment 10 times and recorded the (weighted) average accuracies as in Table~\ref{Breast_Cancer_accuracies}.
Even though the data set is unbalanced, we can see that the model does pretty well on each class of samples.
As an advantage over the ensembling classification approaches, we here have a feature transformation $\hat{A}$ which gives us a linear interpretable view of the features.  

\begin{table}[h]
    \centering
\begin{tabular}{lccc}
\hline
				    & All samples   	& Benign samples   &    malignant samples \\
\hline
Training accuracy 	& 99.08\% 	     	& 99.79 	       &       97.92\\ 
Test accuracy		& 97.47\%	     	& 98.41  	       &       95.88\\
\hline 
\end{tabular}
   \caption{Different accuracies for the Breast Cancer Wisconsin Diagnostic Data Set.}
    \label{Breast_Cancer_accuracies}
\end{table}

\section{Related Work on Linear DML} 
A considerable amount of works have been devoted to distance metric learning (\cite{barhillel05a,NGUYEN2017215,JMLR:v8:sugiyama07b,NIPS2006_dc6a7e65,4270149,XIANG20083600}). Although recent work has focused primarily on nonlinear distance metric learning, the works most relevant to this article are more classic linear approaches. The method we developed can be categorized as fully supervised linear metric learning in which the scalability is in terms of both number of examples and dimension.  Ours has bounded sample complexity is $O(\frac{d^2}{\eps^2} \log \frac{d}{\eps})$ in the dimension $d$ for $\eps$ error in our loss function, and in practice we run gradient descent, where each iteration is linear in the data size $N$ and has quadratic dependence for dimension $d$.   Related works do not provide sample complexity bounds. 
Note that our method learns a Mahalanobis distance (a positive semi-definite matrix $\M$) and a threshold $\tau$.  
We next compare with the most similar prior work. 

\subsection{Constrained Optimization Approaches}
\cite{NIPS2002_c3e4035a} provide the first method to learn a Mahalanobis distance 
by maximizing the sum of distances between points in the dissimilarity set ($\mathcal{D}$) under the constraint that the sum of squared distance between points in the similarity set ($\mathcal{S}$) is upper-bounded:  
\begin{align*}
    \max\limits_{\M \succeq 0} & \sum_{(\x_i,\x_j)\in \mathcal{D}} d_\M(x_i,\x_j) \\
    \text{s.t.}  & \sum_{(\x_i,\x_j)\in \mathcal{S}} d^2_\M(\x_i,\x_j)\leq 1.  
\end{align*}
It can be shown that this is a convex optimization which was solved by a proximal gradient ascent which, in each step, takes a gradient ascent step of the objective function, then projects back to the set of constraints, the cone of p.s.d. matrices. 
The projection to the p.s.d. cone uses full eigen-decomposition with $O(d^3)$ time complexity. So, as the dimension gets large it quickly gets intractable, while in our method we only deal with computing the 
Mahalanobis distance which takes $O(d^2)$ time complexity.


Note that the model proposed by \cite{NIPS2002_c3e4035a} takes into account all the information of similar and dissimilar pairs by aggregating all similarity constraints together as well as all dissimilarity constraints. In contrast, the DML-eig method proposed by \cite{ying2012distance} maximizes the minimum distance between dissimilar pairs instead of maximizing the sum of their distances.  They develop a subgradient ascent procedure to optimize their formulation which does not require a projection, but still uses an $O(d^3)$ eigendecomposition step.  
Intuitively, this model prioritizes separating dissimilar pairs over keeping similar pairs close. Experimentally, they show that their method outperforms \cite{NIPS2002_c3e4035a}. In Subsection~\ref{Comp-DML-eig} we experimentally compare our model with DML-eig showing that our approach works better in terms of performance, accuracy, and dealing with noise. 

\subsection{Unconstrained Optimization over $A$}
Taking into account the fact that any p.s.d matrix $\M$ can be decomposed into $\M = \A\A^\top$, \cite{NIPS2004_42fe8808} define the expected leave-one-out error of a stochastic nearest neighbor classifier in the projection space induced by $\A$. They defined the probability that $\x_i$ is similar (close) to $\x_j$ as 
\begin{equation*}
    p_{ij}(\A) = \frac{\exp\left({-\|\x_i-\x_j\|^2_{\M}}\right)}{
    \sum_{k\neq i}\exp\left({-\|\x_i-\x_k\|^2_{\M}}\right)
    },\quad p_{ii} = 0
\end{equation*}
and the probability that $\x_i$ is correctly classified as 
$$p_i = \sum_{\{j: (\x_i,\x_j)\in \mathcal{S}\}}p_{ij}.$$
To learn $\A$, they solve $\arg\max\limits_{\A} \sum_{i}p_i$. 
This is not a convex optimization and thus leads to a local maximum rather than a global one.

Using an MLE approach, \cite{5459197} define a Logistic loss function which matches a special case of our loss function when the noise comes from a Logistic distribution. They start with the assumption that the similarity is determined via a Bernoulli distribution whose success probability (being similar) is $\sigma(\tau-d_\M(x_i,x_j))$. Maximizing the likelihood under this probabilistic model then yields an approximation of $\M$ and $\tau$. This is exactly the approach we take in Subsection~\ref{Logisticnoisemodel}. However, instead of directly assuming such a model, we derive our model from certain noise assumptions and thus obtain the best theoretical model possible under these assumptions. We also prove that alternate (nonLogistic) formulations of the loss function are the MLE solution under different noise assumptions. Moreover, we conclude that each of these models is capable of parameter recovery with a sufficiently large sample size.

In each of the above settings, we can rewrite the optimization function in terms of $\A$, where $\M =\A\A^\top$. 
This makes the problem unconstrained which is a good advantage since it eliminates expensive sub-gradient projection and moreover, we can restrict $
\A$ to be rectangular inducing a low-rank $\M$. The main limitation of this formulation (even when $\A$ is a square matrix) is that it is non-convex and thus subject to local maxima. However, in our formulation, a result by \citet{doi:10.1137/080731359} resolves this issue.

\subsection{Empirical Loss Minimization Framework}
There are some other related works proposing an empirical loss minimization framework; for a thorough review see Chapter 8 of the book by \cite{Bellet2015d}. The prediction performance of the learned metric has been studied in some works such as \cite{Balcan2008,NIPS2009_a666587a,10.1007/978-3-642-23780-5_22,Bellet2012,BelletHS12,NECO_a_00556,CaoGuo2016}. 
Broadly speaking, for an unknown data probability distribution, they considered different meaningful  constrained cost functions  as the true risk and then they studied the convergence of the empirical risk to the true risk. 
As their setting is a bit different from ours, we recall their data assumptions here. 
Given labeled examples
$\{(\x_i,y_i)\colon i =1,\ldots, N\}$ where $\x_i\in\R^d$, $\|\x_i\|\leq F$ and $y_i\in \{1,\ldots,m\}$, they define a similar (close) pair set $\S$ and a dissimilar (far) pair set $\mathcal{D}$ as follows: 
$$\S = \{(\x_i,\x_j\colon y_i = y_j\}\quad\text{and}\quad \mathcal{D} = \{(\x_i,\x_j\colon y_i \neq y_j\}.$$
Note that the pairs here are not i.i.d. as in our formulation.
From $O(n)$ given data points, we can feed our cost function with only $n$ i.i.d. pairs while they can do it with $O(n^2)$ pairs. It might give the impression that these methods should allow for much stronger convergence results, but it is not the case. 

In the following we briefly review some of their results and then compare them to ours. The primary outcome of these findings is to demonstrate the overall reliability of a metric learning approach, rather than offering precise estimations of the generalization loss.

Following the idea of maximum margin classifiers, \cite{NIPS2009_a666587a} adapted the uniform stability framework (\cite{bousquetstability} and McDiarmid's inequality) to metric learning to obtain a generalization bound. They considered 
\begin{equation}\label{eq:lossold}
    C(\M) = \frac{2c}{n(n-1)}\sum_{i<j} L\left(y_{ij}(1-\|\x_i-\x_j\|^2_{\M})\right) + \frac{1}{2}\|\M\|^2_F
\end{equation}
as the regularized empirical cost function where $y_{ij} = 1$ if $(i,j)\in \S$ and $y_{ij} = -1$ if $(i,j)\in \mathcal{D}$ and $L(z)$ is a standard loss function which is $\zeta$-Lipschitz. 
As a main result, they proved that the empirical loss $C(\M)$ converges in probability measure to the true cost $\mathbb{E}_{\x,\x', y} L\left(y(1-\|\x-\x'\|^2_{\M})\right) + \frac{1}{2}\|\M\|^2_F$  with the sample complexity 
$O\left({s(d)^2\ln 1/\delta \over \eps^2}\right)$ 
where $s(d)$ comes from a constraint ${\rm trace}(\M)\leq s(d)$, where the hidden constant depends on $\zeta, F$.
Note that if $s(d)$ is considered a constant, this provides a sample complexity independent of dimension; but it may be that the best $\M$ minimizing the cost function does not follow this constraint. Comparing to our sample complexity (Theorem~\ref{uniformconv1}), 
both bounds share almost the same dominant part (assuming $d$ is fixed). 
However, they use $O(n^2)$ pairs in their optimization while we only use $n$; thus our optimization framework is more scalable in $n$. 
 In our setting, we can sample $O(n)$ disjoint pairs from the $O(n^2)$ pairs to simulate our required i.i.d. samples.   

\cite{WBDT} considered a similar loss (without the regularizer term).
They theoretically and empirically studied the convergence of the empirical risk in this setting for some appropriate choice of $L$, focusing on log loss and hinge loss. They proved that the empirical loss converges (in probability measure) to the corresponding true loss with the rate $O(1/\sqrt{n})$. This bound does not resolve the sample complexity since it is presented as a function of $n$ without working out the dependencies to the other parameters. They also concluded that the minimizer of the empirical loss $(\hat{\M}, \hat{\tau})$ converges in measure to the minimizer of the true loss $(\M^*, \tau^*)$ as $n$ goes to infinity, but does not identify specific conditions that must hold for this to be true, or finite sample bounds, as our work does.

In a sequel, \cite{6203595} examined a data assumption that closely resembled ours, along with empirical and accurate losses like our own. 
They demonstrated that the empirical loss converges to the true loss on the optimal model, with a sample complexity comparable to ours in terms of $\eps,\delta$, and $d$. 
However, it is worth noting that their study did not include noise in their setting, it was not proven that their optimization model is theoretically optimal under some generating model parameterized by $\M^*$ as is done in this article, 
and they did not investigate the recovery of ground truth parameters. 


Extending the robustness framework (\cite{XuMannor2012}) to metric learning, \cite{BELLET2015259} studied the deviation between the true and empirical loss.
The cost function they worked with is again similar to the one in Equation~\ref{eq:lossold}. They proved that the empirical loss converges in probability measure to the corresponding true loss.
We can simplify their result to the sample complexity bound  $O({s(d)+ \ln(1/\delta) \over \eps^2})$ where $s(d)$ can be exponentially large in terms of $d$.
It should be noted that the constant appearing in their data assumption impacts the constant in this sample complexity bound.
For a fixed $d$, comparing our complexity bound with theirs, 
we again can see that the dominant parts are almost the same while their algorithm operates on $n^2$ distances for a set of $n$ points. Afterwards \cite{NECO_a_00556,CaoGuo2016}, employing a different similarity learning optimization problem,
established a comparable error bound in terms of Rademacher average which is upper bounded by a function of data bound $F$. 
More precisely, under our data assumption, we can translate their error bound to a sample complexity bound which depends linearly on 
$d$ and has the dominant term  $O({\ln (1/\delta)\over \eps^2})$. It is similar to the other above-mentioned bounds.

In the following we briefly recall some advantages of our model compared to the above mentioned methods.  
\begin{itemize}
    
\item All methods discussed above model metric learning as an optimization problem which penalizes mismatches, including using constrained optimization on $\M$. However, they do not prove that these optimization problems are theoretically optimal under some generating model parameterized by $\M^*$ as is done in this article. 

\item These algorithms deal with a more expensive optimization problem which uses $O(n^2)$ pairs for $n$ points while our method uses only $O(n)$ pairs.  Despite the extra information, these algorithms do not lead to a more favorable scaling between the sample size $n$ and the error $\eps$ compared to our method.


\item These other methods do not provide recovery guarantees on generating parameters as we do.   This in turn allows us to provide low-rank approximation and dimensionality reduction results, since we can bound the effects of truncating small model parameters.   

\item Furthermore, since we derive the loss functions from various noise models, we can recover these model parameters even in the presence of (correctly modeled) noise.  

\end{itemize}

\subsection{Information Theoretic Modeling}
\cite{NGUYEN2017215} assume that 
$\z=\x-\y| \x,\y\in \mathcal{S}\sim \N(0,\Sigma_{\mathcal{S}})$ and $\z = \x-\y| \x,\y\in \mathcal{D}\sim \N(0,\Sigma_{\mathcal{D}})$. 
For any linear transformation $\x' = \A^\top\x$, this can be written 
$$\z' = \x'-\y'| \x,\y\in \mathcal{S}\sim \N(0,\A^\top\Sigma_{\mathcal{S}}\A) = f_\A(\z')$$ and 
$$\z' = \x'-\y'| \x,\y\in \mathcal{D}\sim \N(0,\A^\top\Sigma_{\mathcal{D}}\A) = g_\A(\z').$$
Their goal is to find $\A$ maximizing Jeffrey divergence, i.e., to solve the following optimization problem; 
\begin{equation*}
    \max\limits_{\A\in \R^{d\times k}} 
    {\rm KL}(f_\A, g_\A) + {\rm KL}(g_\A, f_\A), 
\end{equation*}
where ${\rm KL}$ stands here for Kullback-Leibler divergence. As both distributions $g_\A$ and  $f_\A$ are multivariate Gaussian, one can compute ${\rm KL}(f_A, g_A) + {\rm KL}(g_A, f_A)$ as a function of $\A$ and reduce the optimization problem to 
\begin{equation*}
    \max\limits_{\A\in \R^{d\times k}}
    {\rm tr}\left((\A^\top\Sigma_{\mathcal{S}}\A)^{-1}(\A^\top\Sigma_{\mathcal{D}}\A) +
    (\A^\top\Sigma_{\mathcal{D}}\A)^{-1}(\A^\top\Sigma_{\mathcal{S}}\A)\right)\, .
\end{equation*}
Setting the derivative of this objective function to zero, they present a solution to this non-convex optimization problem. In practice, they use MLE to replace $\Sigma_{\mathcal{S}}$ and $\Sigma_{\mathcal{D}}$ by their sample estimations, and since the formulation is not convex, the identified answer may be a local optimum.

\subsection{Low-Rank Metric Learning}
As explained in Section~\ref{intro}, linear distance metric learning approaches can be used for linear dimensionality reduction (see~\citealp{Fei-Jimeng2015}). When we are dealing with high dimensional space or a huge number of data points, solving 
Optimizations~\ref{genMLE1} (or Optimizations~\ref{genMLE1AAt} for $k=\Theta(d)$) will be costly. As reducing the matrix size in these optimizations reduces the complexity of search spaces, to resolve this issue, we can think of adding some low-rank constraint to these optimizations, e.g., ${\rm rank}(\M) \ll d$ in Optimizations~\ref{genMLE1} or $k\ll d$ in Optimizations~\ref{genMLE1AAt}. As a downside, it turns these optimization problems non-convex and thus the regular approaches such as gradient decent tend to fail easily (see~\citealp{Mu_2016,Zaiwen_Wotao2013}). 
\cite{NEURIPS2019_0d0fd7c6}, dealing with this challenge, introduced a fast low-rank metric learning method that worked well for several data points as benchmarks. Although they still have to face a non-convex optimization, as a standard way, they employed  manifold optimization methods \citep{Ankita_Saket_2015,balzano2010online,vandereycken2013low} to overcome the issue. As they worked with a different input setting as in linear DML, we are not able to present a direct comparison between their approach and ours. 



\section{Conclusion and Discussion}

In this paper we provide and analyze a simple and elegant method to solve the linear distance metric learning problem.  That is, given a set of iid pairs of points labeled close or far, our method learns a Mahalanobis distance that maintains these labels for some threshold. This arises when in data analysis one needs to learn how to compare various coordinates, which may be in different units, but not introduce non-linearity for reasons of interpretability, equation preservation, or maintaining linear structure.  
Our method reduces to unconstrained gradient descent, has a simple sample complexity bound, and shows convergence in a loss function and in parameter space.  In fact, this convergence holds even under noisy observations.  

Moreover, our method is the first approach to show that the learned Mahalanobis distance can be truncated to a low-rank model that can provably maintain the accuracy in the loss function and in parameters.  

Finally, we demonstrate that this method works empirically as well as the theory predicts.  We can obtain high accuracy (over $99\%$) and parameter recovery (less than $1.01$ multiplicative error) on noiseless and noisy data, and on synthetic and real data.  For instance, even if $45\%$ of the data is mislabeled we can with very high accuracy recover the true model parameters.  
Additionally we show this simple solution nearly matches the best engineered solutions on two real world data challenges.

\subsection{Limitations}
In our formulation of linear DML, we assume that we are given $N$ i.i.d. observations of pair $(\x_i, \y_i) \in \R^d \times \R^d$ and each pair is given a label $\ell_i \in \{{\rm Far, Close}\}$.  We discuss the Airline Passenger Satisfaction modeling problem in Section~\ref{airlinedata} where by always setting $\y_i$ to the origin, this is a direct and natural modeling: each observation generates one pair.  
However, in other real-world settings, we are only provided with $n$ observations with the ability to query if any pair has label ${\rm Far}$ or ${\rm Close}$. This can induce $\Theta(n^2)$ pairs, but they are not i.i.d. 
In practice, one would like to use all of these pairs, or perhaps restricted to some local constraints in a neighborhood. But since these are not i.i.d., our analysis does not apply.  
What we can do is randomly partition the data into $n/2$ pairs; now if the original $n$ observations are i.i.d., then these pairs are also i.i.d. from some distribution, and our analysis holds.  While this only generates $N = O(n)$ pairs, not $O(n^2)$, our analysis shows error converges at a rate of roughly $1/\sqrt{N}$.  This basically matches the convergence rate for known methods which use all $\Theta(n^2)$ pairs, and converge at a rate $1/\sqrt{n}$.  Managing such dependency, and potentially improving to a $1/n$ convergence rate, is a challenging direction to consider for future work.  

Another limitation in our modeling in the noisy setting, is that we prove strong convergence and parameter recovery, only when the loss function corresponds with the noise model generating the data.  In practice one does not always know the noise model, and in fact there may not be one well-defined noise model.  As a result, a user must chose a loss function.  
In this case, we generically recommend the Logistic loss.  It is widely used, has a closed form, the cdf is $1$-log-Lipschitz, and as discussed, it is fairly similar to other common noise models.  Moreover, we show empirically that it performs nearly as well under other noise types we considered as it does under Logistic noise.  

Finally, the analysis requires several clearly stated assumptions on the model \ref{assumption1} (that the parameters $\M^*$ and $\tau^*$ are bounded) and the data \ref{assumption2} (that the $2$-norm of the data is bounded).  If we do not have such bounds, then our analysis does not have a guarantee.  In fact, we observe in the experiment on Equations of State for Combustion Simulation in Section \ref{sec:combustion} that without properly normalizing, the algorithm performs poorly.  This normalization has the effect of properly shaping the data, and as a result the optimal model, so that it satisfies these assumptions.  In this case our algorithm converges quickly to small loss and recovers the near-optimal parameters.

\acks{JP thanks NSF IIS-1816149, CCF-2115677, and CDS\&E-1953350; AL thanks NSF DMS-2309570 and NSF DMS-2136198.}


\appendix

\section{Proof of Lemma~\ref{indicator}}\label{app:indicator}

\begin{proof}
To prove Lemma~\ref{indicator}, it suffices to show that $\|\z\|^2_{\M_1}=\|\z\|^2_{\M_2}$ for every $\z\in\mathbb{R}^d$.
W.l.o.g. we may assume that $\tau_1 = \tau_2 = 1$. 
For an arbitrary $\z_0\in\mathbb{R}^d$, 
consider the two following different cases. 
\begin{itemize}
\item $\|\z_0\|^2_{\M_1}  = 0$. In this case, if $\|\z_0\|^2_{\M_2}  > 0$, then for sufficiently large $\alpha$, we should have  $\|\alpha\z_0\|^2_{\M_2}  > 1$
while  $\|\alpha\z_0\|^2_{\M_1}  =0 < 1$ contradicting the fact that  
the two functions 
$\z\mapsto\1_{\left\{\|\z\|^2_{\M_1} - \tau_1\geq 0\right\}}$ and $\z\mapsto\1_{\left\{\|\z\|^2_{\M_2} - \tau_2\geq 0\right\}}$ agree for all $\z$.

\item  $\|\z_0\|^2_{\M_1}  >0$ and $\|\z_0\|^2_{\M_2}  >0$. 
Consider $\alpha , \beta >0$ such that $\|\alpha\z_0\|^2_{\M_1} = \|\beta\z_0\|^2_{\M_2} = 1$.
We need to show that $\alpha = \beta$. For a contradiction, consider $c>0$ such that $\alpha<c<\beta$.
Now it is clear that 
$$\1_{\left\{\z\colon\|\z\|^2_{\M_1} - \tau_1\geq 0\right\}}(c\z_0) = 1\quad\quad\text{and}\quad\quad\1_{\left\{\z\colon\|\z\|^2_{\M_2} - \tau_2\geq 0\right\}}
(c\z_0)= 0,$$
a contradiction. 
\end{itemize}
\end{proof}

\section{Basic Properties Related to Optimization Problem~\ref{genMLE1}}\label{Basicproperties}

In this part, we derive some basic properties related to Optimization Problem~\ref{genMLE1}. 
In particular, we will see that it is a convex optimization. Using this as the main result of this subsection,
we prove that the true loss is uniquely minimized at the ground truth parameters. 

First, note that since any convex combination of two p.s.d. matrices is still p.s.d, 
using triangle inequality for spectral norm, we conclude that 
the search space $\mathcal{M}\times [0, B]$ is convex. 
\begin{observation}\label{ob1}
For every fixed $\z\in\mathbb{R}^d$ and $\ell\in\{-1,1\}$,  $-\log \sigma(\ell(\|\z\|^2_\M - \tau))$ as a function of $(\M, \tau)$ is convex.
\end{observation}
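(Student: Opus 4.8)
The plan is to factor the target function as a composition $h \circ a$ and then invoke the standard fact that a convex function precomposed with an affine map is convex. Here I take the inner map to be the scalar-valued
\[
a(\M,\tau) = \ell\left(\|\z\|_\M^2 - \tau\right) = \ell\left(\z^t \M \z - \tau\right),
\]
and the outer map to be the one-dimensional function $h(x) = -\log\sigma(x)$. Since $\z$ and $\ell$ are fixed, convexity of $(\M,\tau)\mapsto h(a(\M,\tau))$ will follow once I establish (i) that $a$ is affine jointly in $(\M,\tau)$ and (ii) that $h$ is convex on $\R$.

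First I would verify that $a$ is affine. The quantity $\z^t\M\z$ is a fixed linear functional of the entries of $\M$ (for fixed $\z$), and $-\tau$ is linear in $\tau$; multiplying the whole expression by the constant $\ell\in\{-1,+1\}$ preserves affinity. Hence $a$ is affine jointly in the pair $(\M,\tau)$ over the convex domain $\mathcal{M}\times[0,B]$. Concretely, for any $\lambda\in[0,1]$ and points $(\M_1,\tau_1),(\M_2,\tau_2)$ one has $a(\lambda\M_1+(1-\lambda)\M_2,\ \lambda\tau_1+(1-\lambda)\tau_2) = \lambda\, a(\M_1,\tau_1) + (1-\lambda)\, a(\M_2,\tau_2)$, which is exactly the identity already recorded in the Convexity paragraph of Section~\ref{sec:model}.

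Second I would check that $h(x)=-\log\sigma(x)=\log\!\left(1+e^{-x}\right)$ is convex on $\R$. The cleanest route is differentiation: using $\sigma'(x)=\sigma(x)\bigl(1-\sigma(x)\bigr)$ gives $h'(x)=-\sigma'(x)/\sigma(x)=\sigma(x)-1$, so that $h''(x)=\sigma(x)\bigl(1-\sigma(x)\bigr)\geq 0$. Thus $h$ is convex. Combining the two ingredients, for any $\lambda\in[0,1]$ the affinity of $a$ turns the convex-combination argument of $h$ into a genuine convex combination of $a(\M_1,\tau_1)$ and $a(\M_2,\tau_2)$, and convexity of $h$ then yields
\[
h\bigl(a(\lambda\M_1+(1-\lambda)\M_2,\ \lambda\tau_1+(1-\lambda)\tau_2)\bigr) \leq \lambda\, h(a(\M_1,\tau_1)) + (1-\lambda)\, h(a(\M_2,\tau_2)),
\]
which is the claimed convexity.

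There is essentially no serious obstacle here; the result is a textbook convex-composition argument. The only point deserving a moment of care is confirming that $a$ is affine \emph{jointly} in the matrix variable $\M$ (not merely coordinatewise), which holds precisely because $\z^t\M\z$ is a single fixed linear functional of $\M$. I would also remark that this observation is the logistic-noise specialization of the more general statement for simple noise models, where convexity of $-\log\Phi_{\rm Noise}$ is assumed as part of the definition; the same composition-with-affine-map argument then applies verbatim with $h=-\log\Phi_{\rm Noise}$ in place of $-\log\sigma$.
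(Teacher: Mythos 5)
Your proof is correct and follows essentially the same route as the paper's: both arguments rest on the fact that $(\M,\tau)\mapsto\ell\bigl(\|\z\|_\M^2-\tau\bigr)$ is affine and that $-\log\sigma$ is convex (the paper states the equivalent concavity of $\log\sigma$ without proof, while you verify it via the second derivative $\sigma(1-\sigma)\geq 0$). The composition-with-affine-map conclusion is identical, and your closing remark about the generalization to simple noise models matches how the paper uses the convexity of $-\log\Phi_{\rm Noise}$ elsewhere.
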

\begin{proof}
To prove the assertion, it suffices to show that $\log \sigma(\ell(\|\z\|^2_\M - \tau))$ is concave. 
Consider arbitrary $\M_1,\M_2\in\mathcal{M}$ and $\tau_1,\tau_2\in [0,B]$. 
We remind that $\log\sigma(\cdot)$ is a concave function. Also, for each $\lambda\in[0,1]$, 
\begin{align*}
\|\z\|^2_{\lambda\M_1+(1-\lambda)\M_2} - (\lambda \tau_1 +(1-\lambda)\tau_2)
& = \lambda(\|\z\|^2_{\M_1}  -  \tau_1) +
(1-\lambda)(\|\z\|^2_{\M_2}  - \tau_2).
\end{align*}
Combining these two facts implies  
 $\log \sigma(\ell(\|\z\|^2_\M-b))$ as a function of $(\M, \tau)$ is concave, completing the proof.
\end{proof}
This observation immediately implies that both $R_N(\M,\tau)$ and $R(\M,\tau)$ are convex functions as well. 
Thus 
$$\min_{(\M,\tau)\in \mathcal{M}\times[0,B]}  R(\M,\tau)\quad\quad and \quad\quad \min_{(\M,\tau)\in \mathcal{M}\times[0,B]}  R_N(\M,\tau)$$ 
are both convex optimization problems. Although the following observation is quite technical, it is necessary for the proof of succeeding results.  

\begin{observation}\label{non-singular}
Let $S\subseteq \mathbb{R}^d_{\geq 0}$ be a set with zero Lebesgue measure. Then 
$S^{\frac{1}{2}} = \{\x\in \mathbb{R}^d\colon \x^2\in S\}$ has also zero Lebesgue measure. 
\end{observation}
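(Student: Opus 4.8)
The plan is to read $\x^2$ as the coordinate-wise square, i.e.\ to work with the map $\phi\colon\R^d\to\R^d_{\geq 0}$, $\phi(\x)=(x_1^2,\ldots,x_d^2)$, so that $S^{\frac12}=\phi^{-1}(S)$. Solving $x_i^2=s_i$ coordinate-by-coordinate shows
\[
S^{\frac12}=\bigcup_{\boldsymbol{\epsilon}\in\{-1,+1\}^d}\big\{(\epsilon_1\sqrt{s_1},\ldots,\epsilon_d\sqrt{s_d})\colon (s_1,\ldots,s_d)\in S\big\}.
\]
Each set in this finite union is a reflection (through coordinate hyperplanes) of $\psi(S)$, where $\psi(\y)=(\sqrt{y_1},\ldots,\sqrt{y_d})$ denotes the coordinate-wise square root on $\R^d_{\geq 0}$. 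Since reflections preserve Lebesgue measure and a finite union of null sets is null, it suffices to prove that $\psi(S)$ has zero Lebesgue measure.

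The obstacle is that $\psi$ is not Lipschitz near the coordinate hyperplanes, where $\tfrac{d}{dy}\sqrt{y}=\tfrac{1}{2\sqrt{y}}$ blows up, so I cannot simply invoke the standard fact that Lipschitz maps send null sets to null sets. I would first isolate the bad part. Write $S=S_0\cup S_+$, where $S_0=S\cap\bigcup_{i}\{\y\colon y_i=0\}$ is the portion meeting some coordinate hyperplane and $S_+=S\cap\{\y\colon y_i>0\ \forall i\}$. Because $\sqrt{0}=0$, the image $\psi(S_0)$ is contained in $\bigcup_i\{\x\colon x_i=0\}$, a finite union of hyperplanes, and is therefore null \emph{regardless} of the measure of $S$.

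For the good part I would stratify away from the boundary: set $S_n=S\cap\{\y\colon y_i\geq \tfrac1n\ \forall i\}$, so that $S_+=\bigcup_{n\geq1}S_n$. On the closed region $\{\y\colon y_i\geq\tfrac1n\ \forall i\}$ each coordinate of $\psi$ has derivative bounded by $\tfrac{\sqrt n}{2}$, hence $\psi$ restricted there is Lipschitz; by the standard estimate $\mu_L^*(f(E))\leq C^{d}\,\mu_L^*(E)$ for a Lipschitz map $f$ with constant $C$, together with the hypothesis $\mu_L(S)=0$, each $\psi(S_n)$ is null. A countable union then gives $\mu_L(\psi(S_+))=0$, and combining with $\psi(S_0)$ yields $\mu_L(\psi(S))=0$, which completes the argument. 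The only point requiring care is the boundary stratum $S_0$, and this is handled for free by the observation that it maps into a finite union of hyperplanes.
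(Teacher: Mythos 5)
Your proof is correct. It shares the paper's overall strategy — reduce to the coordinate-wise square root $\psi$ via the $2^d$ sign patterns, then exploit the fact that $\psi$ is Lipschitz away from the coordinate hyperplanes — but it handles the non-Lipschitz region differently, and more cleanly. The paper truncates to boxes $[0,L]^d$, carves out thin slabs $J_i$ of width $\varepsilon^2/(2^{2d}L^{d-1}d^2)$ around each hyperplane, bounds the volume of the \emph{image} of each slab by $\varepsilon/d$ through an explicit computation, and then lets $\varepsilon \to 0$; the truncation in $L$ is needed precisely so that these image-volume bounds are finite. You avoid both limiting parameters: you observe that the only points where Lipschitzness genuinely fails are those with a zero coordinate, that $\psi$ maps this set $S_0$ into the null set $\bigcup_i\{x_i=0\}$ regardless of the measure of $S$, and that the remainder $S_+$ is exhausted by the countably many sets $S_n=S\cap\{y_i\geq 1/n\ \forall i\}$ on each of which $\psi$ is globally Lipschitz with constant $\sqrt{n}/2$ (no upper truncation needed, since $\frac{d}{dy}\sqrt{y}$ is bounded on all of $[1/n,\infty)$). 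What your route buys is the elimination of the quantitative slab estimate and the double limit in $(L,\varepsilon)$, replacing them with two soft facts: a countable union of null sets is null, and Lipschitz maps preserve null sets. What the paper's route buys is essentially nothing extra here — it is just a more hands-on version of the same containment argument. Both are valid; yours is the one I would keep.
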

\begin{proof} 
For each $I =\{i_1,\ldots,i_k\}\subseteq [d]$, define 
$$S^{\frac{1}{2}} _I = \big\{\z\colon \exists\ \x\in S\  s.t.\ z_i = \sqrt{x_i} \text{ for } i\in[n]\setminus I \text{ and } z_i = -\sqrt{x_i} \text{ for } i\in I\big\}.$$
It is clear that $$S^{\frac{1}{2}} = \bigcup_{I\subseteq [d]}S^{\frac{1}{2}} _I.$$
Accordingly, it suffices to show $\mu(S^{\frac{1}{2}} _I) =0$ for each $I\subseteq [d]$. For an arbitrary $I\subseteq [d]$, define 
$f_i:\mathbb{R}^d_{\geq 0}\longrightarrow \mathbb{R}^d$ such that 
$$f_I(\x) = (l_1\sqrt{x_1},\ldots,l_d\sqrt{x_d})\quad\quad\text{ where} \quad\quad
l_i  = 
\left\{\begin{array}{rl}
-1 &  i\in[I]\\
 1 & i\not\in[I].
\end{array}\right.$$  
It is clear that $f_I$ is a one-to-one continuous  function and $f_I(S) = S^{\frac{1}{2}} _I$. 
To fulfill the proof, we prove that for each $L\in\mathbb{N}$, $\mu_L\left(S^{\frac{1}{2}} _I\cap \left[-\sqrt{L},\sqrt{L}\right]^d\right) = 0$. 
This implies that 
\begin{align*}
\mu_L\left(S^{\frac{1}{2}} _I\right) & = \mu_L\left(\bigcup_{L=1}^\infty\left(S^{\frac{1}{2}} _I\cap \left[-\sqrt{L},\sqrt{L}\right]^d\right)\right)\\
& \leq \sum_{L=1}^\infty \mu_L\left(S^{\frac{1}{2}} _I\cap \left[-\sqrt{L},\sqrt{L}\right]^d\right)\\
& = 0
\end{align*}
Let $L$ be a fixed positive integer and
consider an arbitrary $\varepsilon>0$. 
For each $i\in[d]$, consider the interval $$J_i = [0,L]\times\cdots\times\underbrace{[0, \frac{\varepsilon^2}{2^{2d}L^{d-1}d^2}]}_\text{the $i$-th interval}\times\cdots\times[0, L].$$
It is clear that the volume of each $J_i$ is $\frac{\varepsilon^2}{2^{2d}d^2}$. 
Also, for the image of each $J_i$, we have 
$$f_I(J_i) \subset [-\sqrt{L},\sqrt{L}]\times\cdots\times\underbrace{[-\frac{\varepsilon}{2^dL^{\frac{d-1}{2}}d}, \frac{\varepsilon}{2^dL^{\frac{d-1}{2}}d}]}_\text{the $i$-th interval}\times\cdots\times[-\sqrt{L},\sqrt{L}].$$
This implies that the volume of $f_I(J_i)$ is at most $\frac{\varepsilon}{d}$.
Since, $f_I(\cdot)$ is Lipschitz over $[0, L]\times\cdots\times[0, L]\setminus \bigcup_{i\in[d]}J_i$, the zero Lebesgue measure sets will be mapped to 
zero Lebesgue measure sets by $f_I$.
Therefore,  
$$\mu_L\left(f_I\left(S \cap ([0, L]^d\setminus \cup_{i\in[d]}J_i)\right)\right) = 0.$$ 
Consequently, 
\begin{align*}
\mu_L\left(S^{\frac{1}{2}} _I\cap \left[-\sqrt{L},\sqrt{L}\right]\right) & = \mu_L\left(f_I\left(S \cap ([0, L]^d\setminus \cup_{i\in[d]}J_i)\right)\right) 
+ \mu_L\left(f_I\left(S \cap (\cup_{i\in[d]}J_i)\right)\right)\\
& \leq  0 + d\frac{\varepsilon}{d} = \varepsilon.
\end{align*}
Since $\varepsilon$ is arbitrary, $\mu_L\left(S^{\frac{1}{2}} _I\cap \left[-\sqrt{L},\sqrt{L}\right]^d\right)=0$, completing the proof. 
\end{proof}
 
Using this observation, we can prove the following useful lemma.
\begin{lemma}\label{M1=M2}
Let $M_1, M_2$ be two symmetric matrices and $c\in \mathbb{R}$.
If there is $Q\subseteq \mathbb{R}^d$ such that $\mu_L(Q)>0$ (Lebesgue measure) and 
$$\|\x\|^2_{\M_1} = \|\x\|^2_{\M_2} +c\quad\quad\text{for each $\x\in Q$},$$ 
then $\M_1=\M_2$ and $c=0$.
\end{lemma}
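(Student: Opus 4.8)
The plan is to reduce the statement to a claim about the single symmetric matrix $\M = \M_1 - \M_2$. Since the matrices here are p.s.d. (hence symmetric), their difference $\M$ is symmetric, and the hypothesis rewrites as $\x^t \M \x = c$ for every $\x \in Q$, where $\mu_L(Q) > 0$. The goal then becomes: such an identity on a positive-measure set forces $\M = \0$ and $c = 0$. Once $\M = \0$ is established, $c = 0$ follows immediately by evaluating $\x^t\M\x = c$ at any single point of $Q$ (which is nonempty since $\mu_L(Q)>0$), so the crux is showing $\M = \0$.

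First I would diagonalize. Write $\M = \U \Lambda \U^t$ with $\U$ orthogonal and $\Lambda = \mathrm{diag}(\lambda_1, \ldots, \lambda_d)$, and substitute $\y = \U^t \x$. Because orthogonal maps preserve Lebesgue measure, the set $Q' = \U^t Q$ still satisfies $\mu_L(Q') > 0$, and the hypothesis becomes $\sum_{i=1}^d \lambda_i y_i^2 = c$ for all $\y \in Q'$. I would then argue by contradiction: suppose $\M \neq \0$, so that at least one eigenvalue $\lambda_i$ is nonzero.

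The key step, and the main obstacle, is to show that this quadratic constraint is incompatible with $Q'$ having positive measure. Consider the coordinatewise squaring map $\y \mapsto \y^2 = (y_1^2,\ldots,y_d^2)$. Every $\y \in Q'$ maps into the set $S = \{w \in \mathbb{R}^d_{\geq 0} : \sum_i \lambda_i w_i = c\}$. Since some $\lambda_i \neq 0$, the set $S$ lies in an affine hyperplane of $\mathbb{R}^d$ and therefore has $\mu_L(S) = 0$. Now I invoke Observation~\ref{non-singular}: the preimage $S^{\frac{1}{2}} = \{\y : \y^2 \in S\}$ also has Lebesgue measure zero. Since $Q' \subseteq S^{\frac{1}{2}}$, we get $\mu_L(Q') = 0$, contradicting $\mu_L(Q') > 0$. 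Hence $\M = \0$, i.e., $\M_1 = \M_2$, and then $c = 0$ as noted above.

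The only delicate part is the measure-theoretic reduction in the previous paragraph, which is exactly the content packaged in Observation~\ref{non-singular}; everything else (diagonalization, invariance of Lebesgue measure under orthogonal maps, the hyperplane having measure zero) is routine. An alternative, if one did not want to route through Observation~\ref{non-singular}, would be to note directly that $\x^t \M \x - c$ is a polynomial whose zero set has positive measure and is therefore the zero polynomial; but the diagonalization-plus-squaring route keeps the argument self-contained within the tools already developed in this appendix.
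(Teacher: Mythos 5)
Your proof is correct and follows essentially the same route as the paper's: diagonalize $\M_1-\M_2$, transfer $Q$ by the orthogonal map to $Q'$, observe that $Q'$ lands in $S^{\frac{1}{2}}$ for a hyperplane-contained set $S$, and invoke Observation~\ref{non-singular} to derive the measure contradiction. Your explicit handling of the $c=0$ conclusion and the noted polynomial-zero-set alternative are welcome additions but do not change the argument.
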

\begin{proof}
For each $\z\in Q$, we clearly have $\x_i^t (\M_1 - \M_2) \x_i= c$. 
Since $\M=\M_1 - \M_2$ is a real value symmetric metric, $\M = \U^t\D\U$ where $\U$ is an orthonormal matrix and $\D$ is a diagonal $d\times d$ matrix whose $(i,i)$ element is $a_i$. 
To prove the assertion, it suffices to show that $\D=0$. For a contradiction, suppose that $\D\neq 0$.
Set $Q' = \{\U\x: \x\in Q\}$ and $S = \left\{\y \in\mathbb{R}_{\geq 0}^{^d}: \langle\y,(a_1,\ldots,a_d)\rangle =c\right\}$. For each $\z=(z_1,\ldots,z_d)\in Q'$,  
\begin{align*}
\sum_{i=1}^d z_i^2a_i & = \z^t \D \z\\
&  = \x^t\U^t\D\U\x\\
&  = \x^t\M\x = c,
\end{align*}
which concludes that 
$$Q'\subseteq \left\{\z \in\mathbb{R}^d: \langle\z^2,(a_1,\ldots,a_d)\rangle =c\right\} = S^{\frac{1}{2}}.$$
Using Observation~\ref{non-singular}, since we know $\mu_L(S) = 0$ we obtain $\mu_L(S^{\frac{1}{2}}) = 0$ and 
thus $\mu_L(Q') =0$. On the other hand, $\mu_L(Q) = \mu_L(Q')$ which implies $\mu_L(Q)=0$, a contradiction. 
\end{proof}

\section{$\varepsilon$-Cover ($\varepsilon$-Net) for $\mathcal{M}\times[0,B]$}\label{epsiloncover}
As we are going to prove a uniform convergence theorem between empirical and true losses, we need to define the $\varepsilon$-cover of a metric space. 
For a metric space $(\mathcal{X}, d)$, an $\varepsilon$-cover $\mathcal{E}$ is a subset of $\mathcal{X}$ such that for each $x\in \mathcal{X}$, 
there is some $y\in \mathcal{E}$ with $d(x,y)<\varepsilon$. 
In the following, we introduce an $\varepsilon$-cover for $\mathcal{M}\times[0, B]$. 
However, we should first define a metric over $\mathcal{X} = \mathcal{M}\times[0, B]$. 
For each $(\M_1, \tau_1), (\M_2, \tau_2)\in \mathcal{X}$, we define 
$${\rm d}\left((\M_1, \tau_1), (\M_2, \tau_2)\right) = \|\M_1- \M_2\|_2 + |\tau_1 - \tau_2|.$$

\begin{lemma}   \label{lem:eps-cover}
    There exists an $\eps$-cover $\mathcal{E}$ of $\mathcal{M} \times [0,B]$ under metric ${\rm d}$ of size 
    \[
     \frac{B}{\eps} \left( \frac{4 \beta d \sqrt{d}}{\eps} \right)^{d^2}.
    \]
\end{lemma}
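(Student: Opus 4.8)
The plan is to exploit the additive (product) structure of the metric and cover the two factors $\Md$ and $[0,B]$ separately. Since ${\rm d}((\M_1,\tau_1),(\M_2,\tau_2)) = \|\M_1-\M_2\|_2 + |\tau_1-\tau_2|$ is the sum of a spectral-norm distance on the matrix part and the absolute distance on $\tau$, the Cartesian product of an $(\eps/2)$-cover of $\Md$ (in spectral norm) with an $(\eps/2)$-cover of $[0,B]$ is automatically an $\eps$-cover of $\Md\times[0,B]$. The interval $[0,B]$ is covered by equally spaced points with gap $\eps$, requiring at most $\lceil B/\eps\rceil$ points, which accounts for the prefactor $B/\eps$ in the claimed bound. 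The substantive task is then to produce an $(\eps/2)$-cover of $\Md$ of size $(4\beta d\sqrt d/\eps)^{d^2}$.

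For the matrix part I would first note that membership in $\Md$ forces every entry to be small: since $|M_{ij}| = |e_i^t \M e_j| \le \|\M\|_2 \le \beta$, the set $\Md$ sits inside the box $[-\beta,\beta]^{d\times d}$ within the space of symmetric matrices. I would then lay a regular grid on the $d(d+1)/2 \le d^2$ free (upper-triangular, including diagonal) entries, each ranging over $[-\beta,\beta]$ with spacing $\eta$, reflected across the diagonal so that every grid matrix is symmetric; this grid has at most $(2\beta/\eta)^{d^2}$ points. To turn an entrywise guarantee into the spectral-norm guarantee demanded by ${\rm d}$, I would use $\|\M-\M'\|_2 \le \|\M-\M'\|_F = \sqrt{\sum_{ij}(M_{ij}-M'_{ij})^2}$, which is at most $d\eta$ when each entry of $\M$ is within $\eta$ of the grid. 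Choosing $\eta = \eps/(2 d^{3/2})$ makes the spectral error at most $\eps/(2\sqrt d) \le \eps/2$ and turns the per-entry count $2\beta/\eta$ into exactly $4\beta d\sqrt d/\eps$, yielding the stated $(4\beta d\sqrt d/\eps)^{d^2}$. The bound is deliberately not tight: a direct volumetric estimate over the symmetric spectral ball would save the factor $\sqrt d$, but the entrywise grid is the most transparent construction and leaves useful slack.

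The one genuine subtlety, and the step I expect to be the main obstacle, is that the definition of an $\eps$-cover in Appendix~\ref{epsiloncover} requires $\mathcal{E}$ to be a subset of the space itself, whereas a symmetric grid matrix need not be positive semidefinite and hence need not lie in $\Md$. I would resolve this by the standard representative-in-cell device: for each grid cell whose interior meets $\Md$, select a single representative point of $\Md$ lying in that cell and discard the other grid points. The number of retained points is still bounded by the grid count, and every $\M\in\Md$ now lies in the same cell as its representative, so it is within at most twice the cell radius of a point of $\Md$. The extra factor of two is comfortably absorbed by the $\sqrt d$ of slack built into the choice of $\eta$ (shrinking $\eta$ by a fixed constant for the smallest $d$ changes only constants), so the count is unaffected. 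Multiplying the matrix count $(4\beta d\sqrt d/\eps)^{d^2}$ by the interval count $B/\eps$ gives $\frac{B}{\eps}(4\beta d\sqrt d/\eps)^{d^2}$ with $\mathcal{E}\subseteq \Md\times[0,B]$, as required. An equally clean alternative is to take a maximal $(\eps/2)$-separated subset (packing) of $\Md\times[0,B]$, which is automatically a cover contained in the space and whose cardinality is bounded by the same grid estimate.
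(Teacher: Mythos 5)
Your proof is correct and follows essentially the same route as the paper's: a product of $\eps/2$-covers under the additive metric ${\rm d}$, with the matrix factor covered by a grid of cardinality $(4\beta d\sqrt d/\eps)^{d^2}$ --- the paper obtains the identical count by placing $\Md$ inside a Frobenius ball of radius $\beta\sqrt d$ and citing the standard $k$-dimensional ball-covering bound with $k=d^2$, while you place it inside the entrywise box $[-\beta,\beta]^{d\times d}$ and grid it directly, which is the same volumetric argument in a different wrapper. Your additional care in replacing grid points by representatives actually lying in $\Md$ addresses a point the paper's proof silently skips (its cover centers need not be positive semi-definite either), and your built-in $\sqrt d$ slack absorbs that adjustment without changing the stated count.
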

\begin{proof}
The inequality  
$$\|\M\|_F \leq  \sqrt{d}\|\M\|_2\leq \beta \sqrt{d}$$ 
indicates that $\mathcal{M}$ is a subset of the $d^2$ dimensional Euclidean ball of the radius $\beta \sqrt{d}$ centered at the origin, i.e.,
$$\mathcal{M}=\left\{\M_{d\times d}\colon \M \text{ is p.s.d. and } \|\M\|_2 \leq \beta\right\}
\subseteq \{\M\in \mathbb{R}^{d^2}\colon \|\M\|_F\leq \beta \sqrt{d}\}.$$ 
It is known that a $k$-dimentional Euclidean ball of radius $r$ can be covered by at most $\left(\frac{2r\sqrt{k}}{\varepsilon}\right)^k$ number of balls of radius $\varepsilon$. 
So, $\mathcal{M}$ has an $\frac{\varepsilon}{2}$-cover $\mathcal{E}_2$ of size at most
$\left(\frac{4\beta d\sqrt{d}}{\varepsilon}\right)^{d^2}$.
As an $\frac{\varepsilon}{2}$-cover $\mathcal{E}_2$ for $[0, B]$ (with respect to $L_1$-norm), we can partition $[0, B]$ into $\frac{B}{\varepsilon}$ intervals of length $\varepsilon$ and consider the end points of those intervals as the $\frac{\varepsilon}{2}$-cover. 
Now the cartesian product of these two $\frac{\varepsilon}{2}$-covers,  $\mathcal{E}_1\times  \mathcal{E}_2$, is an $\varepsilon$-cover of size 
$$\frac{B}{\varepsilon}\left(\frac{4\beta d\sqrt{d}}{\varepsilon}\right)^{d^2}.$$
for $\mathcal{X} = \mathcal{M}\times [0,B]$ with respect to metric ${\rm d}$. 
\end{proof}

\section{Uniform Convergence of $R_N$ to $R$}\label{app:uc}
Although we have proved in Theorem~\ref{uniqueR} that the true loss $R(\M,\tau)$ is uniquely minimized at $(\M^*, \tau^*)$ , in reality, we do not have the true loss.
Indeed, we only have access to the empirical loss $R_N(\M,\tau)$. 
In this part, broadly speaking, we will show that $R_N(\M,\tau)$ is uniformly close to $R(\M,\tau)$ as  $N$ gets large, and then, we conclude, instead of minimizing $R_N(\M,\tau)$, we can minimize $R_N(\M,\tau)$ to approximately find $(\M^*, \tau^*)$.

In the next lemma, we will see that if the two p.s.d. matrices are close via spectrum norm, then the Mahalanobis norm defined based on these two matrices are also close. 
\begin{observation}[Equation~\ref{eq:MahCS-obs}]\label{obs4}
For given two p.s.d. $\M_1$ and $\M_2$, 
$$| \|\x\|^2_{\M_1} - \|\x\|^2_{\M_2}|\leq \|\M_1 -\M_2\|_2 \|\x\|^2.$$
\end{observation}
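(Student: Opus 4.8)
The plan is to reduce the claim to a single application of Cauchy--Schwarz together with the definition of the spectral (operator) norm. First I would write each Mahalanobis norm in its quadratic form, namely $\|\x\|^2_{\M_1} = \x^t \M_1 \x$ and $\|\x\|^2_{\M_2} = \x^t \M_2 \x$. Subtracting and using linearity, the difference collapses to a single quadratic form:
\[
\|\x\|^2_{\M_1} - \|\x\|^2_{\M_2} = \x^t \M_1 \x - \x^t \M_2 \x = \x^t(\M_1 - \M_2)\x.
\]
It therefore suffices to bound $|\x^t(\M_1-\M_2)\x|$ by $\|\M_1-\M_2\|_2 \|\x\|^2$, which is a standard fact about quadratic forms of symmetric matrices.

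Next I would view $\x^t(\M_1-\M_2)\x$ as the inner product $\langle \x, (\M_1-\M_2)\x\rangle$ and apply the Cauchy--Schwarz inequality to obtain
\[
\left| \x^t(\M_1-\M_2)\x \right| = \left|\langle \x, (\M_1-\M_2)\x\rangle\right| \leq \|\x\| \cdot \|(\M_1-\M_2)\x\|.
\]
Then I would invoke the defining property of the spectral norm, $\|(\M_1-\M_2)\x\| \leq \|\M_1-\M_2\|_2 \|\x\|$, and combine the two estimates to conclude
\[
\left|\|\x\|^2_{\M_1} - \|\x\|^2_{\M_2}\right| \leq \|\x\|\cdot \|\M_1-\M_2\|_2 \|\x\| = \|\M_1-\M_2\|_2 \|\x\|^2,
\]
which is exactly the claimed inequality.

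There is no real obstacle here: the statement is a routine consequence of Cauchy--Schwarz and sub-multiplicativity of the operator norm, and in particular it does not even require $\M_1$ and $\M_2$ to be positive semi-definite, only that their difference be a well-defined matrix. I note for completeness that an alternative route exploits the fact that $\M_1 - \M_2$ is symmetric, so $\|\M_1-\M_2\|_2$ equals its largest eigenvalue in absolute value and the bound follows from the Rayleigh quotient $|\x^t(\M_1-\M_2)\x| \leq \max_i |\lambda_i(\M_1-\M_2)| \, \|\x\|^2$; however the Cauchy--Schwarz argument above is shorter and avoids any appeal to the eigenstructure.
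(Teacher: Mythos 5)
Your proof is correct and follows essentially the same route as the paper: rewrite the difference as the quadratic form $\x^t(\M_1-\M_2)\x$, apply Cauchy--Schwarz, and then use the definition of the spectral norm. The only addition is your (accurate) remark that positive semi-definiteness is not actually needed.
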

\begin{proof}
Using Cauchy–Schwarz inequality and the definition of the spectral norm,  we obtain 
\begin{align*}
| \|\x\|^2_{\M_1} - \|\x\|^2_{\M_2}|  & = |\x^\top(\M_1 -\M_2)\x|\\
& = \langle \x^\top,  \x^\top(\M_1 -\M_2)\rangle \\
& \leq \|\x\|\|(\M_1 -\M_2)\x\|\\
& \leq \|(\M_1 -\M_2)\|_2 \|\x\|^2,
\end{align*}
concluding the inequality.
\end{proof}

As $-\log\Phi_{\rm Noise}(\cdot)$ is a decreasing function, using Equation~\ref{uppernormz}, we have 
$$0\leq -\log \Phi_{\rm Noise}\left(\ell_i(\|\z_i\|^2_{\M} - \tau)\right)\leq -\log\Phi_{\rm Noise}(-\beta A) = T,$$
which indicates that the random variables $\z_i$'s are bounded. 
Whenever we are dealing with a summation of bounded i.i.d. random variables, 
one strong concentration inequality to use is Chernoff-Hoeffding bound.
This inequality states if $X_1, ..., X_N$ are $N$ independent random variables such that 
${\displaystyle X_{i}\in [a_{i},b_{i}]}$ almost surely for all $i$, and 
$\displaystyle S_{N}=\frac{X_{1}+\cdots +X_{N}}{N}$, then 
$$\displaystyle \operatorname {P} \left(\left|S_{n}-\mathrm {E} \left[S_{n}\right]\right|\geq \alpha\right)\leq 2\exp \left(-{\frac {2N^2\alpha^{2}}{\sum _{i=1}^{N}(b_{i}-a_{i})^{2}}}\right).$$
Since, 
\begin{align*}
R_N(\M,\tau) & =  \frac 1N\sum_{i=1}^N -\log \Phi_{\rm Noise}\left(\ell_i(\|\z_i\|^2_{\M} - \tau)\right) 
\end{align*}
and $\mathbb{E}(R_N(\M,\tau)) = R(\M,\tau)$, 
 we can use Chernoff-Hoeffding bound to control the probability that 
$|R_N(\M,\tau) - R(\M,\tau)|$ is large. 

\begin{lemma}\label{chernoffnet}
If $E=\{(\M_i, \tau_i)\colon i = 1,\ldots,m=m(\alpha)\}$ is an $\alpha$-cover for $\mathcal{M}\times [0,B]$, then 
\begin{align*}
P\big(|R_N(\M_i, \tau_i) - R(\M_i, \tau_i)|\geq \alpha \text{ for some $i\in[m]$}\big) 
& \leq 2me^{-\frac{2N\alpha^2}{T^2}}.
\end{align*}
\end{lemma}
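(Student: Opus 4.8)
The plan is to combine the single-point Chernoff--Hoeffding inequality recalled just above with a union bound over the $m$ elements of the cover. First I would fix a single cover element $(\M_i, \tau_i)$ and look at the random variables $X_j = -\log \Phi_{\rm Noise}\big(\ell_j(\|\z_j\|^2_{\M_i} - \tau_i)\big)$ for $j = 1,\ldots,N$. Because the pairs $(\z_j, \ell_j)$ are sampled i.i.d., for this fixed $(\M_i,\tau_i)$ the $X_j$ are i.i.d.\ as well, and by \eqref{Tvalue} each satisfies $X_j \in [0, T]$. By definition $R_N(\M_i, \tau_i) = \frac{1}{N}\sum_{j=1}^N X_j$ is their empirical average, and since $\mathbb{E}\big(R_N(\M_i, \tau_i)\big) = R(\M_i, \tau_i)$ as noted in the text, the true loss is exactly the mean of this average.

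Next I would apply the stated Chernoff--Hoeffding bound with $X_j \in [a_j, b_j] = [0, T]$, so that $b_j - a_j = T$ for every $j$ and hence $\sum_{j=1}^N (b_j - a_j)^2 = N T^2$. Substituting into the inequality gives, for each fixed $i \in [m]$,
\[
P\big(|R_N(\M_i, \tau_i) - R(\M_i, \tau_i)| \geq \alpha\big) \leq 2\exp\left(-\frac{2N^2\alpha^2}{N T^2}\right) = 2\exp\left(-\frac{2N\alpha^2}{T^2}\right).
\]
Finally, the event that the deviation exceeds $\alpha$ for \emph{some} index is the union of the $m$ individual failure events, so a union bound yields
\[
P\big(|R_N(\M_i, \tau_i) - R(\M_i, \tau_i)| \geq \alpha \text{ for some } i\in[m]\big) \leq \sum_{i=1}^m 2\exp\left(-\frac{2N\alpha^2}{T^2}\right) = 2m\, e^{-\frac{2N\alpha^2}{T^2}},
\]
which is the claimed bound.

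This argument is essentially routine, so I do not anticipate a genuine obstacle; the only points that require care are verifying that for each fixed cover element the summands are truly i.i.d.\ (inherited from the i.i.d.\ sampling of $(\z_j, \ell_j)$) and uniformly bounded in $[0, T]$ (which is precisely \eqref{Tvalue}), together with correctly tracking the factor of $N$ when the common range $T$ is inserted into the denominator $\sum_{j=1}^N(b_j - a_j)^2 = N T^2$. It is worth emphasizing that this lemma only controls the deviation on the finite cover $E$; extending it to a uniform bound over all of $\mathcal{M}\times[0,B]$ is what requires the Lipschitz estimate of Lemma~\ref{upperR1} and the cardinality bound of Lemma~\ref{lem:eps-cover}, as carried out in the proof of Theorem~\ref{uniformconv1}.
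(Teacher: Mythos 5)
Your proof is correct and follows essentially the same route as the paper's: fix a cover element, observe that the summands $-\log\Phi_{\rm Noise}(\ell_j(\|\z_j\|^2_{\M_i}-\tau_i))$ are i.i.d.\ and bounded in $[0,T]$ by \eqref{Tvalue}, apply Chernoff--Hoeffding with $\sum_j (b_j-a_j)^2 = NT^2$, and finish with a union bound over the $m$ cover points. Your closing remark about the lemma's scope (finite cover only, with uniformity deferred to Lemma~\ref{upperR1} and Lemma~\ref{lem:eps-cover}) is also consistent with how the paper uses it.
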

\begin{proof}
Consider a fixed $i\in[m]$. 
For simplicity of notation, set $Z  = -\log\sigma\left(\ell(\|\x\|_{\M_i}-\tau_i)\right)$ and $Z_j  = -\log\sigma\left(\ell_j(\|\x_j\|_{\M_i}- \tau_i)\right)$. 
As we explained above, 
$Z\in[0, T]$, see Table \ref{tbl:simplenoises}, and, using Chernoff-Hoeffding Inequality, we obtain
\begin{align*}
P\big(|R_N(\M_i, \tau_i) - R(\M_i, \tau_i)|\geq \alpha \big) & = P\Big(\Big|\frac 1N\sum_{j=1}^N Z_j - E(Z)\Big|\geq \alpha\Big)\\
& \leq 2e^{-\frac{2N\alpha^2}{T^2}}.
\end{align*}
Now, using union bound, we have the desired inequality.
\end{proof}
 In the next theorem, we prove that, with high probability,  the empirical loss $R_N$ is everywhere close to the true loss $R$.

\begin{theorem}\label{uniformconv}[ Theorem~\ref{uniformconv1}, Restated]
For any $\varepsilon,\delta>0$, assume parameters $B$, $F$, and $\beta$ are constants, define
$$N_d(\varepsilon, \delta) =O\left(\frac{1}{\varepsilon^2}\Big[
\log\frac{1}{\delta}  + d^2\log \frac{d}{\varepsilon} \Big]\right).$$
If 
$N>N_d(\varepsilon, \delta)$, then with probability at least $1-\delta$, 
$$\sup_{(\M,\tau)\in \mathcal{M}\times [0, B]}\left|R_N(\M, \tau) - R(\M, \tau)\right|<\varepsilon.$$
\end{theorem}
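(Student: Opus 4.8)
The plan is to run a standard covering-number argument: reduce the supremum over the continuous parameter set $\mathcal{M}\times[0,B]$ to a maximum over a finite $\alpha$-net, control the deviation $|R_N-R|$ at each net point by a concentration inequality for bounded i.i.d.\ sums, and then pay a union bound over the net. Two ingredients already established make this work. First, the Lipschitz bound of Lemma~\ref{upperR1}, which guarantees that both $R$ and $R_N$ vary by at most $\zeta(F+1)\,{\rm d}\big((\M_1,\tau_1),(\M_2,\tau_2)\big)$, so that nearby parameters have nearly identical loss. Second, the uniform bound $0\leq -\log\Phi_{\rm Noise}\!\left(\ell_i(\|\z_i\|^2_{\M}-\tau)\right)\leq T$ from \eqref{Tvalue}, which holds because $f$ has bounded support (\ref{assumption2}) and the noise is simple; this makes each summand of $R_N$ a bounded random variable whose expectation is exactly the corresponding value of $R$.

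Concretely, I would set $\alpha = \frac{\varepsilon}{3\zeta(F+1)}$ and invoke Lemma~\ref{lem:eps-cover} to obtain an $\alpha$-cover $\mathcal{E}=\{(\M_i,\tau_i)\}$ of $\mathcal{M}\times[0,B]$ of cardinality $m(\alpha)=\frac{B}{\alpha}\big(4\beta d\sqrt{d}/\alpha\big)^{d^2}$. For an arbitrary $(\M,\tau)$, pick the nearest net point $(\M_i,\tau_i)$, so ${\rm d}\big((\M,\tau),(\M_i,\tau_i)\big)<\alpha$; the Lipschitz bound then yields $|R(\M,\tau)-R(\M_i,\tau_i)|<\varepsilon/3$ and $|R_N(\M,\tau)-R_N(\M_i,\tau_i)|<\varepsilon/3$. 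A triangle inequality collapses the problem to the net, giving $\left|R_N(\M,\tau)-R(\M,\tau)\right|\leq \tfrac{2\varepsilon}{3}+\left|R_N(\M_i,\tau_i)-R(\M_i,\tau_i)\right|$, so it suffices to guarantee that every net point has stochastic error below $\varepsilon/3$.

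At each fixed net point, $R_N(\M_i,\tau_i)$ is an empirical average of $N$ i.i.d.\ random variables lying in $[0,T]$ with mean $R(\M_i,\tau_i)$, so the Chernoff--Hoeffding bound gives $P\big(|R_N(\M_i,\tau_i)-R(\M_i,\tau_i)|\geq \varepsilon/3\big)\leq 2\exp\!\big(-2N\varepsilon^2/(9T^2)\big)$ (this is the content of Lemma~\ref{chernoffnet}). Applying the union bound over all $m(\alpha)$ net points and requiring the total failure probability $2m(\alpha)\exp\!\big(-2N\varepsilon^2/(9T^2)\big)$ to be at most $\delta$, I would solve for $N$ to obtain $N=O\!\big(\tfrac{T^2}{\varepsilon^2}(\log\tfrac{1}{\delta}+\log m(\alpha))\big)$. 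Since $\log m(\alpha)=\log\tfrac{B}{\alpha}+d^2\log\tfrac{4\beta d\sqrt{d}}{\alpha}$ and $\alpha=\Theta(\varepsilon)$, treating $B,\beta,F,T,\zeta$ as constants gives $\log m(\alpha)=O(d^2\log\tfrac{d}{\varepsilon})$, which reproduces the stated $N_d(\varepsilon,\delta)$.

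The main obstacle is not any single step but the bookkeeping that ties them together: splitting the allowed error $\varepsilon$ into a deterministic discretization part and a stochastic fluctuation part (the $\tfrac{\varepsilon}{3}$-$\tfrac{\varepsilon}{3}$-$\tfrac{\varepsilon}{3}$ budget) so the cover radius $\alpha$ and the Hoeffding deviation are chosen consistently, and then verifying that the net cardinality yields \emph{exactly} the $d^2\log(d/\varepsilon)$ term. The $d^2$ factor is the crux, arising because $\mathcal{M}\subseteq\R^{d\times d}$ sits inside a $d^2$-dimensional Frobenius ball of radius $\beta\sqrt{d}$, and the $\log(1/\varepsilon)$ factor comes from $\alpha$ scaling linearly in $\varepsilon$. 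I would also take care to confirm the interchange $\mathbb{E}[R_N(\M_i,\tau_i)]=R(\M_i,\tau_i)$ holds pointwise, which is precisely where the bounded-support assumption (ensuring finite variance and integrability) is used.
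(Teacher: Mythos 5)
Your proposal is correct and follows essentially the same route as the paper: the same choice $\alpha=\frac{\varepsilon}{3\zeta(F+1)}$, the same $\alpha$-cover from Lemma~\ref{lem:eps-cover}, the same Lipschitz reduction via Lemma~\ref{upperR1} and triangle inequality, and the same Chernoff--Hoeffding plus union bound at the net points (Lemma~\ref{chernoffnet}). The only cosmetic difference is that the paper takes deviation $\alpha$ rather than $\varepsilon/3$ at the net points, which changes nothing asymptotically.
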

\begin{proof}
To prove the assertion, we can equivalently prove 
$$P\left(\sup_{(\M, \tau)\in \mathcal{M}\times [0,B]}\left|R_N(\M, \tau) - R(\M, \tau)\right|\geq \varepsilon\right)<\delta.$$
Set $\alpha = \frac{\varepsilon}{3\zeta(F+1)}$.  
Consider $\mathcal{E}=\{(\M_i, \tau_i); i = 1,\ldots,m=m(\alpha)\}$ as an $\alpha$-cover for $\mathcal{M}\times[0, B].$
For an arbitrary $(\M, \tau)$, there is an index $i\in[m]$ such that 
$$d\left((\M, \tau)-(\M_i, \tau_i)\right) < \alpha.$$
Consequently, using Lemma~\ref{upperR1}, 
$$|R(\M, \tau) - R(\M_i, \tau_i)| <  (F+1)\zeta\alpha = \frac{\varepsilon}{3}$$
and 
\begin{align*}
|R_N(\M, \tau) - R_N(\M_i, \tau_i)| < (F+1)\zeta\alpha =\frac{\varepsilon}{3}.
\end{align*}
So far, we have proved that 
for every $(\M,b)$, there exists an index $i\in[m]$ such that 
$$|R(\M, \tau) - R(\M_i, \tau_i)| < \frac{\varepsilon}{3} \quad\quad\text{and}\quad\quad 
|R_N(\M, \tau) - R_N(\M_i, \tau_i)| < \frac{\varepsilon}{3}.$$
Using triangle inequality, it concludes 
\begin{align*}
\left|R_N(\M, \tau) - R(\M, \tau)\right| \leq &  \left|R_N(\M, \tau) - R_N(\M_i, \tau_i)\right| + \left|R_N(\M_i, \tau_i) - R(\M_i, \tau_i)\right| \\
& + \left|R(\M_i, \tau_i) - R(\M, \tau)\right|\\
 \leq & \frac{2\varepsilon}{3} + \left|R_N(\M_i, \tau_i) - R(\M_i, \tau_i)\right|.
\end{align*}
Via Lemma \ref{lem:eps-cover}, there is an
$\alpha$-cover of size 
\begin{align*}
m(\alpha) & = \frac{B}{\alpha}\left(\frac{4\beta d\sqrt{d}}{\alpha}\right)^{d^2}\\
& = \frac{3\zeta(F+1)B}{\varepsilon}\left(\frac{12\zeta(F+1)\beta d\sqrt{d}}{\varepsilon}\right)^{d^2}
\end{align*}
for $\mathcal{X} = \mathcal{M}\times [0,B]$ with respect to metric ${\rm d}$. 
On the other hand, 
\begin{align}\label{samcomexact}
\frac{T^2}{2\alpha^2}\log\frac{2m}{\delta} & =  \frac{9\zeta(F+1)^2T^2}{2\varepsilon^2}\Big[ 
\log \frac{6\zeta(1+F)B}{\varepsilon\delta} + d^2\log \frac{12\zeta(1+F)\beta}{\varepsilon} +\frac{3}{2}d^2\log d
\Big]\\
& = O\left(\frac{1}{\varepsilon^2}\Big[
\log\frac{1}{\delta}  + d^2\log \frac{d}{\varepsilon} \Big]\right)\nonumber
\end{align}
As setting 
\begin{align*}
N & > \frac{T^2}{2\alpha^2}\log\frac{2m}{\delta}\\
& = O\left(\frac{1}{\varepsilon^2}\Big[
\log\frac{1}{\delta}  + d^2\log \frac{d}{\varepsilon} \Big]\right)
\end{align*}
implies  $2me^{-\frac{2N\alpha^2}{T^2}}<\delta$, using Lemma~\ref{chernoffnet}, we obtain, with probability at least $1-\delta$,
$$\left|R_N(\M_j, \tau_j) - R(\M_j, \tau_j)\right|<\alpha = \frac{\varepsilon}{3\zeta(F+1)}<\frac{\varepsilon}{3}\quad\quad \text{for all $j\in[m]$}.$$
Therefore, if $N>O\left(\frac{1}{\varepsilon^2}\Big[
\log\frac{1}{\delta}  + d^2\log \frac{d}{\varepsilon} \Big]\right)$, with probability at least $1-\delta$, 
for all $(\M, \tau)$ we have
$$\left|R_N(\M, \tau) - R(\M, \tau)\right|\leq \varepsilon,$$
as desired.
\end{proof}

\section{Simple Noise Properties in Table~\ref{tbl:simplenoises}}\label{app:table1proof}
In Subsections~\ref{Logisticnoisemodel},~\ref{subsec:normalnoise},~\ref{subsec:laplacenoise},~\ref{subsec:Hyperbolicnoise}, we derived some properties of 
$\Phi_{\rm Noise}(\cdot)$ when noise is one of the simple noises listed in Table~\ref{tbl:simplenoises}. This section can be seen as a complementary section for those sections. For noises listed in Table~\ref{tbl:simplenoises}, one can 
verify that each of those noise distributions is simple. Here, we verify some other information listed in that table. 
When  ${\rm Noise}(\eta)$ is simple, $-\log\Phi_{\rm Noise}(\eta)$ is a decreasing convex function which implies that 
${\d\over \d\eta}\left(-\log\Phi_{\rm Noise}(\eta)\right) = -\frac{{\rm Noise}(\eta)}{\Phi_{\rm Noise}(\eta)}$ is a negative increasing function. 
Therefore, $-\log\Phi_{\rm Noise}(\eta)$ is $\zeta$-Lipschitz over $[-\beta F, \beta B]$ for 
$$\zeta = \frac{{\rm Noise}(-\beta F)}{\Phi_{\rm Noise}(-\beta F)}.$$
Also, from Equation~\ref{Tvalue}, we know $$T = -\log \Phi_{{\rm Normal}}(-\beta F).$$
In what follows, we approximate $\zeta$ and $T$ for each noise choice. 
\begin{itemize}
\item {\bf Logistic noise.} In this case, it is easy to see that 
$$\zeta = \frac{\sigma(-\beta F)\sigma(\beta F)}{\sigma(-\beta F)} = \sigma(\beta F) <1$$
and $T = -\log\sigma(-\beta F) = \log(1+e^{\beta F})\leq 1 +\beta F$.
Thus, $\Phi_{\rm Logistic}(\eta)$ in $1$-log-Lipschitz and $T= O(\beta F).$ 
\item {\bf Normal noise.}  
We start with an approximation of $\Phi_{\rm Normal}(\eta)$. The derivation of the lower bound is sourced from the online content authored by John D. Cook (refer to~\cite{JohnCook}).
We set  $\Phi^c_{\rm Normal}(\eta) = 1 -  \Phi_{\rm Normal}(\eta) =  \Phi_{\rm Normal}(-\eta)$. 


Set $g(t) = \Phi^c(t) - \frac{1}{\sqrt{2\pi}} \frac{t}{t^2 + 1} e^{-t^2/2}$.
As $g(0) >0$,  $g'(t) = -\frac{2}{\sqrt{2\pi}} \frac{e^{-t^2/2}}{(t^2 + 1)^2}< 0$ and $\lim\limits_{t\rightarrow +\infty} g(t) = 0$, we obtain 
$\Phi^c(t) \geq \frac{1}{\sqrt{2\pi}} \frac{t}{t^2 + 1} e^{-t^2/2}$. 
Using the above formula for $\zeta$, we have 
\begin{align*}
\zeta & = {1\over \sqrt{2\pi}}\frac{e^{-\frac{(\beta F)^2}{2}}}{\Phi(-\beta F)}\\
& =  {1\over \sqrt{2\pi}}\frac{e^{-\frac{(\beta F)^2}{2}}}{\Phi^c(\beta F)}\\
& \leq  \frac{(\beta F)^2 + 1}{\beta F} = O(\beta F).
\end{align*}
Also,
$$T = -\log \Phi_{{\rm Normal}}(-\beta F) = -\log \Phi^c_{{\rm Normal}}(\beta F)= O((\beta F)^2).$$
\item {\bf Laplace noise.}
In this setting, 
$$\zeta = \frac{{\rm Noise}(-\beta F)}{\Phi_{\rm Noise}(-\beta F)} = \frac{\frac{1}{2}e^{-\beta F}}{\frac{1}{2}e^{-\beta F}} = 1$$
and $T = -\log \Phi_{{\rm Normal}}(-\beta F) = \beta F \log 2 = O(\beta F)$.

\item {\bf Hyperbolic secant noise.}

Remind that $\Phi^c(t) = \int_{t}^\infty \frac{1}{2}{\rm sech}(\frac{\pi}{2}\eta)\d \eta$. 
Set 
\begin{align*}
g(t) & = \Phi^c(t) - \frac{1}{\pi}{\rm sech}(\frac{\pi}{2}t)
\end{align*}

Also, note that  $g(0) >0$,  $\lim\limits_{t\rightarrow +\infty} g(t) = 0$, and 
\begin{align*}
g'(t) & =-\frac{1}{2}{\rm sech}(\frac{\pi}{2}t) + \frac{1}{2}\tanh(\frac{\pi}{2}t) \operatorname {sech}(\frac{\pi}{2}t) \\
& = \frac{1}{2}{\rm sech}(\frac{\pi}{2}t)\left(-1 + \tanh(\frac{\pi}{2}t)\right)<0\quad\quad \forall t \in\R.
\end{align*}
This implies that, for each $t\in \R$, 
$$\Phi^c(t)> \frac{1}{\pi}{\rm sech}(\frac{\pi}{2}t).$$
Using this inequality, we obtain 
\begin{align*}
\zeta & = \frac{{\rm Noise}(-\beta F)}{\Phi_{\rm Noise}(-\beta F)}
 = \frac{\frac{1}{2}{\rm sech}(-\frac{\pi}{2}\beta F)}{\Phi_{\rm Noise}^c(\beta F)}
 \leq \frac{\pi}{2}.
\end{align*}
Furthermore,
\begin{align*}
T & = -\log \Phi_{{\rm Normal}}(-\beta F)\leq -\log\frac{1}{\pi}{\rm sech}(\frac{\pi}{2}\beta F)\\
& = \log \pi + \log \cosh(\frac{\pi}{2}\beta F) = O(\beta F). 
\end{align*}
\end{itemize}

\section{Connection Between $L_1(f)$ Norm and Spectral Norm}\label{app:l_1_norm_spectral_norm}

In Theorem~\ref{parapprox} and Corollary~\ref{corl1upperbound}, we 
study the connection between $L_f(f)$ norm and the sample complexity of our problem. However, as the $L_1(f)$-metric is dependent on the distribution $f(\z)$, which is unavoidable, it is not very intuitive. We indeed prefer some more informative norms such as spectral norm. However, to this end, we must restrict the distribution $f(\z)$. 
\begin{itemize}
\item We here assume that there is a constant $c>0$, such that $f(\z)\geq c$ for each $\z\in B^d(1);$ recall we assume almost surely $\|\z\|^2 \leq F = 1$ in this section, and $B^d(1) = \{ \z \in \R^d \mid \|\z\| \leq 1 \}$.  
\end{itemize}
We next prove a statement similar to Corollary~\ref{corl1upperbound} but in terms of the $\d$-metric instead of the $L_1(f)$-metric. The following definition is also needed for the following two results. 
For $0\leq a < b\leq 1$, where $z_1$ is the first coordinate of $\z$, define  
\begin{equation}\label{conedefin}
{\rm Cone}(a,b) = \left\{\z = (z_1,\ldots,z_d)\mid 3z_1^2\geq 2\|\z\|_2^2\ \text{ and }  a\leq z_1\leq b\right\}.
\end{equation}



\begin{lemma}\label{uniformlowerL_1_f_proof}
If $f(\z)\geq c>0$ for each $\z\in B^d(1)$, then for all $(\M,\tau)\in \mathcal{M}\times [0, B]$ 
$$\|(\M, \tau) - (\M^*,\tau^*)\|_{L_1(f)}\geq  \frac{c\pi^{d/2}}{20\Gamma(d/2+1)} (\frac{1}{18})^d \d\left((\M,\tau), (\M^*,\tau^*)\right).$$
In particular, if $f(\z)$ is uniform on unit disk, then for all $(\M,\tau)\in \mathcal{M}\times [0, B]$ 
$$\|(\M, \tau) - (\M^*,\tau^*)\|_{L_1(f)}\geq  \frac{1}{20} (\frac{1}{18})^d \d\left((\M,\tau), (\M^*,\tau^*)\right).$$
\end{lemma}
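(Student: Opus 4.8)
The plan is to reduce the claim to a purely geometric lower bound on a quadratic form over the unit ball, and then to isolate the two contributions $\|\M-\M^*\|_2$ and $|\tau-\tau^*|$ by integrating over carefully chosen cones. Writing $A = \M - \M^*$ (a symmetric matrix, being a difference of p.s.d. matrices) and $t = \tau - \tau^*$, the assumption $f(\z)\geq c$ on $B^d(1)$ together with $\|\z\|^2\le F = 1$ gives
\[
\|(\M,\tau)-(\M^*,\tau^*)\|_{L_1(f)} \;\geq\; c\int_{B^d(1)} \bigl|\,\x^t A\x - t\,\bigr|\,\d\x,
\]
so it suffices to show $\int_{B^d(1)}|\x^tA\x - t|\,\d\x \geq \tfrac{1}{20}\bigl(\tfrac{1}{18}\bigr)^d \tfrac{\pi^{d/2}}{\Gamma(d/2+1)}\bigl(\|A\|_2 + |t|\bigr)$; the uniform case then follows by substituting $c = \Gamma(d/2+1)/\pi^{d/2}$, the reciprocal of $\mathrm{vol}(B^d(1))$. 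Since the ball and the cones are defined through $\|\z\|$ and the first coordinate $z_1$, I would first rotate coordinates so that an eigenvector of $A$ achieving $|\lambda_{\max}| = \|A\|_2$ is aligned with $e_1$; this is measure-preserving and leaves the integral unchanged.

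Next I would record the geometry of the cone. On ${\rm Cone}(a,b)$ the constraint $3z_1^2 \geq 2\|\z\|^2$ forces the off-axis energy to satisfy $\|\z\|^2 - z_1^2 \leq \tfrac12 z_1^2$, so
\[
\bigl|\x^tA\x - \lambda_1 z_1^2\bigr| \;\leq\; \|A\|_2\,(\|\z\|^2 - z_1^2) \;\leq\; \tfrac12\|A\|_2\, z_1^2 ,
\]
where $\lambda_1$ is the eigenvalue along $e_1$, with $|\lambda_1| = \|A\|_2$. Consequently $\x^tA\x$ keeps the constant sign of $\lambda_1$ on the whole cone and $\tfrac12\|A\|_2 z_1^2 \leq |\x^tA\x| \leq \tfrac32\|A\|_2 z_1^2$. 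This sign-definiteness is the structural fact that lets me pull the absolute value outside the integral.

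I would then split on the size of $|t|$ relative to $\|A\|_2$. When $|t| \geq 2\|A\|_2$, I bound directly on the whole ball: $|\x^tA\x|\le\|A\|_2 \le |t|/2$ forces $|\x^tA\x - t|\geq |t|/2$ pointwise, giving $\int_{B^d(1)}|\x^tA\x-t|\ge \tfrac12|t|\,\mathrm{vol}(B^d(1))$, which dominates $\|A\|_2+|t|$. In the complementary regime I isolate $\|A\|_2$ using two cones $C_1={\rm Cone}(a_1,b_1)$ and $C_2 = {\rm Cone}(a_2,b_2)$ placed at different radial bands and chosen to have equal volume; then using $|\alpha|+|\beta|\ge|\alpha-\beta|$ and $\int|g|\ge|\int g|$,
\[
\int_{C_1}|\x^tA\x-t| + \int_{C_2}|\x^tA\x-t| \;\geq\; \Bigl|\int_{C_1}\x^tA\x\,\d\x - \int_{C_2}\x^tA\x\,\d\x\Bigr|,
\]
where the $t$-terms cancel because $\mathrm{vol}(C_1)=\mathrm{vol}(C_2)$. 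Since $\x^tA\x$ has a fixed sign and scales like $z_1^2$, placing $C_1$ near the origin and $C_2$ far out creates a definite gap between $\int_{C_2}z_1^2$ and $\int_{C_1}z_1^2$, so the right-hand side is bounded below by a positive multiple of $\|A\|_2$ independent of $t$. Combining the two regimes through $\max(\|A\|_2,|t|)\ge\tfrac12(\|A\|_2+|t|)$ yields the stated bound.

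The main obstacle is the bookkeeping of the $d$-dependent constants. The two places where the geometry must be pinned down are (i) arranging the radial bands so that $\mathrm{vol}(C_1)=\mathrm{vol}(C_2)$ while $\int_{C_2}z_1^2$ dominates $\int_{C_1}z_1^2$ by a fixed factor (after the $[\tfrac12,\tfrac32]$ slack above is absorbed), and (ii) estimating the cone volumes, which is what produces the exponentially small factor. It is reassuring that the target constant rewrites as $\tfrac{c}{20}\,\mathrm{vol}(B^d(1/18))$, which suggests bounding the relevant cone from below by an inscribed Euclidean ball of radius $1/18$ and collecting the remaining numeric factor $1/20$ from the pointwise lower bound on the integrand; making this inscription explicit and verifying the radius is the technical heart of the argument.
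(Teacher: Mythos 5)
Your proposal is sound and reaches the stated inequality by a genuinely different route from the paper in the key regime. Both arguments share the same skeleton: reduce to $c\int_{B^d(1)}|\x^t A\x - t|\,\d\x$, rotate so the top eigenvector of $A$ is $e_1$, use the cone condition $3z_1^2\ge 2\|\z\|^2$ to make the quadratic form sign-definite with $\tfrac12\|A\|_2 z_1^2\le|\x^tA\x|\le\tfrac32\|A\|_2 z_1^2$, and split into a ``$t$ dominates'' and an ``$A$ dominates'' case. The paper splits at $|\bar\tau|\le 0.1\,q$ versus $|\bar\tau|>0.1\,q$ (with $q=\|A\|_2+|t|$): in the first case it shows the integrand is pointwise at least $q/20$ on the single far cone ${\rm Cone}(1/\sqrt3,1)$ because there $|t|$ is too small to cancel the quadratic form, and in the second case it integrates over the small ball $\{\|\z\|^2\le 1/18\}$ near the origin, which is where the $(1/18)^d\,\pi^{d/2}/\Gamma(d/2+1)$ factor in the statement actually comes from. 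Your split at $|t|\ge 2\|A\|_2$ is more generous, so in your second regime $|t|$ can be comparable to $\|A\|_2$ and pointwise domination on a single cone genuinely fails; your two-equal-volume-cone cancellation $\int_{C_1}|g|+\int_{C_2}|g|\ge|\int_{C_1}\x^tA\x-\int_{C_2}\x^tA\x|$ is the right fix, and it works: taking for instance $C_1={\rm Cone}(0,b_1)$ and $C_2={\rm Cone}(2b_1,b_2)$ with $b_2=(1+2^d)^{1/d}b_1\le\sqrt{2/3}$ gives equal volumes and a gap $\tfrac12\|A\|_2\!\int_{C_2}z_1^2-\tfrac32\|A\|_2\!\int_{C_1}z_1^2\ge\tfrac12\|A\|_2\,b_1^2\,{\rm vol}(C_1)>0$. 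What your approach buys is a much simpler ``$t$ dominates'' case (the whole ball, with a far better constant) at the price of a more delicate ``$A$ dominates'' case; the paper buys pointwise bounds in both cases at the price of a more delicate threshold.

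Two caveats. First, the remaining bookkeeping you defer is real but routine: the choice above yields an explicit constant of the form $\mathrm{poly}(1/d)\cdot\rho^d\cdot{\rm vol}(B^d(1))$ with $\rho$ a fixed ratio, which is of the same exponential character as $(1/18)^d\pi^{d/2}/\Gamma(d/2+1)$ (and in fact not worse for these choices), but it will not literally equal the stated constant without carrying out the computation; since the paper explicitly disclaims optimizing constants, this is acceptable, but you should state the constant you actually obtain rather than the one in the lemma. Second, your closing intuition that the $1/18$ arises from a ball inscribed in a cone is not how the paper gets it --- there it is the radius-squared of the near-origin region in the $\tau$-dominant case --- so do not organize the finishing computation around that inscription; your own cone construction already supplies the needed volume lower bound.
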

\begin{proof}
We remind that 
\begin{align*}
\|(\M, \tau) - (\M^*,\tau^*)\|_{L_1(f)} & = \int f(\z) \left|(\|\z\|^2_{\M}-\tau) - (\|\z\|^2_{\M^*}-\tau^*)\right|\d\z\\
& = \int f(\z) \Big |\z^\top (\underbrace{\hat{\M}-\M^*}_{=\bar{\M}})\z - (\underbrace{\tau-\tau^*}_{=\bar{\tau}})\Big |\d\z\\
& = \int f(\z) \left |\z^\top\bar{\M}\z - \bar{\tau}\right |\d\z.
\end{align*}
Note that $\bar{\M}$ is a symmetric matrix. So, there are a real value orthonormal matrix $\U$ and a real value diagonal matrix $\D$ such that 
$\bar{\M} = \U^\top \D \U$. Let the vector $\blambda$ denote the diagonal of $\D$.
Without loss of generality, we assume that $\lambda_1 = \max\{|\lambda_1|,\ldots, |\lambda_d|\}.$
One can verify that $\lambda_1 = \|\bar{\M}\|_2.$
As $\U$ is orthonormal, we have 
\begin{align*}
\int f(\z) \left |\z^\top\bar{\M}\z - \bar{\tau}\right |\d\z & = \int f(\z) \left |(\U\z)^\top\D (\U\z) - \bar{\tau}\right |\d\z\\
& =  \int f(\U^\top\z) \left |\z^\top\D \z - \bar{\tau}\right |\d\z\\
& = \int f(\U^\top\z)\left | \sum_{i=1}^{d}z_i^2\lambda_i - \bar{\tau}\right |\d\z
\end{align*}
Note that 
$\sum_{i=1}^d z_i^2 \lambda_i\leq \lambda_1\|\z\|^2$ and, for each $\z\in {\rm Cone}(0,1)$,
\begin{align*}
\sum_{i=1}^d z_i^2 \lambda_i
& \geq \lambda_1 z_1^2 - \sum_{i=2}^d \lambda_1 z_i^2\\ 
& = \lambda_1 \left(2z_1^2 - \sum_{i=1}^d z_i^2\right)\\
& = \lambda_1 (2z_1^2  - \|\z\|_2^2)\geq  \frac{\lambda_1}{2}z_1^2
\end{align*}
Set $q = \lambda_1 + |\bar{\tau}| = \d\left((\bar{\M},\bar{\tau}), (\M^*,\tau^*)\right)$. 
We next consider two cases based on $|\bar{\tau}|$ and $q$.  
\begin{itemize}
\item $|\bar{\tau}| \leq 0.1 q$. It implies $\lambda_1\geq 0.9 q$ and thus, if $\z\in {\rm Cone}(\frac{1}{\sqrt{3}},1)$, then 
\begin{align*}
\sum_{i=1}^d z_i^2 \lambda_i - \bar{\tau} & \geq \frac{\lambda_1}{2}z_1^2-\bar{\tau}\\
& \geq q\left(\frac{9}{20}z_1^2 - \frac{1}{10}\right)\\
& \geq \frac{1}{20}q.
\end{align*}
Therefore, 
\begin{align}
\int f(\U^\top\z)\Big | \sum_{i=1}^{d}z_i^2\lambda_i - \bar{\tau}\Big |\d\z & \geq \frac{q}{20}\int_{\z\in {\rm Cone}(\frac{1}{\sqrt{3}},1)} f(\U^\top\z)\d\z\nonumber\\
& = \frac{q}{20}\mu_f(\U^\top{\rm Cone}(\frac{1}{\sqrt{3}},1))\label{conelowermeasure}\\
& \geq \frac{cq}{20}\int_{\z\in {\rm Cone}(\frac{1}{\sqrt{3}},1)\cap B^d(1)} \d\z\nonumber\\
& = \frac{cq}{20}\times {\rm Volume}\left({\rm Cone}(\frac{1}{\sqrt{3}},1)\cap B^d(1) \right)\nonumber\\
& \geq \frac{cq}{20} \times {\rm Volume}\left({\rm Cone}(\frac{1}{\sqrt{3}},\sqrt\frac{2}{3}) \right)\nonumber\\
& = \frac{cq}{20d}\left[
\sqrt{\frac{2}{3}} V^{d-1}(\sqrt{\frac{1}{3}})) - 
\sqrt{\frac{1}{3}} V^{d-1}(\sqrt{\frac{1}{6}})\right]\nonumber\\
& = \frac{cq}{20}\frac{\pi^{\frac{d-1}{2}}}{d3^\frac{d}{2}\Gamma(\frac{d+1}{2})}\left[
\sqrt{2} - \frac{1}{2^\frac{d-1}{2}}
\right]\label{conelowervolume}.
\end{align}
\item $|\bar{\tau}| > 0.1 q$.  It implies $\lambda_1 < 0.9 q$ and thus  
\begin{align*}
\Big |\bar{\tau} - \sum_{i=1}^{d}z_i^2\lambda_i\Big | & \geq |\bar{\tau}| - \lambda_1\|\z\|_2^2\\
& \geq q\left (\frac{1}{10} - \frac{9}{10}\|\z\|_2^2\right )\\
& \geq \frac{q}{20}\quad\text{ if $\|\z\|_2^2\leq \frac{1}{18}$}
\end{align*}
Therefore, 
\begin{align}
\int f(\U^\top\z)\Big | \sum_{i=1}^{d}z_i^2\lambda_i - \bar{\tau}\Big |\d\z & \geq \frac{q}{20}\int_{\z\in B^q(\frac{1}{18})} f(\U^\top\z)\d\y\nonumber\\
& = \frac{q}{20}\mu_f(B^q(\frac{1}{18}))\label{balllowermeasure}\\
& \geq \frac{cq}{20} \times {\rm Volume}\left(B^q(\frac{1}{18})\right)\nonumber\\
& = \frac{cq}{20} \frac{\pi^{d/2}}{\Gamma(d/2+1)} (\frac{1}{18})^d\label{balllowervolume}
\end{align}
\end{itemize}
Now Lower bounds (\ref{conelowervolume}) and (\ref{balllowervolume}) implies the proof.
\end{proof}
If $f(\z)$ is a rotationally symmetric pdf, then 
$\mu_f(\U^\top B) =\mu_f(B)$ for any measurable set $B$. 
Therefore, combining two Lower bounds (\ref{conelowermeasure}) and (\ref{balllowermeasure}), we obtain the following lemma
\begin{lemma}\label{lowermainlemma}
If $f(\z)$ is a rotationally symmetric pdf, then for all $(\M,\tau)\in \mathcal{M}\times [0, B]$ 
\begin{equation*}
    {\|(\M, \tau) - (\M^*,\tau^*)\|_{L_1(f)}\over\d\left((\M, \tau), (\M^*,\tau^*)\right)} \geq \frac{1}{20}\max\left (\mu_f({\rm Cone}(\frac{1}{\sqrt{3}},1), \mu_f(B^d(\frac{1}{18})) \right ) .
\end{equation*}
\end{lemma}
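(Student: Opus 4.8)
The plan is to re-run the two regional estimates from the proof of Lemma~\ref{uniformlowerL_1_f_proof}, but now cash in the extra freedom that rotational symmetry grants. Write $\bar\M = \M - \M^*$, $\bar\tau = \tau - \tau^*$, and $q = \|\bar\M\|_2 + |\bar\tau| = \d\big((\M,\tau),(\M^*,\tau^*)\big)$, and diagonalize $\bar\M = \U^\top\D\U$ with $\blambda$ the diagonal of $\D$, ordered so that $\lambda_1 = \|\bar\M\|_2$. The one new ingredient is that a rotationally symmetric $f$ satisfies $f(\U^\top\z) = f(\z)$, so the orthogonal change of variables used in that proof leaves $f$ invariant and, crucially, $\mu_f(\U^\top{\rm Cone}(\tfrac{1}{\sqrt3},1)) = \mu_f({\rm Cone}(\tfrac{1}{\sqrt3},1))$ irrespective of how the cone is aligned with the eigenbasis of $\bar\M$. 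This is exactly the quantity that was orientation-dependent (hence uncontrollable) in the general Lemma~\ref{uniformlowerL_1_f_proof}, which is why that statement had to retreat to the orientation-free ball region.

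With the cone measure now orientation-free, I would reproduce inequalities (\ref{conelowermeasure}) and (\ref{balllowermeasure}) and divide through by $q$, obtaining $\|(\M,\tau)-(\M^*,\tau^*)\|_{L_1(f)}/\d(\cdot,\cdot) \ge \tfrac{1}{20}\mu_f({\rm Cone}(\tfrac{1}{\sqrt3},1))$ on the branch $|\bar\tau| \le 0.1q$ and $\ge \tfrac{1}{20}\mu_f(B^d(\tfrac{1}{18}))$ on the branch $|\bar\tau| > 0.1q$. Each of these reduces, exactly as in the earlier proof, to showing $|\z^\top\bar\M\z - \bar\tau| \ge \tfrac{q}{20}$ pointwise on the respective region, combined with the invariance $f(\U^\top\z)=f(\z)$ so that the integral against $f$ collapses to $\mu_f$ of that region (no appeal to the floor $f\ge c$ is needed, since we retain the full $\mu_f$).

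To reach the stated bound with the \emph{maximum} of the two measures it is not enough to note that one branch delivers each measure: since ${\rm Cone}(\tfrac{1}{\sqrt3},1)$ and $B^d(\tfrac{1}{18})$ are fixed sets, $\max\big(\mu_f({\rm Cone}),\mu_f(B^d)\big)$ is a fixed constant, so I must show that \emph{each} of the two inequalities above in fact holds for \emph{all} $(\M,\tau)$, not merely on its own branch. For the cone inequality this looks achievable: orienting the cone along the top eigenvector and using $\sum_i z_i^2\lambda_i \ge \tfrac{\lambda_1}{2}z_1^2 \ge \tfrac{\lambda_1}{6}$, one keeps $|\z^\top\bar\M\z - \bar\tau|$ above $\tfrac{q}{20}$ when $\bar\tau \le 0$ or when $|\bar\tau|$ is far from $\lambda_1$, and in the intermediate band where $\z^\top\bar\M\z-\bar\tau$ changes sign on the cone one passes to a sub-cone ${\rm Cone}(a,b)$ on which the integrand has a fixed sign. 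The main obstacle is the ball inequality when $|\bar\tau|$ is small: on $B^d(\tfrac{1}{18})$ one only has $|\z^\top\bar\M\z - \bar\tau| = O(\lambda_1/18^2)$, which sits well below $\tfrac{q}{20}$ once $\lambda_1$ dominates $q$, so the ball region genuinely fails to supply the required integrand in that regime. I expect this to be the crux of the whole argument, and surmounting it will require an additional idea — for instance trading $B^d(\tfrac{1}{18})$ for a thin spherical shell on which $\z^\top\bar\M\z$ is itself of order $q$, or strengthening the hypotheses on $f$ so that its mass cannot concentrate near the origin — rather than a purely formal combination of the two branch estimates.
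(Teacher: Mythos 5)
Your reconstruction of the two branch estimates is exactly the paper's proof: the paper's entire argument for this lemma consists of the remark that rotational symmetry gives $\mu_f(\U^\top S)=\mu_f(S)$ for every measurable $S$, after which it ``combines'' the bounds \eqref{conelowermeasure} and \eqref{balllowermeasure} from the proof of Lemma~\ref{uniformlowerL_1_f_proof}. Up to that point you and the paper coincide, and your proof of the version with $\min$ in place of $\max$ is complete: the branch $|\bar{\tau}|\le 0.1q$ delivers $\frac{q}{20}\mu_f\big({\rm Cone}(\frac{1}{\sqrt{3}},1)\big)$, the branch $|\bar{\tau}|>0.1q$ delivers $\frac{q}{20}\mu_f\big(B^d(\frac{1}{18})\big)$, and that is all the paper's one-line proof establishes. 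Your key observation --- that the stated $\max$ would require each branch inequality to hold for \emph{all} $(\M,\tau)$, which the case split does not provide --- is correct, and the defect you flagged is real: it is in the paper's statement, not in your argument.

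In fact the $\max$ version is false under the stated hypotheses, so no repair of the branch argument can succeed. Take $f$ uniform on the thin shell $\{\z\colon 1-\eps\le\|\z\|\le 1\}$ (rotationally symmetric, supported in the unit ball), $\M^*=\0$, $\tau^*=0$, and $\M=\lambda I$, $\tau=\lambda$ with $\lambda\le\min(\beta,B)$, so that $\bar{\M}=\lambda I$, $\bar{\tau}=\lambda$ and $q=2\lambda$. The numerator is $\lambda\,\mathbb{E}_f\big[1-\|\z\|^2\big]\le 2\lambda\eps$, hence the ratio is at most $\eps$, while $\mu_f\big({\rm Cone}(\frac{1}{\sqrt{3}},1)\big)$ equals the measure of the spherical cap $\{z_1\ge\sqrt{2/3}\,\|\z\|\}$ of the shell, a constant $c_d>0$ independent of $\eps$; letting $\eps\to 0$ violates the claimed lower bound $\frac{c_d}{20}$. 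Note that this example sits on the branch $|\bar{\tau}|>0.1q$ and makes the \emph{cone} integrand $|\lambda\|\z\|^2-\lambda|$ uniformly small exactly where the mass lies, so it also defeats your hope that the cone inequality ``looks achievable'' globally --- just as your rank-one example ($\bar{\M}=\lambda_1\mathbf{e}_1\mathbf{e}_1^\top$, $\bar{\tau}=0$, mass concentrated near the origin) defeats the ball inequality on the other branch. Neither branch bound is global, so $\min$ cannot be upgraded to $\max$ without additional hypotheses (e.g., a pointwise floor $f\ge c$ on $B^d(1)$ as in Lemma~\ref{uniformlowerL_1_f_proof}, which is precisely what the paper's non-symmetric lemma falls back on). The statement should read $\min$; with that correction your proposal is a correct proof and is essentially identical to the paper's intended argument.
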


\section{Further Experimental Study}\label{Further_Experimental}
This section can be seen as a complementary section for 
Section~\ref{ExperimentalResults}. 

\subsection{\bf Loss Function Behavior} 
In Subsection~\ref{Logisticsubsection}, we experimentally studied the Logistic model with different noises. In Figures~\ref{fig:eigens} and \ref{accuracyepochs}, we evaluate the model for different noise in terms of eigenvalue recover and the accuracies. As a complementary information to these figures, 
in Figure~\ref{lossL1normhist}, as the iteration increases, we compare the values of the loss function $R_N$ 
on $(\hat{\M}, \hat{\tau})$, compared with  $(\M^*/s, \tau^*/s)$ which we do not expect to surpass.  Observe that the loss on $(\M^*/s, \tau^*/s)$ is a constant red line at the bottom at around $0.23$. When considering Logistic noise (blue) we reach this loss around 700 iterations, and nearly do when considering Gaussian noise.  For other types of noise, the method does worse; Noisy labeling only achieves a loss value around $0.5$.  

\begin{figure}[h]
    \centering
    \includegraphics[width=0.5\textwidth]{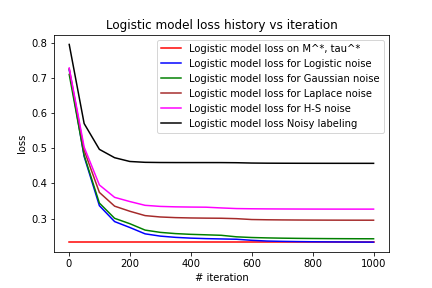}
    \caption{Loss history for Logistic model and different noises.}
    \label{lossL1normhist}
\end{figure}

\subsection{\bf Larger Dimension} 
In Section~\ref{ExperimentalResults}, we dealt with small value for dimension $d$ ($d=10$ for synthetic and $d =9, 24$ for real data).
In this section, following the same approach as in Subsection~\ref{data_generation}, we generate synthetic data with $d=100$ and ${\rm rank}(\M^*) =30$. We also set the level of noise at $20\%$. In Figure~\ref{samplecomplexitylargedim}, we observe that how the sample complexity can be affected by the dimension. When the sample size is less than $30K$, the model overfits, which is completely natural as our model has $d^2+1$ parameters. But for the larger values, we can see that the model starts to neutralize the noise and the non-noisy accuracy approaches to $1$ (blue and magenta curves). We have  $97.59\%, 97.56\%$ non-noisy train and test accuracy for $600K$ sample size.

\begin{figure}[h]
    \centering
    \includegraphics[width=0.5\textwidth]{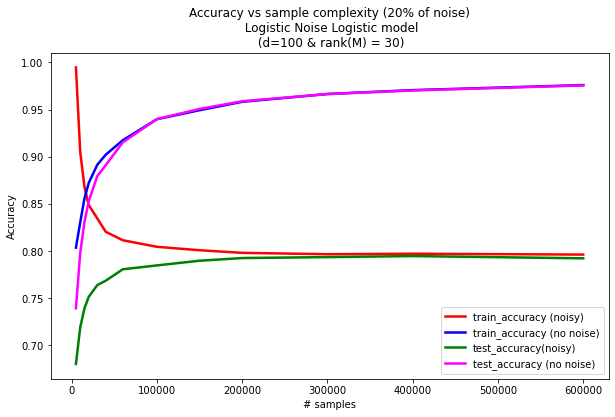}
    \caption{Sample complexity for $d=100$.}
    \label{samplecomplexitylargedim}
\end{figure} 
\subsection{\bf Unbalanced Labeling} \label{Unbalanced_labeling}
In prior experiments, the parameters are set to ensure a balanced data set, which happens often in real world scenarios. Here we experimentally study the robustness of our model against unbalanced data sets generated according to the same data generation schema explained in Section~\ref{data_generation}.   Then we gradually increase $\tau^*$ from $0.1$ to $6.1$ (for $30$ values) and record the performance of our model in predicting the ground truth no-noisy labels for each of the classes of ${\rm Far}$ and ${\rm Close}$ pairs separately. The number of generated pairs is 60,000, of which 20,000 are reserved for testing. 
At the starting state with $\tau^* = 0.1$, we have around $5\%$ of pairs as ${\rm Close}$ (the rests are ${\rm Far}$) while at the end with $\tau^* = 6.1$, around $98\%$ of the pairs are labeled ${\rm Close}$. Because of noise, we cannot expect all the pairs to have the same label for any positive $\tau^*$.  
The results are shown in  Figure~\ref{fig:Logistic_Logistc_Accuracy_unbalanced}. 
We can see that the accuracy of the model when measured on the whole data set (see magenta curve) is always very good, irrespective of the distribution of ${\rm Close}$ and ${\rm Far}$ pairs. The performance of the model on ${\rm Far}$ pairs (green curve) in the worst case drops to $93\%$ for the test set. The model on ${\rm Close}$ pairs (blue curve) drops to $78\%$ at the worst performance. 
When there are ${\rm Close}$ pairs are between about $10\%$ and $98\%$ the algorithm recovers above $90\%$ accuracy on all labeled subsets of the data.  


\begin{figure}[h]     
   \includegraphics[width=0.49\textwidth]{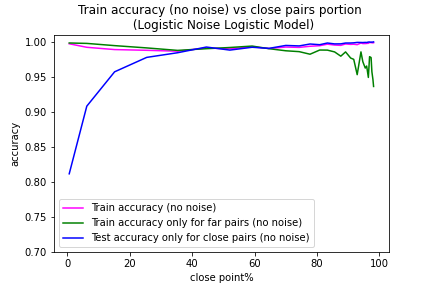}
    \includegraphics[width=0.49\textwidth]{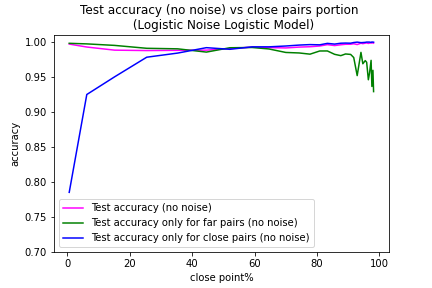}
\caption{\label{fig:Logistic_Logistc_Accuracy_unbalanced} Performance of Logistic noise Logistic model with unbalanced data.}
\end{figure}


\end{document}